\documentclass[twoside,11pt]{article}

\usepackage[preprint]{jstyle}

\usepackage{pgf}
\usepackage{hyperref}
\usepackage{url}
\usepackage{graphicx, subcaption}
\usepackage{amsfonts,amsmath,mathtools}
\usepackage{xcolor}
\usepackage{cleveref}
\usepackage{tikz}
\usepackage{tikz-3dplot}
\usepackage{booktabs}
\usepackage{xspace}
\usepackage[rightcaption]{sidecap}
\usepackage{siunitx}
\sisetup{output-exponent-marker=\ensuremath{\mathrm{e}},group-separator={,}}



\usepackage{amsmath,amsfonts,bm}









\def\eqref#1{equation~\ref{#1}}









\def\1{\bm{1}}


\def\rd{{\textnormal{d}}}









\DeclareMathAlphabet{\mathsfit}{\encodingdefault}{\sfdefault}{m}{sl}
\SetMathAlphabet{\mathsfit}{bold}{\encodingdefault}{\sfdefault}{bx}{n}




\def\sT{{\mathbb{T}}}








\newcommand{\E}{\mathbb{E}}



\newcommand{\hs}{\hat{s}}

\usetikzlibrary{positioning}
\usetikzlibrary{calc}
\usetikzlibrary{decorations}
\usetikzlibrary{decorations.text}
\usetikzlibrary{decorations.pathreplacing}

\definecolor{violet}{RGB}{87,6,140}
\definecolor{ultraviolet}{RGB}{137,0,225}
\definecolor{deepviolet}{RGB}{51,6,98}
\definecolor{lightviolet}{RGB}{171,130,197}

\definecolor{darkgrey}{RGB}{64,64,64}
\definecolor{mediumgrey}{RGB}{184,184,184}
\definecolor{lightgrey}{RGB}{242,242,242}

\definecolor{blue}{RGB}{62,178,212}
\definecolor{teal}{RGB}{0,156,139}

\definecolor{dark-gray}{gray}{0.3}
\definecolor{dkgray}{rgb}{.4,.4,.4}
\definecolor{dkblue}{rgb}{0,0,.5}
\definecolor{dkgreen}{rgb}{0,0.5,.0}
\definecolor{rust}{rgb}{0.5,0.1,0.1}

\newcommand{\xdomain}{\mathcal X}

\newcommand{\bR}{\mathbb{R}}

\newcommand{\bE}{\mathbb{E}}

\newcommand{\id}{\mathrm{id}}

\newcommand{\Qcal}{\mathcal{Q}}

\newcommand{\Loss}{\mathrm{L}}

\newcommand{\LD}{\mathrm L_{\mathrm{DICE}}}
\newcommand{\hLD}{\hat{\mathrm{L}}_{\mathrm{DICE}}}
\newcommand{\LAM}{\Loss_{\mathrm{AM}}}

\newcommand{\cAM}{\Loss_{\mathrm{AM}}^{\mathrm c}}

\newtheorem{theorem}{Theorem}
\newtheorem{lemma}[theorem]{Lemma} 
\newtheorem{proposition}[theorem]{Proposition} 
\newtheorem{remark}[theorem]{Remark}
\newtheorem{corollary}[theorem]{Corollary}

\newtheorem{assumption}{Assumption}

\renewcommand{\eqref}[1]{(\ref{#1})}


\usepackage{lastpage}
\jmlrheading{Volume}{Year}{1-\pageref{LastPage}}{??/??; Revised ??/??}{?/??}{21-0000}{}


\ShortHeadings{DICE}{Blickhan, Berman, Stuart, Peherstorfer}
\firstpageno{1}

\crefname{proposition}{proposition}{propositions}
\Crefname{proposition}{Proposition}{Propositions}
\crefname{theorem}{theorem}{theorems}
\Crefname{theorem}{Theorem}{Theorems}
\crefname{definition}{definition}{definitions}
\Crefname{definition}{Definition}{Definitions}
\crefname{assumption}{assumption}{assumptions}
\Crefname{assumption}{Assumption}{Assumptions}

\begin{document}

\title{DICE: Discrete inverse continuity equation for learning population dynamics}

\author{\name Tobias Blickhan \email tobias.blickhan@nyu.edu \\
       \addr Courant Institute of Mathematical Sciences, New York University
       \AND
       \name Jules Berman \email jmb1174@nyu.edu \\
       \addr Courant Institute of Mathematical Sciences, New York University
       \AND
       \name Andrew Stuart \email astuart@caltech.edu \\
       \addr Computing and Mathematical Sciences, California Institute of Technology
       \AND
       \name Benjamin Peherstorfer \email pehersto@cims.nyu.edu \\
       \addr Courant Institute of Mathematical Sciences, New York University}

\maketitle

\begin{abstract}
We introduce the Discrete Inverse Continuity Equation (DICE) method, a generative modeling approach that learns the evolution of a stochastic process from given sample populations at a finite number of time points. Models learned with DICE capture the typically smooth and well-behaved population dynamics, rather than the dynamics of individual sample trajectories that can exhibit complex or even chaotic behavior. 
The DICE loss function is developed specifically to be invariant, even in discrete time, to spatially constant but time-varying spurious constants that can emerge during training; this invariance increases training stability and robustness.
Generating a trajectory of sample populations with DICE is fast because samples evolve directly in the time interval over which the stochastic process is formulated, in contrast to approaches that condition on time and then require multiple sampling steps per time step.
DICE is stable to train, in situations where other methods for learning population dynamics fail, and DICE generates representative samples with orders of magnitude lower costs than methods that have to condition on time. Numerical experiments on a wide range of problems
from random waves, Vlasov-Poisson instabilities and high-dimensional chaos are included to justify these assertions.
\end{abstract}

\begin{keywords}
    scientific machine learning, generative modeling, chaotic systems, reduced modeling, population dynamics
\end{keywords}

\tableofcontents

\section{Introduction}

\subsection{Learning sample versus population dynamics for generative modeling}
Learning models of time-dependent stochastic processes $X(t)$ to generate more samples is a key challenge in machine learning and in the computational sciences. We distinguish between learning sample and population dynamics: learning the sample dynamics means finding a model that reproduces the sample trajectories of $X(t)$. In contrast, learning population dynamics means finding a model of the dynamics of the law $\rho(t)$ of $X(t)$.

Learning population instead of sample dynamics can be beneficial in multiple respects. For example, consider the Brownian motion $\rd X(t) = \rd W_t$, for which learning a model of sample dynamics would need to re-produce the erratic and non-differentiable trajectory of a random walk. In contrast, the population dynamics are given by $\partial_t \rho = \frac{1}{2} \Delta \rho$ so that samples with the same population dynamics can be generated with $\rd X(t) = - \frac{1}{2} \nabla \log \rho(t, X(t))$, which lead to smoother and well-behaved trajectories. It is important to note that even though the generated samples agree on the population level with the samples from the random walk, they differ starkly on a sample trajectory level. Because population dynamics can be smooth and, in some sense, simple even when sample dynamics are complex, it means that matching populations has to ignore certain information intrinsic to individual sample trajectories.  However, we argue that in many cases this simplification is acceptable and effectively serves as a form of reduced modeling \citep{Rozza2008,doi:10.1137/130932715,annurev:/content/journals/10.1146/annurev-fluid-121021-025220}. In particular, population-level quantities of interest that depend only on the sample population as a whole, rather than on individual trajectories, can be accurately captured by the inferred population dynamics.

Let us motivate learning population dynamics over sample dynamics by two more examples. First, consider an incompressible fluid with constant density in space
and time. Then Lagrangian trajectories (samples) comprising the fluid can have complicated, even chaotic, dynamics; in contrast, on the population level, the dynamics are constant because the fluid is stationary and its density constant in
space and time. This example shows again that population dynamics can be simple (in this case constant) even when sample dynamics are complex.
Second, consider a chaotic system such as the Lorenz system \citep{DeterministicNonperiodicFlow}, subject to small Brownian noise. Individual sample trajectories are very difficult to predict after a short time,
due to the provably chaotic nature of the noise-free problem \citep{tucker1999lorenz,melbourne2005almost,melbourne2008large}. The population of samples, however, evolve according to a hypoelliptic partial differential equation, under some elementary H\"ormander Lie bracket conditions on the relationship between
the noise and the drift vector fields; these conditions are trivially satisfied
for additive noise in all three components, and also hold beyond this setting \citep{mattingly2002ergodicity}. As a consequence the population
density is smooth in space and in time and, due to ergodicity, converges to a unique stationary solution as time grows; these properties make prediction of the density evolution more straightforward than that of trajectories; empirically similar behavior is observed even in the noise-free case \citep{Chaos}. This example suggests that  learning population dynamics opens a path towards modeling chaotic systems from data in a way that can support scientific and engineering applications. 

\subsection{Literature review}
There is a large body of work on learning population dynamics; see \cite{lavenant_towards_2024} for a survey.

\paragraph{Interpolation on Wasserstein space} 
There are several approaches that construct a piecewise geodesic path through the space of probability measures equipped with the Wasser\-stein-2 metric 
to approximate the curve $t \mapsto \rho(t)$. 
\citet{tong_trajectory_2020} write the problem in dynamic form and then approximate the transported vector field with a neural network. Evaluating the transport cost, i.e. the loss function for training the neural network, requires the integration of the dynamics as in continuous normalizing flow 
methods and can be more costly compared to a simulation-free approach as our approach.
There are other methods that encode the geodesics between time marginals by approximating the vector field that generates them using neural networks  
\citep{bunne_supervised_2023}. There are also works that use splines instead of piecewise linear approximations in Wasserstein space \citep{benamou_second-order_2019, chewi_fast_2021}. 
The main difference between these methods and our approach is that we avoid having to solve the non-linear optimal transport problem and instead only solve a sequence of linearized problems. We note though that this means that we critically build on the assumption that marginal information is available sufficiently densely in time (c.f. \Cref{prop:DICE:BoundForAllT}).

There is a line of work that builds on Schrödinger bridge matching so that the inferred trajectories do not have to follow optimal transport paths  
\citep{chen_deep_2023, leonard_survey_2014,hong_trajectory_2025}. 
However, these methods describe global optimization problems connecting all marginals together and as a result can be costly to train.   

Closest to our approach is Action Matching (AM) introduced by \cite{neklyudov_action_2023}. The AM loss is derived in continuous time and subsequentially discretized. This approach can introduce training instabilities due to spurious constants that emerge during training \citep{berman2024parametric}. We will show  that our loss avoids these training instabilities.

\paragraph{Approaches inspired by the JKO scheme} 
Several of the previously mentioned approaches are based on the JKO scheme introduced by \citet{jordan_variational_1998}. Instead of learning the optimal transport maps between two marginals, the JKO formulation assumes that the observed dynamics follow a gradient flow in the Wasserstein space. This means that the vector field generating the dynamics is given by the gradient of a functional, which becomes the target of inference \citep{bunne_proximal_2022}. The JKO scheme describes a bi-level optimization problem and thus can be challenging to solve. Note that  JKO schemes have also been used as surrogate models to solve gradient flow equations when the form of the energy potential is known \citep{lee_deep_2024}. Another approach inspired by JKO has been introduced by \cite{terpin_learning_2024}. The authors compute exactly the optimal coupling between adjacent empirical marginals using a linear program. 
Once the couplings have been computed, finding the vector field that generates them is a regression problem akin to inferring the dynamics from  sample trajectory data. The authors show that this method 
compares favorably in terms of runtime to other JKO schemes. However, the approach still has to compute the couplings between the time marginals, which we avoid.

\paragraph{Conditional generative models} 
Flow-based and diffusion models \citep{song2020generativealg,Onken_Wu,rombach_high-resolution_2022,lipman2022flow, albergo_stochastic_2023,liu_flow_2023} are established methods to learn the transport from one distribution to another. When these methods are conditioned (i.e.\ made dependent) on time $t$, it is possible to generate samples for all time marginals $\rho(t)$, starting from some reference distribution. However, this requires performing a separate inference step from the reference to the target $\rho(t_j)$ for each time step $t_1, \dots, t_K$ of interest. In contrast, our proposed method goes directly from $\rho(t_j)$ to $\rho(t_{j+1})$ in one step, which can speed up the generation of a sample trajectory by orders of magnitude, as our experiments demonstrate. 

\paragraph{Dynamic approaches for Bayesian inference}
There are transport-based approaches for Bayesian inference, which are related to what we refer to as population dynamics  \citep{reich_nonparametric_2013,myers_sequential_2021, ruchi_fast_2021,pmlr-v235-tian24c}. Often these are formulated as sequential transport maps between the prior and posterior distribution. As a result, they also tackle an interpolation problem on Wasserstein space with marginal constraints, but usually do so using kernel methods. Closest to our approach is the work by \citet{maurais_sampling_2024}, in which the curve $t \mapsto \rho(t)$ is approximated with a vector field that is given 
via a system of coupled, time-parametrized Poisson equations. However, \citet{maurais_sampling_2024} have access to the density, which is common in Bayesian inverse problem settings but unavailable in our data-driven setting. 

\paragraph{Optimal transport for reduced modeling} There is an increasing interest \citep{ehrlacher_nonlinear_2020, iollo_mapping_2022, blickhan_registration_2024, doi:10.1137/20M1316998,khamlich_optimal_2025} in leveraging techniques from optimal transport theory to derive reduced models of problems with moving features and advection effects, which are limited by the Kolmogorov barrier \citep{P22AMS}. However, these methods build on knowledge of the governing equations of the underlying phenomenon, which is unavailable in our setting where we have only data in the form of samples. Our approach can be interpreted as a data-driven, non-intrusive model reduction approach that is applicable to particle problems and other problems dominated by transport. 

\subsection{Our approach: Discrete inverse continuity equation (DICE)}
\paragraph{Inverting the continuity equation}
The starting point of our approach is the continuity equation $\partial_t \rho = - \nabla \cdot (\rho u)$ that describes the dynamics of the law $\rho$ over time. The key quantity in the continuity equation is the vector field $u$ that determines the dynamics. Given data in form of independent samples of the process $X(t)$, our goal is learning $u$.  

We develop a loss function for learning $u$ by constructing the time-discrete weak form of the continuity equation and then deriving an objective function that has the time-discrete weak form as Euler-Langrange equations. We show that the proposed DICE loss function admits a unique minimizer and prove a bound on the error of the learned vector field $\hat{u}$ compared to the actual field $u$. 

Learning a vector field $\hat{u}$ with the DICE loss is simulation free so that there is no need to differentiate through time integration schemes, which avoids the corresponding high computationally costs. Evaluating the DICE loss function requires only empirically estimating expectations over the law $\rho(t)$ using the available training data samples. While our approach builds on the geometry induced by the Wasserstein metric on probability space, we do not compute optimal transport distances between sample populations, which also avoids potentially expensive computations. Once we have learned $\hat{u}$, we can use it to generate samples from different initial conditions using standard, off-the-shelf samplers; we also discuss extensions to generalize over different physics parameters. In one inference step, we generate a whole sample trajectory, which is in stark contrast to standard generative modeling approaches that need to condition on time and perform, for every physical time step, a sequence of sampling steps along an artificial inference (or denoising) time.

\paragraph{Maintaining structure in empirical loss for stable training} 
A key aspect of the DICE loss is that it was designed with training stability in mind. Even more, the focus is on maintaining training stability with the empirical loss, which in our case of time-dependent dynamics means after discretization in time, rather than only with the population loss. While there is a time-continuous counterpart, the DICE loss itself is derived directly from the time-discrete weak form of the continuity equation. This means that the proper time discretization of the continuity equation is inherited by the DICE loss. 
We show that because of the judicious time discretization, the DICE loss satisfies a key property, namely the invariance to additions with spatially-constant-but-time-varying functions even in discrete time. The invariance property of the DICE loss is essential for stable training because otherwise spurious constants during the training can lead to instabilities. In particular, naively discretizing in time can lead to discrete (empirical) loss functions that are ill-defined, which means they are not bounded from below when using nonlinear and common parametrizations such as neural networks. In contrast, we show that the DICE loss defines a well-posed discrete optimization problem and leads to a more stable training behavior than other loss functions for learning population dynamics such as the AM loss  \citep{neklyudov_action_2023}.   We believe this demonstrates the importance of developing not only population loss functions formulated over continuous time but to also take the next step and derive proper empirical loss functions in discrete time.  

We show on several examples from physical science applications that DICE can accurately infer the population dynamics from independent samples alone. The well-posed training objective of DICE is a crucial foundation for any further optimization such as hyper-parameter tuning and the choice of parametrizations (i.e.\ neural network architectures). We provide a thorough presentation of the former to allow future work on the latter.

\subsection{Summary of contributions}
\begin{itemize}
\item Introducing the DICE loss function, which is derived from the time-discrete weak form of the continuity equation to infer the vector fields driving population dynamics. The DICE loss avoids the need to compute expensive optimal transport distances or differentiate through time integration schemes.  
\item Identifying key invariances of learning population dynamics and preserving them in the DICE loss in order to guarantee well-posed optimization problems 
and to enhance training stability.   
\item Deriving a meaningful infinite-sample, continuous-time limit of the DICE loss and establishing a connection to elliptic partial differential equations, which allows leveraging concepts from elliptic theory to analyze properties of the DICE loss. 
\item Achieving fast inference of trajectories of sample populations by evolving samples over the time over which the stochastic process is formulated, in contrast to other methods that condition on time and require multiple sampling steps per time step.
\item Demonstrating, in the context of problems in which the population dynamics are relatively simple and predictable even though the sample dynamics are complex and chaotic, that focusing on population dynamics can be beneficial in reduced modeling.  
\end{itemize}

\subsection{Outline}
In \Cref{sec:PreliminariesAndProblemFormulation} we set the stage by discussing that transport phenomena can be described either via sample or population dynamics. We discuss a continuous formulation of learning vector fields from time marginals in \Cref{sec:Cont}, which leads to the time-discrete loss function in \Cref{sec:DICE}. \Cref{sec:DICEInfinite} shows that if we take the limit of having the time marginals available at all times $t$ instead of only at a finite number of discrete time points, we recover the time-continuous formulation of \Cref{sec:Cont}. A comparison to AM is provided in \Cref{sec:CmpAM}, which discusses the training instabilities that are avoided by DICE. Practical aspects of learning vector fields with the DICE loss are discussed in \Cref{sec:DICESamples}. Numerical experiments with various science applications in \Cref{sec:NumExp} provide further evidence that training with the DICE loss is stabler than with other loss functions. Conclusions are drawn in \Cref{sec:Conc}.

\section{Describing transport phenomena}
\label{sec:PreliminariesAndProblemFormulation}
In this section, we recap results on describing transport phenomena. In \Cref{sec:T:Vectorfields}, we discuss the formulation of transport phenomena via the continuity equation, which corresponds to a population or Eulerian perspective. In \Cref{sec:T:GenSamples}, we explore the formulation via flows of vector fields, which corresponds to a sample or Lagrangian perspective. 
    
\subsection{Vector fields of flows}\label{sec:T:Vectorfields}
 Throughout this work, $\mathcal{X}$ denotes a bounded, connected, and open subset of $\mathbb{R}^d$, or the flat torus $\mathbb{T}^d$, and $\mathcal{P}(\mathcal{X})$ denotes the space of probability measures on $\mathcal{X}$ with finite second moment. Consider a stochastic process $X(t) \in \mathcal{X}$ with time $t \geq 0$. The law of $X(t)$ is $\rho(t) \in \mathcal{P}(\mathcal{X})$. When we wish to view $\rho(t)$ as a density over $\mathcal{X}$ we will write $\rho(t, \cdot): \mathcal{X} \to \mathbb{R}$ and denote its value as $\rho(t, x)$ at $x$. 
The map $t \mapsto \rho(t)$ is a curve through $\mathcal{P}(\mathcal X)$. The value of the curve at time $t$ is $\rho(t)$. 
   
We encode the curve $t \mapsto \rho(t)$ through a vector field $u: [0, T] \times \xdomain  \to \bR^d$ in the following sense: we say that the pair $(u, \rho)$ solves the continuity equation in the weak sense if for all smooth and compactly supported test functions $\mathcal C_0^\infty(\mathcal X) \ni \varphi: \mathcal X  
\to \bR$ and for almost every $t \in [0,T]$ the map $t \mapsto \bE_{x\sim\rho(t)}[\varphi(x)]$ is absolutely continuous and
\begin{align}
    \label{eq:Prelim:ContWeakForm}
    \frac{\rd}{\rd t} \bE_{x\sim\rho(t)}[\varphi(x)] = \bE_{x\sim\rho(t)}[u(t, x) \cdot \nabla \varphi(x)]
\end{align} holds. Recall that the functions in $\mathcal{C}^{\infty}_0(\mathcal{X})$ evaluate to zero at the boundary of the domain $\mathcal{X}$. We assume that the probability density is at all times contained in $\mathcal{X}$, hence $u$ is tangential to the boundary of $\mathcal X$ at all times. 
In the periodic setting, i.e., when $\mathcal{X} = \sT^d$ is the flat torus, then $\mathcal{C}^{\infty}_0(\sT^d) = \mathcal{C}^{\infty}(\sT^d)$ and periodic boundary conditions are imposed. 
Following \cite{gigli_second_2011}, we call a pair $(u, \rho)$ solving the continuity equation a transport couple. Additionally, we call $u$ compatible with $\rho$ if $(u, \rho)$ is a transport couple. 

   Every weak solution $(u, \rho)$ of the continuity equation over $[0,T]$ is also a distributional solution between $\rho(0)$ and $\rho(T)$ in the sense that 
    \begin{align}\label{eq:Prelim:ContVarForm}
        \mathbb E_{x \sim \rho(T)} \left[ \varphi(T, x) \right] - \mathbb E_{x \sim \rho(0)} \left[ \varphi(0, x) \right] 
        = \int_0^T \mathbb E_{x \sim \rho(t)} \left[ (\partial_t \varphi + u \cdot \nabla \varphi)(t, x) \right] \rd t
    \end{align}
    holds for all smooth compactly supported functions $\mathcal C_0^\infty( [0,T] \times \mathcal{X}) \ni \varphi: [0,T] \times \mathcal{X} \to \bR$. 
    The converse is also true; see \citet[Lemma 8.1.2]{ambrosio_gradient_2005} and  \citet[Proposition 4.2]{santambrogio_optimal_2015}.

\subsection{Generating samples with vector fields}\label{sec:T:GenSamples}
To keep the following results concrete, we make the following assumption on $\rho$.

    \begin{assumption}[Poincaré inequality]
    \label{assumption:prelim:poincare}
        For all $t\in [0,T]$, $\rho(t)$ admits a density function $\rho(t, \cdot): \mathcal{X} \to \mathbb{R}$ that is absolutely continuous with respect to the Lebesgue measure on $\mathcal X$. Furthermore, the density function satisfies a Poincaré inequality with constant $\lambda(t) > 0$:
        \begin{align}\label{eq:DICE:PoincareInEq}
            \left\| g - \mathbb E_{x \sim \rho(t)}[g] \right\|^2_{L^2(\rho(t))} \leq \frac{1}{\lambda(t)} \left\| \nabla g \right\|^2_{L^2(\rho(t))}
        \end{align}
        for any Lipschitz function $g: \mathcal{X} \to \mathbb{R}$ \cite[Definition 21.17]{villani_optimal_2009}.
\end{assumption}
Because of the absolute continuity of the densities assumed in  \Cref{assumption:prelim:poincare},  a function $x \mapsto g(x)$ that is Lipschitz is also differentiable $\rho(t)$-a.e.\ in $\mathcal{X}$ by Rademacher's theorem. If the domain $\mathcal X$ allows for a Poincaré constant $\lambda_{\mathcal X}$ with respect to the Lebesgue measure, then Poincar\'e inequality \eqref{eq:DICE:PoincareInEq} holds for densities that are bounded below by $\underline{\rho}(t) > 0$, with $\lambda(t) \geq \underline{\rho}(t) \lambda_{\mathcal X}$.

If $\mathcal X = \bR^d$, then measures of the form $\rho(t, x) = e^{-V(t,x)}$ satisfy a Poincaré inequality as long as $V$ satisfies a linear growth condition \citep[Corollary 1.6]{bakry_simple_2008}. A slightly stronger result captures the case of Gaussian measures: When the Hessian of $V(t, \cdot)$ satisfies $D^2_x V(t, \cdot) \succ \alpha \, \mathrm{Id}$, then $\lambda(t) \geq \alpha$. The bibliographical notes in \citet[Chapter 21]{villani_optimal_2009} give an overview over these arguments.
    
If $\rho(t)$ admits a sufficiently regular density function  $\rho: [0, T] \times \mathcal{X} \to \mathbb{R}$, then we can consider the strong form of the continuity equation
    \begin{equation}
    \label{eq:prelim:continuity}
        \partial_t \rho(t, x) + \nabla \cdot \left( \rho(t, x) u(t, x) \right) = 0\,, \quad \forall (t, x) \in [0, T] \times \xdomain\,\,.
    \end{equation}
Furthermore, under assumptions outlined in \citet[Proposition 8.1.8]{ambrosio_gradient_2005}, the flow $\phi_t: \mathcal{X} \to \mathcal{X}$ induced by $u$ and the corresponding ordinary differential equation (ODE),
\begin{align}\label{eq:Prelim:ODE}
        \frac{\rd}{\rd t} \phi_t(a) = u(t, \phi_t(a)), \qquad \phi_0(a) = a \,,
    \end{align}
    defines $\phi_t(a) = \widehat X(t) \sim \rho(t)$ for an initial $a \sim \rho(0)$. In particular, this allows us to use the field $u$ to generate sample trajectories $\widehat X(t)$ that follow the law $\rho(t)$ with
    \begin{equation}\label{eq:Prelim:ODESampling}
    \frac{\rd}{\rd t}\widehat X(t) = u(t, \widehat X(t))\,, \qquad \widehat X(0) \sim \rho(0)\,.
    \end{equation}

\section{Learning vector fields from time marginals}\label{sec:Cont}
In this section, we formulate an optimization problem for inferring advecting vector fields from density time marginals. Critically, we allow only to integrate against time marginals; we do not assume we can evaluate the corresponding densities. In \Cref{sec:TCont:MinEnergy}, we discuss that we will focus on inferring vector fields with minimal kinetic energy and highlight the fact that such vector fields have to be gradient fields. \Cref{sec:ContT:InfinitTransport,sec:ContT:Elliptic} show that inference within the class of  gradient fields allows the optimization problem to be related to infinitesimal optimal transport and a variational formulation of elliptic partial differential equations (PDEs), respectively. In \Cref{sec:ContT:ReWriting}, we re-write the variational formulation of the elliptic problem to bring it into a form that motivates the DICE loss function in the next section. 

\subsection {Minimal energy and tangent fields}\label{sec:TCont:MinEnergy}
Given the time marginals $\rho(t)$ with $t \in [0, T]$, in the sense that we can integrate against them, the task at hand is to invert the continuity equation \eqref{eq:Prelim:ContWeakForm} to find a vector field ${u}$ that is compatible with  $\rho$. We stress that we aim to learn a vector field for one path of marginals; learning vector fields that generalize over different initial laws at time $t = 0$ or even different time marginals can be achieved to some extent via the parameter-dependent formulation that we present in  Section~\ref{sec:DICE:ParamCase}. 

The continuity equation given in \eqref{eq:Prelim:ContWeakForm} (and its strong form given in \eqref{eq:prelim:continuity}) does not uniquely determine a vector field given time marginals. To see this, consider a density function with $\rho(t, x) > 0$ and let $(\rho, u)$ be a transport couple. Then, the pair $(\rho, u + \rho^{-1}{w})$ is also a transport couple for any sufficiently regular vector field $w: \xdomain \rightarrow \bR^d$ that is divergence-free $\nabla \cdot w = 0$.  
Because the continuity equation does not uniquely determine a vector field, we impose the additional condition that the vector field has to have minimal kinetic energy over time, 
\begin{align}\label{eq:Prelim:KinEnergy}
    J(u) = \frac{1}{2} \int_0^T\mathbb{E}_{x \sim \rho(t)}\left[|u(t, x)|^2\right]\mathrm dt\,.
\end{align}
It is shown in \citet[Proposition 8.4.3]{ambrosio_gradient_2005} and \citet[Proposition 1.30]{gigli_second_2011} that the vector field $u$ that minimizes
the kinetic energy $J$ has to be a gradient field: there exists 
a potential field $s: [0,T] \times \mathcal{X} \to \mathbb{R}$ with $u = \nabla s$. The gradient is to be understood as
\begin{align}\label{eq:Prelim:TangentSpace}
        \nabla s(t, \cdot) \in \overline{ \{ \nabla \phi: \phi \in C^\infty(\xdomain) \} }^{L^2(\rho(t))}\,, \quad  t \in [0, T]\,,
    \end{align}
which is the closure of the set of gradients $\{ \nabla \phi: \phi \in C^\infty(\xdomain) \}$ with respect to $L^2(\rho(t))$. Following the work by \cite{otto_geometry_2001}, the space in \eqref{eq:Prelim:TangentSpace} is referred to as the tangent space of $P(\mathcal X)$ at $\rho(t)$ in \citet[Definition 8.4.1]{ambrosio_gradient_2005}. The transport couple $(\rho, \nabla s)$ with a gradient field as vector field is unique, which we denote as $\nabla s^* = u^*$. (Note that it is $\nabla s$ that is unique and not $s$ itself.) 
    
The fact that minimizing the kinetic energy implies gradient structure can also be seen with a formal argument under the assumption that $\rho$ admits a density at all times and that the density is strictly positive. To this end, let $u^*$ be a minimizer of $J$ that also satisfies the continuity equation \eqref{eq:prelim:continuity}. It then holds that 
    \begin{align}
    \label{eq:prelim:gradientstructureargument1}
        \frac{\rd}{\rd \varepsilon} \bigg |_{\varepsilon = 0} J(u^* + \varepsilon \rho^{-1}{w}) = 0
    \end{align}
    for any $w$ with $\nabla \cdot w = 0$. Note that $u^* + \varepsilon \rho^{-1}{w}$ also satisfies the continuity equation \eqref{eq:prelim:continuity} for $\rho$, hence it is a valid competitor in the minimization.
    Expanding \eqref{eq:prelim:gradientstructureargument1} leads to 
    \begin{align}
        \int_0^T \int_{\xdomain}  u^*(t, x) \cdot w(t, x) \, \rd x \, \rd t = 0 \quad \forall w: \nabla \cdot w = 0.
    \end{align}
If we consider a vector field $w$ of the form $w(t, x) = \delta_{\tau}(t) \eta(x)$ with $\tau$ arbitrary and $\eta$ divergence free, this implies that $u^*(\tau, \cdot)$ is $L^2$-orthogonal to divergence free vector fields for all $\tau$. By the Helmholtz decomposition, this implies that $u^*(\tau, \cdot)$ must be a gradient field. 
    
In summary, because the transport couple $(\rho, \nabla s^*)$ is unique, when we seek a vector field $u$ that minimizes the kinetic energy \eqref{eq:Prelim:KinEnergy}, it is sufficient to restrict attention to functions $s: [0, T] \times \mathcal{X}  \to \mathbb{R}$ so that  $u = \nabla s$ satisfies the continuity equation. 

\begin{remark}
    Notice that the potential $s$ depends on time. In general, when $u$ is not a gradient field itself, then even if $u$ is not explicitly dependent on time, $\nabla s$ will be. Likewise, when $u$ depends on time through $\rho(t)$ only (as is the case in mean field models), then this dependence can be subsumed by the time dependence of $s$.
\end{remark}

\subsection{Relation to infinitesimal optimal transport}\label{sec:ContT:InfinitTransport}
For arbitrary but fixed time $t$ and time-step size $\Delta t \in \bR$, the minimum energy compatible vector field $\nabla s^*$ is related to the optimal transport from $\rho(t)$ to $\rho(t + \Delta t)$.
From \citet[Proposition 8.4.6]{ambrosio_gradient_2005}, we recall the limit
\begin{align}
\lim_{\Delta t \rightarrow 0} \frac{W_2 \Bigl(\rho(t  + \Delta t), \bigl(\id + \Delta t \nabla s^*(t, \cdot)\bigr)_\# \rho(t)\Bigr)}{|\Delta t|}  = 0
\end{align}
where $W_2$ denotes the Wasserstein-2 metric,  $\id$ the identity map, and $(\id + \Delta t \nabla s^*(t, \cdot))_\# \rho(t)$ the push forward of $\rho(t)$ under the map $\id + \Delta t \nabla s^*(t, \cdot)$. Let now $T_{\rho(t) \rightarrow \rho(t + \Delta t)}$ be the optimal transport map from $\rho(t)$ to $\rho(t + \Delta t)$, then we obtain  
\begin{align}
\label{eq:prelim:OTTangentConstruction}
    \lim_{\Delta t \rightarrow 0} \frac{T_{\rho(t) \rightarrow \rho(t + \Delta t)} - \id}{\Delta t} = \nabla s^* (t, \cdot),
\end{align}
which motivates why the vector field $\nabla s^*$ is called the tangent vector field to the curve $t \mapsto \rho(t) \in (\mathcal P(\xdomain), W_2)$; see Figure~\ref{fig:tangent}.

\begin{figure}
\centering
   \begin{tikzpicture}




\tikzset{shorten >= 3pt}
\coordinate (O) at (-2, 0);   
\coordinate (A) at (0, 0);
\coordinate (B) at (1.5, 2);
\coordinate (C) at (4, 1);
\coordinate (D) at (6, 3);
\coordinate (N) at (3, -2);  

\draw[thick,->] (O) to[out=-20, in=210] (A)
             to[out=40, in=180] (B)
             to[out=0, in=180] (C);

\filldraw (A) circle (2pt);
\filldraw (B) circle (2pt);
\coordinate (approx) at (1.5, 1.2);


\node[above left] at (A) {$\rho(t)$};
\node[above=0pt] at (B) {$\rho(t+\Delta t)$};
\node[above=1pt] at (C) {$t \mapsto \rho(t)$};
\node[violet, right] at (0.6, 0.2) {$(\mathrm{id} + h \nabla s^*(t, \cdot))_\#\rho(t)$};
\node[teal, right] at (1.55, 1.2) {$\mathcal O(\Delta t^2)$};

\draw[violet, thick,->] (A) -- (approx);
\draw[teal,thick,<->] (approx) to[out=30, in=-30] (B);

\end{tikzpicture}
    \caption{Illustration of the relation between $t \mapsto \rho(t)$ as a curve through Wasserstein space and the minimal energy vector field $\nabla s^*$.}
    \label{fig:tangent}
\end{figure}
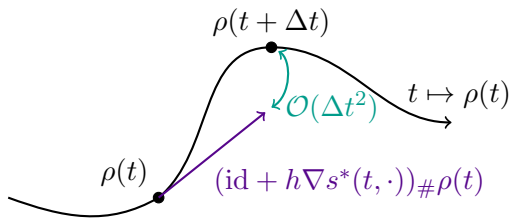

\subsection{An optimization objective via variational formulations of elliptic problems}\label{sec:ContT:Elliptic}
The uniqueness of the pair $(\rho, \nabla s^*)$ established in the previous section means it is sufficient to look for $s$ so that $\nabla s$ satisfies the continuity equation. 
At least formally, we can then plug $\nabla s$ into the strong form \eqref{eq:prelim:continuity} of the continuity equation and obtain the problem
\begin{equation}\label{eq:Cont:EllipticProblem}
-\nabla \cdot (\rho(t, x) \nabla s(t, x)) = \partial_t \rho(t, x)\,.
\end{equation}
If $\partial_t \rho(t) / \rho(t)$ is in $L^{2}(\rho(t))$, the Poincar\'e inequality of Assumption~\ref{assumption:prelim:poincare} holds, and coercivity holds in a $\rho$-weighted sense, then \eqref{eq:Cont:EllipticProblem} is an elliptic PDE. Notice that in the standard, unweighted sense, one would require $\rho(t, x) \geq \underline{\rho} > 0$ to be strictly positive over $[0, T] \times \mathcal{X}$ to obtain coercivity. 
Notice that $\partial_t \rho$ serves as a source term because the unknown in \eqref{eq:Cont:EllipticProblem} is $s$ and not $\rho$. Thus, under the assumptions above that $\rho$ is sufficiently regular, we can interpret finding a vector field $u = \nabla s$ as solving a coupled system of PDE problems \eqref{eq:Cont:EllipticProblem}. Critically, the PDE problems are coupled via the right-hand side $\partial_t \rho$ rather than the unknown $s$. The perspective via PDE problems allows us to leverage the well-known variational formulation of \eqref{eq:Cont:EllipticProblem} to obtain 
\begin{equation}\label{eq:Cont:EllipticVarForm}
\min_s \int_0^T \int_{\mathcal{X}} \frac{1}{2}|\nabla s(t, x)|^2 \rho(t, x) - s(t, x) \partial_t \rho(t, x)\mathrm dx\,\mathrm dt\,,
\end{equation}
which provides an optimization problem for finding an $s$ given $\rho$ and $\partial_t \rho$  \citep{benamou_computational_2000}.

\subsection{Re-writing the time derivative}\label{sec:ContT:ReWriting}

The objective of the variational problem given in \eqref{eq:Cont:EllipticVarForm} depends on the density $\rho$ and its derivative $\partial_t \rho$. 
We will later consider the setting where we have access to samples of the time marginals $\rho(t)$, which can be interpreted as being able to integrate against the density $\rho(t, \cdot)$ over time $t$.  
However, the objective in \eqref{eq:Cont:EllipticVarForm} also depends on the time derivative $\partial_t \rho$ of $\rho$, which is challenging to estimate if one can only integrate against the time marginals $\rho(t)$. 

It is helpful to rewrite the source term $t \mapsto \int_{\mathcal{X}} s(t, x)\partial_t \rho(t, x)\mathrm dx$ in \eqref{eq:Cont:EllipticVarForm} as an expectation with respect to $\rho$. To do so, note that
\begin{multline}\label{eq:Cont:ddtaureform}
\frac{\rd}{\rd t} \mathbb{E}_{x \sim \rho(t)}\left[s(t, x)\right] = \frac{\rd}{\rd \tau} \mathbb{E}_{x \sim \rho(\tau)}\left[s(t, x)\right] \bigg |_{\tau = t} + \frac{\rd}{\rd \tau} \mathbb{E}_{x \sim \rho(t)}\left[s(\tau, x)\right] \bigg |_{\tau = t} \\
= \int_{\mathcal{X}} s(t, x)\partial_t \rho(t, x) \, \mathrm dx + \int_{\mathcal{X}} \rho(t, x) \partial_t s(t, x) \, \mathrm dx
\end{multline}
holds, where $\tau$ is an artificial time. Contained in the identity \eqref{eq:Cont:ddtaureform} is the statement that 
\begin{align}\label{eq:Cont:PartialDiff}
\int_{\mathcal{X}} s(t, x)\partial_t \rho(t, x) \, \mathrm dx = \frac{\rd}{\rd \tau} \mathbb{E}_{x \sim \rho(\tau)}\left[s(t, x)\right] \bigg |_{\tau = t}\,,\qquad t \in (0, T)\,,
\end{align}
which is an expression in which the time marginals $\rho(t)$ enter only via expectations. Using \eqref{eq:Cont:PartialDiff}, we obtain an objective function that only consists of expectations with respect to $\rho(t)$
\begin{equation}\label{eq:Cont:ContLossddtau}
    L(s) = \int_0^T \left( \mathbb E_{x \sim \rho(t)} \left[ \frac{1}{2} |\nabla s(t, x)|^2 \right] - \frac{\rd}{\rd \tau} \mathbb E_{x \sim \rho(\tau)} \left[ s(t, x) \right]\bigg |_{\tau = t} \right) \rd t,
\end{equation}
and which has the same minimizers as the variational formulation \eqref{eq:Cont:EllipticVarForm} over $s$. 
Still, the objective function \eqref{eq:Cont:ContLossddtau} is not a loss function yet because it cannot be directly estimated from samples of the time marginals $\rho(t)$ because of the derivative in the artificial time $\tau$.

\section{Discrete inverse continuity equation (DICE)}
\label{sec:DICE}
In this section, we consider the problem of recovering potential field $s$ that is consistent with samples of the time marginals $\rho(t, \cdot)$ given only at discrete time points $t \in \{t_k\}_{k=0}^K.$ It is a working assumption of our methodology that these points are spaced closely in time, relative to the natural
timescale of the density evolution. \Cref{sec:Prelim:ProblemFormulation} provides a more detailed discussion of the problem setup. In \Cref{sec:DICE:Loss}, we introduce the DICE loss function that has the
potential as a minimizer. The loss is defined through a time-discrete weak form of the continuity equation corresponding to the discrete time points at which time marginals are available. In \Cref{sec:DICE:Properties}, we show that the corresponding optimization problem is well posed and admits a unique minimizer.

\subsection{Learning vector fields of  marginals at discrete time points}
\label{sec:Prelim:ProblemFormulation}
\paragraph{Learning vector fields} We now assume that we have access to $\rho(t_j)$
only at discrete time points $t \in \{t_k\}_{k=0}^K$, where 
access means that we can integrate against $\rho(t_j)$; in particular we
do not assume that we can evaluate the corresponding density function. This requirement is in preparation for later sections when we have available only samples of the time marginals; see Section~\ref{sec:DICESamples}.
We wish to find a vector field $u$ that minimizes the kinetic energy $J(u)$ given in \eqref{eq:Prelim:KinEnergy} and also minimizes a loss function defined by asking
that the continuity equation is compatible with the marginals $\{\rho(t_j)\}_{j = 0}^K$ at $t \in \{t_k\}_{k=0}^K$. This optimization problem is
simplified by recalling that finding a minimal kinetic energy field requires $u = \nabla s$, leading to an optimization problem over $s.$

\paragraph{Piece-wise geodesic interpolants}
One approach to our problems is to connect $\rho(t_{j-1})$ to $\rho(t_{j})$ on
$(t_{j-1} - t_{j})$ with Wasserstein geodesics, for $j = 1, \dots, K$. This requires solving fully nonlinear optimal transport problems between all marginals, which is computationally expensive. Furthermore, the approach does not exploit the
fact that $\Delta t_{\text{max}} := \max_{j = 1, \dots, K} |t_j - t_{j - 1}|$ is
assumed small. Instead, our approach can be interpreted as finding the tangent vector field $\nabla s(t_j, \cdot)$ at the discrete time points $t_j$ for $j = 0, \dots, K$ and approximating the curve $t \mapsto \rho(t)$ via the tangent vector fields $\nabla s(t_0, \cdot), \dots, \nabla s(t_K, \cdot)$ at the discrete time points $t_0, \dots, t_K$; see Figure~\ref{fig:tangent}. 
We will show that the linearization underlying the construction of $\nabla s$ will introduce an error that is also of the order of the maximal time-step size (see \Cref{prop:DICE:BoundForAllT}) and thus our approach does not lead to a worse scaling of the error with the maximal time-step size than connecting Wasserstein geodesics, while solving the simpler linear problem on the tangent space as opposed to the full optimal transport problem.

\subsection{The DICE loss}\label{sec:DICE:Loss}
\paragraph{Function spaces} 
We introduce notation essential for definition of the DICE loss.  We start by defining Hilbert spaces $L^2(\rho(t_j))$
through the $\rho(t_j)-$weighted $L^2$ inner products
    \begin{align}
        \langle s^{1}_j, s^{2}_j\rangle_{L^2(\rho(t_j))} = \int_{\mathcal X} s^{1}_j(x) s^{2}_j(x) \rho(t_j, x) \, \rd x\,, \quad j = 0, \dots, K.
    \end{align}
    We also define the Hilbert spaces
    \begin{align}
        \mathcal S_j = \left\{ s \in L^2(\rho(t_j)): \nabla s \in L^2(\rho(t_j))\right\},\qquad j = 0, \dots, K\,.
    \end{align}
    The spaces $\mathcal{S}_j$ are closely related to Sobolev spaces $H^1(\mathrm dx)$, except that the integration defining $\mathcal{S}_j$ is with respect to the measure $\rho(t_j)$ instead of the Lebesgue measure used to define
    the $H^1(\mathrm dx)$ subspace of $L^2(\mathrm dx)$. Indeed, given upper
    and lower (positive) a.e. bounds on $\rho(t_j, x)$ all the Hilbert spaces 
    $\mathcal S_j$ are equivalent to $H^1(\mathrm dx).$ 
    Likewise the spaces $L^2(\rho(t_j))$ are equivalent to $L^2(\mathrm dx)$.
    The normalized counterpart of $\mathcal{S}_j$ is
    \begin{align}
    \mathcal S_j^0 = \{s \in \mathcal{S}_j \, : \, \mathbb{E}_{x \sim \rho(t_j)}[s(x)] = 0\}\,.
    \end{align}

    The Cartesian product $L^2(\rho(t_0)) \times \dots \times  L^2(\rho(t_K))$ is a Hilbert space, because $L^2(\rho(t_j))$ for $j = 0, \dots, K$ are Hilbert spaces. Likewise, because $\mathcal S_j$ for $j = 0, \dots, K$ are Hilbert spaces, the Cartesian product $\mathcal S_0 \times \dots \times \mathcal S_K$ is a Hilbert space. 
Denote the vector of functions $\bar{s}(x) = [s_0(x), \dots, s_K(x)]$ and define the 
inner product on $L^2(\rho(t_0)) \times \dots \times L^2(\rho(t_K))$ by
    \begin{align}
        \langle \bar{s}^1, \bar{s}^2 \rangle_{L^2(\rho(t_0)\cdots\rho(t_K))} = \sum_{j=0}^K  \int_{\mathcal X} \frac{t_{j+1} - t_{j-1}}{2} s^1_j(x) s^2_j(x) \, \rho(t_j, x) \, \rd x,
    \end{align}
    for $\bar{s}^1, \bar{s}^2 \in L^2(\rho(t_0)) \times \dots \times  L^2(\rho(t_K)).$
  
Furthermore, we note that 
    \begin{align}
        \Vert \bar{s} \Vert_{{L^2(\rho(t_0)\cdots\rho(t_K))}}^2 = \langle \bar{s}, \bar{s} \rangle_{L^2(\rho(t_0)\cdots\rho(t_K))} = \sum_{j=0}^K \frac{t_{j+1} - t_{j-1}}{2} \| s_j(x) \|^2_{L^2(\rho(t_j))}\,,
    \end{align}
defines a norm provided that all $t_0, \dots, t_K$ are distinct; this follows from\
the fact that then
    \begin{align}
        \Vert \bar{s}^1 - \bar{s}^2 \Vert_{L^2(\rho(t_0) \cdots \rho(t_K))}^2 = 0 \Leftrightarrow s_j^1 = s_j^2 \; \rho(t_j)\text{-almost-everywhere}\,, \qquad j = 0, \dots, K\,.
    \end{align}
We also introduce the notation
    \begin{align}
        \Vert \mathrm{D}\bar{s} \Vert_{{L^2(\rho(t_0)\cdots\rho(t_K))}}^2 &:= \sum_{j=0}^K  \int_{\mathcal X} \frac{t_{j+1} - t_{j-1}}{2} | \nabla s_j(x)|^2 \, \rho(t_j, x) \, \rd x,
    \end{align}
    and note that
\begin{align}
    \| \bar s \|_{\mathcal{S}_0 \times \cdots \times \mathcal{S}_K} := \Vert \bar{s} \Vert_{{L^2(\rho(t_0)\cdots\rho(t_K))}}^2+\Vert \mathrm{D}\bar{s} \Vert_{{L^2(\rho(t_0)\cdots\rho(t_K))}}^2
\end{align}
defines a norm, and by polarization an inner product, on $\mathcal S_0 \times \dots \times \mathcal S_K.$

We introduce the set of functions that intersect the Cartesian product $\mathcal{S}_0 \times \cdots \times \mathcal{S}_K$ with extensions over the time interval $[0, T]$ as \begin{align}\label{eq:DICE:SpaceS}
        \mathcal S = \{ s : [0,T] \times \mathcal X  \to \bR: s(t_j, \cdot) \in \mathcal S_j \text{ for } j = 0, \dots, K \}\,,
    \end{align}
    as well as the normalized counterpart $\mathcal{S}^0$ that intersects with $\mathcal{S}_0^0 \times \cdots \times \mathcal{S}_K^0$. Note that we do not make any statement about the behavior of functions $s \in \mathcal S$ outside of the set of time points $\{t_j\}_{j=0}^K$ for now. 

\paragraph{Weak form of continuity equation in discrete time} Let us now turn to the discrete-weak form of the continuity equation. Recall from the paragraph
on ``Learning vector fields'' that we seek a vector field defined as the gradient of potential field  $\hat{s} \in \mathcal{S}$ that satisfies the weak form of the continuity equation \eqref{eq:Prelim:ContWeakForm} at the $K + 1$ discrete time points $t_0 < t_1 < \dots < t_K$ at which the marginals $\{\rho(t_j)\}_{j = 0}^K$ are available. We use finite differences to approximate the time derivative on the left-hand side of \eqref{eq:Prelim:ContWeakForm} and use the convention that $t_{-1} = t_0$ and $t_{K+1} = t_K$, leading to the following equations that we would like to choose 
$\hat{s}$ to satisfy:
\begin{align}
    \label{eq:DICE:DiscreteCompat}
    \frac{\bE_{x\sim\rho(t_{j+1})}[\varphi(x)] - \bE_{x\sim\rho(t_{j-1})}[\varphi(x)]}{t_{j+1} - t_{j-1}} = \bE_{x\sim\rho(t_j)}[\nabla\hat{s}(t_j, x) \cdot \nabla \varphi(x)]\,,\quad j = 0, \dots, K\,,
\end{align}
for all test functions $\varphi \in \mathcal{S}_k, k \in \{j-1, j, j+1\}$. Recall that $\nabla\hat{s}(t_j, \cdot)$ has minimal kinetic energy \eqref{eq:Prelim:KinEnergy} and so it has to be in $L^2(\rho(t_j))$, i.e. in $\hat{s}(t_j, \cdot) \in \mathcal{S}_j$ at all times.  We call $\nabla\hat{s}$ discretely compatible with $\{\rho(t_j)\}_{j=0}^K$  if \eqref{eq:DICE:DiscreteCompat} holds for $j = 0, \dots, K$.
Other time-discretization schemes can be used as well.

\paragraph{The DICE loss} We now show that the system of equations \eqref{eq:DICE:DiscreteCompat} is a system of Euler-Lagrange equations of the following quadratic functional $\LD: \mathcal{S} \to \mathbb{R}$, which we call the DICE loss.
    
    \begin{proposition}[DICE loss]
    \label{prop:DICEEulerLagrangeEq}
    A function $\hat{s}^* \in \mathcal S$ that minimizes 
    \begin{multline}
        \label{eq:Loss_DICE}
        \LD(s) = \sum_{j=1}^{K} \left( \frac{t_j - t_{j-1}}{2} \left( \bE_{x\sim\rho(t_{j})} \left[ \frac{1}{2} |\nabla s(t_j, x)|^2 \right] + \bE_{x\sim\rho(t_{j-1})} \left[ \frac{1}{2} |\nabla s(t_{j - 1}, x)|^2 \right] \right)\right. \\
        \left.- \frac{1}{2} \left( \bE_{x\sim\rho(t_{j})} \left[ s(t_j, x) + s(t_{j-1}, x) \right] - \bE_{x\sim\rho(t_{j-1})} \left[ s(t_j, x) + s(t_{j-1}, x) \right] \right) \right)
    \end{multline}
    solves the system of equations given in \eqref{eq:DICE:DiscreteCompat}.
    \end{proposition}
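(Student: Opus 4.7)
The plan is to derive the Euler–Lagrange equations of $\LD$ by a standard calculus-of-variations argument, since $\LD$ is a quadratic functional on the Hilbert-type space $\mathcal S$, and then to reorganize the resulting sum so that it matches \eqref{eq:DICE:DiscreteCompat}. First I would observe that for any $\varphi$ with the property that $\varphi(t_j, \cdot) \in \mathcal{S}_j$ for $j = 0, \dots, K$, the map $\varepsilon \mapsto \LD(\hat s^* + \varepsilon \varphi)$ is a polynomial of degree two in $\varepsilon$, hence Fréchet differentiability is automatic, and a necessary condition for $\hat s^*$ to be a minimizer is that the first variation vanishes for all such $\varphi$.

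The second step is the direct computation of $\frac{d}{d\varepsilon}\bigl|_{\varepsilon = 0}\LD(\hat s^* + \varepsilon \varphi)$. Each summand in \eqref{eq:Loss_DICE} contributes, from the quadratic gradient terms, $\frac{t_j - t_{j-1}}{2}\bigl(\bE_{x\sim\rho(t_j)}[\nabla \hat s^*(t_j,x)\cdot\nabla\varphi(t_j,x)] + \bE_{x\sim\rho(t_{j-1})}[\nabla \hat s^*(t_{j-1},x)\cdot\nabla\varphi(t_{j-1},x)]\bigr)$, together with the linear contribution $-\tfrac{1}{2}\bigl(\bE_{x\sim\rho(t_j)} - \bE_{x\sim\rho(t_{j-1})}\bigr)[\varphi(t_j,x) + \varphi(t_{j-1},x)]$. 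Setting the sum over $j = 1,\dots, K$ to zero yields the variational equation in the form of a sum over successive pairs of time points.

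The third step, and the main bookkeeping step, is to re-index this sum so that each time point $t_k$ appears once rather than spread over two neighboring pairs. For the quadratic part, the gradient at $t_k$ with $1 \leq k \leq K-1$ is weighted by $\frac{t_k - t_{k-1}}{2} + \frac{t_{k+1} - t_k}{2} = \frac{t_{k+1} - t_{k-1}}{2}$ after combining its two contributions from the pairs $(k, k-1)$ and $(k+1, k)$; the endpoint cases $k = 0$ and $k = K$ pick up only one pair each, and with the conventions $t_{-1} = t_0$ and $t_{K+1} = t_K$ the same formula $\frac{t_{k+1} - t_{k-1}}{2}$ holds. An analogous regrouping for the linear part telescopes the $\varphi(t_k)$-terms into $-\tfrac{1}{2}\bigl(\bE_{x\sim\rho(t_{k+1})} - \bE_{x\sim\rho(t_{k-1})}\bigr)[\varphi(t_k, x)]$, using the same boundary conventions. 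This reduces the variational equation to
\begin{equation*}
\sum_{j=0}^K \left(\tfrac{t_{j+1}-t_{j-1}}{2}\,\bE_{x\sim\rho(t_j)}[\nabla \hat s^*(t_j,x)\cdot \nabla \varphi(t_j,x)] - \tfrac{1}{2}\bigl(\bE_{x\sim\rho(t_{j+1})} - \bE_{x\sim\rho(t_{j-1})}\bigr)[\varphi(t_j,x)]\right) = 0.
\end{equation*}

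Finally, since the values $\varphi(t_j, \cdot)$ at different time points are decoupled in $\mathcal S$, we may choose $\varphi(t_j, \cdot) = \psi$ arbitrary in $\mathcal S_j$ while setting $\varphi(t_k, \cdot) = 0$ for $k \neq j$, which isolates one summand and, after dividing by $\tfrac{t_{j+1}-t_{j-1}}{2}$, recovers \eqref{eq:DICE:DiscreteCompat} at that $j$. (When $t_{j+1} - t_{j-1} = 0$, which by the endpoint conventions can occur only for the degenerate cases $j=0$ or $j=K$ with a single time point, the corresponding equation is trivially satisfied.) The main obstacle is exactly the careful index bookkeeping in the third step; once the two boundary conventions are consistently applied, both the quadratic and the linear parts fall into the desired symmetric-difference form. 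Testing against functions in $\mathcal S_{j-1}$ or $\mathcal S_{j+1}$ follows the same argument, which covers all admissible test functions in \eqref{eq:DICE:DiscreteCompat}.
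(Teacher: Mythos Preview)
Your proposal is correct and follows essentially the same approach as the paper's proof: compute the first variation of the quadratic functional, then isolate each time index by choosing test functions supported at a single $t_j$ to recover \eqref{eq:DICE:DiscreteCompat}. The only cosmetic difference is that you re-index the full sum into the symmetric-difference form before specializing the test function, whereas the paper first sets $\varphi_k \equiv 0$ for $k \neq j^*$ and then collects the (at most two) surviving summands; the bookkeeping and the treatment of the endpoints via $t_{-1}=t_0$, $t_{K+1}=t_K$ are identical in substance.
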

The proof can be found in \Cref{proof:DICEEulerLagrangeEq}. For a minimizer $\hat{s}^*$ of $\mathrm{L}_{\mathrm{DICE}}$, the behavior outside of the time points 
$\{t_j\}_{j=0}^K \subset [0,T]$ is not specified. 
A linear interpolant of the form
\begin{align}
        \label{eq:DICEPieceWiseLinearInTime}
        \hat{s}^*(t, x) \bigg |_{t \in [t_j, t_{j+1}]} = \frac{t_{j+1} - t}{t_{j+1} - t_j} \hat{s}^*(t_j, x) + \frac{t - t_j}{t_{j+1} - t_j} \hat{s}^*(t_{j+1}, x)\,, \quad j = 0, \dots, K-1\,,
\end{align}
is one way to extend the function to the entire time interval of $[0, T]$. Any other curve $t \mapsto \hat{s}(x,t)$ that intersects all $\{ \hat{s}^*(t_j, x) \}_{j=0}^K$ is also discretely compatible with $\{\rho(t_j)\}_{j = 0}^K$ and therefore a minimizer of $\mathrm{L}_{\mathrm{DICE}}$ over $\mathcal{S}$. The value of $\LD$ is independent of the specific form of the curve between the discrete time points.

\subsection{Properties of the DICE loss}\label{sec:DICE:Properties}
Recall that we are optimizing for a gradient field $\hat{s}$ which is defined only up to a constant because $\nabla \hat{s} = \nabla (\hat{s} + f)$ holds for functions $f: [0, T] \to \mathbb{R}$ that are constant over the spatial domain $\mathcal{X}$ but can vary over the time interval $[0, T]$. The following proposition shows that the DICE loss preserves this property in the sense that it is invariant under addition of functions that are constant over the spatial domain $\mathcal{X}$. 
    \begin{proposition}[Invariance of DICE loss to constants]\label{prop:InvariantF} The loss $L_{\text{DICE}}$ is invariant under additions of functions that are constant over the spatial domain: given a gradient field $s$ and
    any function $f: [0,T] \rightarrow \bR$ we have
    \begin{align}
         \mathrm{L}_{\mathrm{DICE}}(s + f) = \mathrm{L}_{\mathrm{DICE}}(s)\,.\end{align}
    \end{proposition}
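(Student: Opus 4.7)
The proof is essentially a direct computation exploiting two structural features of $\LD$: the gradient terms depend on $s$ only through $\nabla s$, and the remaining terms involve only \emph{differences} of expectations of the same integrand under two different probability measures.

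The plan is to substitute $s + f$ into the formula \eqref{eq:Loss_DICE} and treat the two kinds of summands separately. For the kinetic-energy part, since $f(t)$ is constant in $x$, we have $\nabla (s(t_j,\cdot) + f(t_j)) = \nabla s(t_j,\cdot)$ pointwise on $\mathcal{X}$, so
\begin{align}
\mathbb{E}_{x \sim \rho(t_j)}\left[\tfrac{1}{2}|\nabla(s(t_j,x) + f(t_j))|^2\right] = \mathbb{E}_{x \sim \rho(t_j)}\left[\tfrac{1}{2}|\nabla s(t_j,x)|^2\right],
\end{align}
and the analogous identity holds at $t_{j-1}$. Hence the first (gradient) line of $\LD(s+f)$ equals that of $\LD(s)$ termwise.

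For the remaining part, inside the sum over $j$ one has the expression
\begin{align}
\mathbb{E}_{x \sim \rho(t_j)}\!\left[(s+f)(t_j,x) + (s+f)(t_{j-1},x)\right] - \mathbb{E}_{x \sim \rho(t_{j-1})}\!\left[(s+f)(t_j,x) + (s+f)(t_{j-1},x)\right].
\end{align}
The $f$-contribution here is
\begin{align}
\bigl(f(t_j) + f(t_{j-1})\bigr)\bigl(\mathbb{E}_{x \sim \rho(t_j)}[1] - \mathbb{E}_{x \sim \rho(t_{j-1})}[1]\bigr) = \bigl(f(t_j) + f(t_{j-1})\bigr)(1-1) = 0,
\end{align}
because $\rho(t_j)$ and $\rho(t_{j-1})$ are probability measures. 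Thus the second line is also unchanged, and summing over $j=1,\dots,K$ yields $\LD(s+f) = \LD(s)$.

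There is no real obstacle here: the whole point of the carefully chosen time-symmetric discretization in \eqref{eq:Loss_DICE}, which writes the source term as a difference of expectations of $s(t_j,\cdot)+s(t_{j-1},\cdot)$ under two consecutive marginals rather than, say, a sum, is precisely to produce this cancellation after discretization. The only thing to be slightly careful about is to verify that every occurrence of $s$ in \eqref{eq:Loss_DICE} lies inside either a gradient or a difference of expectations, which a quick inspection confirms.
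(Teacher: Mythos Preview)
Your proof is correct and follows essentially the same direct-substitution approach as the paper: both observe that the gradient terms are unchanged since $\nabla f = 0$, and that the additive constant $f(t_j)+f(t_{j-1})$ drops out of the difference $\bigl(\mathbb{E}_{\rho(t_j)}-\mathbb{E}_{\rho(t_{j-1})}\bigr)[\,\cdot\,]$ because both are probability measures. Your write-up is in fact slightly more explicit than the paper's about why the cancellation occurs.
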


\begin{proof}
    By direct substitution, we obtain the identity 
    \begin{align}
        \mathrm{L}_{\mathrm{DICE}}(s + f) &= \mathrm{L}_{\mathrm{DICE}}(s) - \frac{1}{2} \sum_{j=1}^K \left( \bE_{x\sim\rho(t_{j})} - \bE_{x\sim\rho(t_{j-1})} \right) \left[ f(t_{j}) + f(t_{j-1}) \right] \\
        &= \mathrm{L}_{\mathrm{DICE}}(s) - \frac{1}{2} \sum_{j=1}^K \underbrace{\left( f(t_{j}) + f(t_{j-1}) - f(t_{j}) + f(t_{j-1}) \right)}_{= \, 0}.
    \end{align}
    \end{proof}
    
The invariance to constants is a key property of the DICE loss, which is important for stable training; see Section~\ref{sec:CmpAM}. 
We now show existence of a minimizer $\hat{s}^*$ of the DICE loss $\LD$ in $\mathcal{S}$ and discuss in what sense it is unique, building on the Poincar\'e inequality of Assumption~\ref{assumption:prelim:poincare}. The invariance to constants of the DICE loss shown in \Cref{prop:InvariantF} plays a key role in the following proposition because it allows us to ignore constants and then application of the Poincar\'e inequality to show boundedness over $\mathcal{S}$ rather than only over functions with zero mean. We formulate the following proposition, which will lead to a corollary about the uniqueness of DICE minimizers. A proof can be found in \Cref{appdx:ExistenceProof}. 
    \begin{proposition}[The DICE loss admits a unique minimizer]
\label{prop:DICESolutionDiscreteInTime} 
Let \Cref{assumption:prelim:poincare} hold. Let further the density functions satisfy
    \begin{align}
    \label{eq:DICErho_log_deriv_is_L2}
        \frac{\rho(t_{j+1}, \cdot) - \rho(t_{j-1}, \cdot)}{\rho(t_{j}, \cdot)} \in L^2(\rho(t_j, \cdot))\,,
    \end{align}
    for $j = 0, \dots, K$. Consider now the loss $\LD^K: \mathcal{S}_0 \times \dots \mathcal{S}_K \to \mathbb{R}$ defined for $\bar{\hs} = [\hs_0, \dots, \hs_K] \in \mathcal{S}_0 \times \dots \times \mathcal{S}_K$ as
    \begin{multline}
    \label{eq:DICEDiscreteDecoupled}
        \LD^K(\bar{\hs}) = \sum_{j=1}^{K} \bigg( \frac{t_j - t_{j-1}}{2} \left( \bE_{x\sim\rho(t_{j})} \left[ \frac{1}{2} |\nabla \hs_j(x)|^2 \right] + \bE_{x\sim\rho(t_{j-1})} \left[ \frac{1}{2} |\nabla \hs_{j-1}(x)|^2 \right] \right) \\
        - \frac{1}{2} \left( \bE_{x\sim\rho(t_{j})} \left[ \hs_j(x) + \hs_{j-1}(x) \right] - \bE_{x\sim\rho(t_{j-1})} \left[ \hs_j(x) + \hs_{j-1}(x) \right] \right) \bigg)\,.
    \end{multline}
    Then, the problem
    \begin{align}
        \min_{\bar{\hs} \in \mathcal{S}_0 \times \dots \times \mathcal{S}_K} \LD^K(\bar{\hs})
    \end{align}
    admits a solution $\bar{\hs}^* = [\hs_0^*, \dots, \hs_K^*] \in \mathcal{S}_0 \times \dots \times \mathcal{S}_K$. In particular, there is a solution $\bar{\hs}_0^*$ in $\mathcal{S}_0^0 \times \dots \times \mathcal{S}_K^0$ that is unique with respect to  $\|\cdot\|_{\mathcal{S}_0 \times \cdots \times \mathcal{S}_K}$. 
    \end{proposition}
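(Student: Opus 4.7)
The plan is to recast $\LD^K$ as a quadratic minimization
\begin{align}
\LD^K(\bar{\hs}) = \tfrac{1}{2}\|\mathrm{D}\bar{\hs}\|^2_{L^2(\rho(t_0)\cdots\rho(t_K))} - L(\bar{\hs})
\end{align}
for an explicit linear functional $L$, and then apply the Lax--Milgram lemma on the normalized subspace $\mathcal{S}_0^0 \times \cdots \times \mathcal{S}_K^0$, where coercivity is recovered via the Poincar\'e inequality of \Cref{assumption:prelim:poincare} and $L$ is bounded thanks to \eqref{eq:DICErho_log_deriv_is_L2}. The main obstacle is that the quadratic form is degenerate on the full product space $\mathcal{S}_0 \times \cdots \times \mathcal{S}_K$; the remedy is exactly the invariance of the DICE loss stated in \Cref{prop:InvariantF}, which lets me reduce to a coercive problem on the normalized subspace.

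I would first verify the decomposition above by reindexing the sums in \eqref{eq:DICEDiscreteDecoupled}. In the quadratic part, the squared norm $\|\nabla \hs_k\|_{L^2(\rho(t_k))}^2$ picks up contributions from the $j=k$ and $j=k+1$ terms (using the conventions $t_{-1}=t_0$ and $t_{K+1}=t_K$) that combine to a total weight $(t_{k+1}-t_{k-1})/4$, giving $\tfrac{1}{2}\|\mathrm{D}\bar{\hs}\|^2_{L^2(\rho(t_0)\cdots\rho(t_K))}$. The same reindexing applied to the linear part telescopes the differences $\bE_{\rho(t_j)} - \bE_{\rho(t_{j-1})}$ across consecutive $j$ and yields
\begin{align}
L(\bar{\hs}) = \frac{1}{2}\sum_{k=0}^{K}\bE_{x\sim\rho(t_k)}\!\left[\hs_k(x)\,\frac{\rho(t_{k+1},x)-\rho(t_{k-1},x)}{\rho(t_k,x)}\right].
\end{align}

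Next, I would observe that the quadratic form has a $(K+1)$-dimensional kernel consisting of time slices of spatial constants, exactly matching the invariance in \Cref{prop:InvariantF}; the same invariance implies that $L$ vanishes on this kernel. I therefore restrict attention to $\mathcal{S}_0^0 \times \cdots \times \mathcal{S}_K^0$. There the Poincar\'e inequality \eqref{eq:DICE:PoincareInEq} yields $\|\hs_j\|_{L^2(\rho(t_j))} \le \lambda(t_j)^{-1/2}\|\nabla \hs_j\|_{L^2(\rho(t_j))}$, so $\|\mathrm{D}\bar{\hs}\|^2$ is equivalent to the full norm $\|\cdot\|_{\mathcal{S}_0\times\cdots\times\mathcal{S}_K}^2$ and the associated bilinear form is coercive. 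Applying Cauchy--Schwarz in each $L^2(\rho(t_k))$ together with \eqref{eq:DICErho_log_deriv_is_L2} gives
\begin{align}
|L(\bar{\hs})| \le \frac{1}{2}\sum_{k=0}^K \|\hs_k\|_{L^2(\rho(t_k))}\left\|\frac{\rho(t_{k+1})-\rho(t_{k-1})}{\rho(t_k)}\right\|_{L^2(\rho(t_k))},
\end{align}
which combined with Poincar\'e shows that $L$ is a bounded linear functional on the normalized subspace.

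Existence and uniqueness of a minimizer $\bar{\hs}_0^\ast \in \mathcal{S}_0^0 \times \cdots \times \mathcal{S}_K^0$ then follow from Lax--Milgram applied to the symmetric, bounded, coercive bilinear form $\langle \mathrm{D}\cdot,\mathrm{D}\cdot\rangle_{L^2(\rho(t_0)\cdots\rho(t_K))}$ and the bounded linear functional $L$. Existence on the full space $\mathcal{S}_0 \times \cdots \times \mathcal{S}_K$ is then immediate by \Cref{prop:InvariantF}: every $\bar{\hs}^\ast = \bar{\hs}_0^\ast + (c_0,\dots,c_K)$ with $c_j \in \bR$ is again a minimizer, and conversely any minimizer on the full space differs from $\bar{\hs}_0^\ast$ by such a constant shift, which secures the uniqueness claim in $\mathcal{S}_0^0 \times \cdots \times \mathcal{S}_K^0$ with respect to $\|\cdot\|_{\mathcal{S}_0 \times \cdots \times \mathcal{S}_K}$.
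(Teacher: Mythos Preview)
Your proposal is correct and follows essentially the same approach as the paper: both rewrite $\LD^K$ as $\tfrac{1}{2}\|\mathrm{D}\bar{\hs}\|^2 - \langle \bar l,\bar{\hs}\rangle$ with the same $\bar l$, restrict to $\mathcal{S}_0^0\times\cdots\times\mathcal{S}_K^0$ via the constant-invariance of \Cref{prop:InvariantF}, and use the Poincar\'e inequality for coercivity together with Cauchy--Schwarz and \eqref{eq:DICErho_log_deriv_is_L2} for boundedness of the linear part. The only cosmetic difference is that you invoke Lax--Milgram as a black box, whereas the paper unwinds the same argument by hand via a minimizing-sequence/polarization computation for existence and strict convexity for uniqueness.
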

    Note that $\LD^K$ is simply $\LD$ restricted to functions defined over discrete sequences in time, rather than over a time continuum.
    We obtain the following two corollaries for the DICE loss, which is formulated as an optimization problem over functions of space and time.

\begin{corollary}
\label{corr:DICESolutionParametrizedInTime}
Let \Cref{prop:DICESolutionDiscreteInTime} apply. Furthermore, let $\hs^{*} \in \mathcal S$ be a minimizer of $\LD$ and let $\bar{\hs}^* = [\hs_0^*, \dots, \hs_K^*] \in \mathcal{S}_0 \times \dots \times \mathcal{S}_K$ be a minimizer of $\LD^K$. Then, it holds that
\begin{align}\label{eq:DICEDiscSolEqual}
    \|\nabla \hs^*(t_j, \cdot) - \nabla  \hs^*_j \|_{L^2(\rho(t_j))} = 0\,,\qquad j = 0, \dots, K\,.
\end{align}
\end{corollary}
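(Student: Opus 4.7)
The plan is to observe that $\LD$ and $\LD^K$ are essentially the same functional, with $\LD$ viewed as a function of a space-time field and $\LD^K$ viewed as a function of a tuple of spatial fields at the discrete times. Writing out \eqref{eq:Loss_DICE} shows that $\LD(s)$ only depends on $s$ through the restrictions $s(t_j, \cdot)$ for $j = 0, \ldots, K$, so in fact
\begin{align}
\LD(s) = \LD^K\bigl(s(t_0, \cdot), \dots, s(t_K, \cdot)\bigr).
\end{align}
Hence the restriction $[\hs^*(t_0, \cdot), \dots, \hs^*(t_K, \cdot)]$ is a minimizer of $\LD^K$ on $\mathcal{S}_0 \times \dots \times \mathcal{S}_K$, and conversely any minimizer $\bar{\hs}^*$ of $\LD^K$ extends (e.g.\ by the piece-wise linear interpolation in \eqref{eq:DICEPieceWiseLinearInTime}) to a minimizer of $\LD$ in $\mathcal{S}$.

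Next I would invoke the uniqueness assertion of \Cref{prop:DICESolutionDiscreteInTime}. That proposition only gives uniqueness in the mean-zero subspace $\mathcal{S}_0^0 \times \dots \times \mathcal{S}_K^0$, so I first note that \Cref{prop:InvariantF} has an obvious tuple analog: $\LD^K$ is invariant under tuples $(c_0, \dots, c_K)$ of spatial constants, since the linear-in-$s$ piece telescopes to zero in each summand exactly as in the proof of \Cref{prop:InvariantF}. Therefore, given the two minimizers $[\hs^*(t_0, \cdot), \dots, \hs^*(t_K, \cdot)]$ and $\bar{\hs}^*$, I can subtract off the time-slice means of each to obtain mean-zero minimizers $\bar{\hs}_0^{*,1}, \bar{\hs}_0^{*,2} \in \mathcal{S}_0^0 \times \dots \times \mathcal{S}_K^0$, which by \Cref{prop:DICESolutionDiscreteInTime} must coincide in $\|\cdot\|_{\mathcal{S}_0 \times \dots \times \mathcal{S}_K}$.

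Finally, since the two original minimizers differ from these mean-zero representatives only by spatial constants at each $t_j$, their gradients at each $t_j$ agree with the gradients of $\bar{\hs}_0^{*,1} = \bar{\hs}_0^{*,2}$, so
\begin{align}
\|\nabla \hs^*(t_j, \cdot) - \nabla \hs_j^*\|_{L^2(\rho(t_j))} = 0, \qquad j = 0, \dots, K,
\end{align}
which is the claim. The main obstacle is really just the bookkeeping to align the two problems: noticing the equivalence of the two loss functionals under restriction/extension, and then handling the additive-constant gauge freedom by passing to the mean-zero subspaces where \Cref{prop:DICESolutionDiscreteInTime} delivers uniqueness. No additional regularity or Poincaré estimates beyond those already used in \Cref{prop:DICESolutionDiscreteInTime} are required.
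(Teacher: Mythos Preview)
Your proposal is correct and follows essentially the same approach as the paper: both identify $\LD(s) = \LD^K(s(t_0,\cdot),\dots,s(t_K,\cdot))$, deduce that the restriction of $\hs^*$ is a minimizer of $\LD^K$, and then invoke the uniqueness statement of \Cref{prop:DICESolutionDiscreteInTime} to conclude that gradients agree. You are slightly more explicit than the paper about passing to the mean-zero subspace to apply the uniqueness assertion, but this is the same argument.
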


\begin{corollary}
\label{corr:DICELowerBound}
    Let  \Cref{prop:DICESolutionDiscreteInTime} apply. The loss $\LD$ is lower bounded over $\mathcal{S}$. 
\end{corollary}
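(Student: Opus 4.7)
The plan is to reduce the lower-boundedness of $\LD$ on $\mathcal S$ to the existence of a minimizer of $\LD^K$ on the Cartesian product $\mathcal S_0 \times \cdots \times \mathcal S_K$, which is already guaranteed by \Cref{prop:DICESolutionDiscreteInTime}. The key observation is that the functional $\LD$, as written in \eqref{eq:Loss_DICE}, depends on $s \in \mathcal S$ only through the slices $s(t_j,\cdot)$ for $j = 0, \dots, K$; the behavior of $s$ at times $t \notin \{t_j\}_{j=0}^K$ does not enter the sum at all. Thus the evaluation map $s \mapsto [s(t_0,\cdot), \dots, s(t_K,\cdot)]$ gives
\begin{align}
    \LD(s) \;=\; \LD^K\bigl([s(t_0,\cdot), \dots, s(t_K,\cdot)]\bigr),
\end{align}
for every $s \in \mathcal S$, since the definition of $\mathcal S$ in \eqref{eq:DICE:SpaceS} ensures $s(t_j,\cdot) \in \mathcal S_j$ for each $j$.

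Given this identity, the argument is immediate. First I would take any $s \in \mathcal S$ and apply the above equality to express $\LD(s)$ as $\LD^K(\bar{s})$ with $\bar{s} \in \mathcal S_0 \times \cdots \times \mathcal S_K$. Then I would invoke \Cref{prop:DICESolutionDiscreteInTime} to assert that there is a minimizer $\bar{\hs}^* \in \mathcal S_0 \times \cdots \times \mathcal S_K$ attaining the value $\LD^K(\bar{\hs}^*) \in \mathbb R$. Consequently,
\begin{align}
    \LD(s) \;=\; \LD^K(\bar{s}) \;\geq\; \LD^K(\bar{\hs}^*),
\end{align}
for all $s \in \mathcal S$, which establishes the uniform lower bound $\inf_{s \in \mathcal S} \LD(s) \geq \LD^K(\bar{\hs}^*) > -\infty$.

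There is essentially no obstacle here because the hard analytic work has already been done in \Cref{prop:DICESolutionDiscreteInTime}, where the Poincar\'e inequality of \Cref{assumption:prelim:poincare} and the integrability condition \eqref{eq:DICErho_log_deriv_is_L2} are used to ensure that the coercive quadratic functional $\LD^K$ attains its minimum on the normalized product space $\mathcal S_0^0 \times \cdots \times \mathcal S_K^0$. The only minor point to double-check is that $\LD^K$ takes the same value on all tuples that differ by spatially-constant shifts, which is the finite-dimensional-in-time analogue of \Cref{prop:InvariantF} and follows from exactly the same cancellation of boundary terms in the telescoping sum; this justifies using the value of $\LD^K$ on $\mathcal S_0^0 \times \cdots \times \mathcal S_K^0$ as a lower bound on all of $\mathcal S_0 \times \cdots \times \mathcal S_K$, and hence on $\mathcal S$.
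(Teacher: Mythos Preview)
Your proposal is correct and follows essentially the same approach as the paper: reduce $\LD$ to $\LD^K$ via the evaluation map $s \mapsto [s(t_0,\cdot),\dots,s(t_K,\cdot)]$, then invoke \Cref{prop:DICESolutionDiscreteInTime} together with the invariance to spatially-constant shifts. The only cosmetic difference is that the paper cites the explicit lower bound derived inside the proof of \Cref{prop:DICESolutionDiscreteInTime} (equation \eqref{eq:DICE:LowerBoundInProof}) on the normalized space and then extends via \Cref{prop:InvariantF}, whereas you use the existence of a minimizer of $\LD^K$ directly to get a finite infimum; both are equivalent.
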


\section{The DICE loss in the infinite data limit}\label{sec:DICEInfinite}
In this section, we consider the error between the gradient field $\nabla\hat{s}^*$ given by a minimizer of the DICE loss over $\mathcal{S}$ and the true gradient field $\nabla s^*$. In \Cref{sec:Infinite:BoundedDensity}, we make assumptions on the boundedness of the density $\rho(0)$ at initial time $t = 0$ and discuss their implications.  In \Cref{sec:Infinite:DiscTime}, we then bound the error of the DICE minimizer at the discrete time points $\{t_j\}_{j = 0}^K$ at which the time marginals are available. The main result is presented in \Cref{sec:Infinite:InBetweenDiscTime} and establishes that the error between the DICE optimizer and the true gradient field is of first order with respect to the maximum distance between successive time points. We further show in \Cref{sec:Infinite:BoundInference} that the error of samples generated with the flow corresponding to the learned vector field is also first order with respect to the maximum distance between successive time points.  

\subsection{Bounded density functions from flow maps}\label{sec:Infinite:BoundedDensity}

We now make stronger assumptions on the density function so that we obtain constants that only depend on the initial density $\rho(0)$ as well as the vector field $u$ that generates the flow of the samples. Recall that we assume there exists a vector field $u$ such that $\partial_t \rho + \nabla \cdot (u \rho) = 0$.

\begin{assumption}[Upper and lower bounds on $\rho(0)$ for regular flows]
\label{asm:Smooth} 
There exist constants $\overline{\rho}_0 \geq \underline{\rho}_0 > 0$ such that the density function $\rho(0, \cdot): \mathcal{X} \to \mathbb{R}$ at time $t_0 = 0$ is bounded from above and below as 
\begin{equation}
    \underline{\rho}_0 \leq \rho(0, x) \leq \overline{\rho}_0\,,\qquad x \in \mathcal{X}\,.
\end{equation}
The flow map $\phi_t: \mathcal{X} \to \mathcal{X}$ given by the ODE \eqref{eq:Prelim:ODE} corresponding to the vector field $\nabla s^* = u$ is a twice continuously differentiable diffeomorphism for all $t \in [0, T]$ at any point $x \in \mathcal{X}$.
\end{assumption}

Recall that by our assumptions on $\mathcal X$, \Cref{asm:Smooth} implies that all $\rho(t)$ satisfy a Poincaré inequality with constant $\lambda(t) = \lambda_{\mathcal X} \min_{x \in \mathcal X} \rho(t, x)$ or better so that \Cref{assumption:prelim:poincare} is satisfied. \Cref{asm:Smooth} implies that the trajectories of the flow $\phi_t$ do not cross. Thus, given a point $x \in \mathcal{X}$, we can find a point $a \in \mathcal{X}$ such that $x = \phi_t(a)$ by inverting the flow map. Note that our convention for the flow map is such that it starts at time $t = 0$. We denote by $\phi_{t, \tau}: \mathcal{X} \to \mathcal{X}$ the map from time $\tau$ to time $t$,
    \begin{align}
        \phi_{t, \tau}(x) = \phi_t(\phi_{\tau}^{-1}(x)).
    \end{align}
The point $a \in \mathcal{X}$ can be interpreted as a label for the initial position of a particle. The point $\phi_t(a) \in \mathcal{X}$ is the position of the particle at time $t$. 

The regularity of the field $u$ controls the regularity of the flow map: If $u$ is $k$-times continuously differentiable in space $x$ and $\phi_t$ is defined for all $t \in [0,T]$, then the flow map $\phi_t$ is $k$-times continuously differentiable in space at all $t \in [0, T]$ and $t \mapsto D\phi_t(x)$ is $k+1$-times differentiable at all $x \in \mathcal{X}$; see \citet[Lemma 4.1.9]{abraham_manifolds_1988} for a proof. Building on the flow map $\phi_t$, the density $\rho(t, \cdot)$ at time $t$ is
\begin{align}
\rho(t, x) = \rho_0(\phi_t^{-1}(x)) \exp\left( - \int_0^t (\nabla \cdot u)(\tau, \phi_\tau(\phi_t^{-1}(x))) \, d\tau \right)\,,\qquad (t, x) \in [0, T] \times \mathcal{X}
\end{align}
by the method of characteristics. Through this construction, we see that bounds on $\rho(0, \cdot)$ imply bounds on $\rho(t, \cdot)$ over the whole time interval $[0, T]$, as the following lemma shows.

\begin{lemma}
\label{prop:dice:bounds_on_rho}
    Let \Cref{asm:Smooth} hold, then
    \begin{align}
        \rho(t, x) \leq \overline{\rho}_0 \exp\left( \overline{c} t \right), \quad \rho(t, x) \geq \underline{\rho_0} \exp\left( - \underline{c} t \right)\,,\qquad (t, x) \in [0, T] \times \mathcal{X}\,,
    \end{align}
where $\underline{c}$ and $\overline{c}$ are positive constants that depend on the divergence $\nabla \cdot u$ of $u$ and are finite when $u(t, \cdot)$ is continuously differentiable in space for all $t \in [0, T]$.
\end{lemma}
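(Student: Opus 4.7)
The plan is to read the stated bounds directly off the method-of-characteristics representation displayed immediately above the lemma, namely
\[
\rho(t,x) = \rho_0(\phi_t^{-1}(x)) \exp\!\left(-\int_0^t (\nabla\cdot u)(\tau,\phi_\tau(\phi_t^{-1}(x)))\,d\tau\right),
\]
and then control the exponential by an $L^\infty$ bound on $\nabla\cdot u$ over the space-time cylinder $[0,T]\times\mathcal X$.

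Concretely, I would first invoke \Cref{asm:Smooth} to conclude that $u=\nabla s^*$ is continuously differentiable in space uniformly on $[0,T]\times\mathcal X$; combined with the assumed compactness of the closure of $\mathcal X$ (either a bounded open set in $\bR^d$ or the torus $\bT^d$), this gives the two finite constants
\[
\overline c := \sup_{(\tau,y)\in[0,T]\times\mathcal X} \max\bigl(0,-(\nabla\cdot u)(\tau,y)\bigr), \qquad
\underline c := \sup_{(\tau,y)\in[0,T]\times\mathcal X} \max\bigl(0,(\nabla\cdot u)(\tau,y)\bigr),
\]
both of which depend only on $\nabla\cdot u$ as claimed. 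Then, for arbitrary $(t,x)\in[0,T]\times\mathcal X$, the trajectory $\tau\mapsto \phi_\tau(\phi_t^{-1}(x))$ stays in $\mathcal X$ by \Cref{asm:Smooth}, so the integrand in the exponent is pointwise bounded by $\overline c$ from above (after the minus sign) and by $-\underline c$ from below, giving
\[
-\underline c\, t \;\le\; -\int_0^t (\nabla\cdot u)(\tau,\phi_\tau(\phi_t^{-1}(x)))\,d\tau \;\le\; \overline c\, t.
\]

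Combining these exponential bounds with the hypothesis $\underline\rho_0 \le \rho_0(\phi_t^{-1}(x)) \le \overline\rho_0$, which holds because $\phi_t^{-1}(x)\in\mathcal X$ by the diffeomorphism part of \Cref{asm:Smooth}, yields exactly the two displayed inequalities. The only mild subtlety is the justification that the characteristics formula applies globally on $[0,T]$: this is precisely what the twice-continuously-differentiable diffeomorphism assumption on $\phi_t$ provides, so no separate existence/uniqueness argument for the ODE flow is needed. I do not expect any substantive obstacle; the main care points are (i) splitting $\nabla\cdot u$ into its positive and negative parts so that $\overline c,\underline c$ are the correct sign-adjusted suprema, and (ii) recording the dependence of the constants on $\nabla\cdot u$ rather than on $u$ itself, which matches the statement of the lemma.
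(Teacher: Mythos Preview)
Your proposal is correct and essentially identical to the paper's proof: both use the method-of-characteristics formula, split $\nabla\cdot u$ into its positive and negative parts to define $\overline c$ and $\underline c$ as the respective suprema over $[0,T]\times\mathcal X$, and then read off the bounds by combining with $\underline\rho_0\le\rho_0\le\overline\rho_0$. If anything, your write-up is slightly more careful about signs and about invoking the diffeomorphism hypothesis to keep the characteristics inside $\mathcal X$.
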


The bounds on $\rho$ mean that functions in $L^2(\rho(t_j))$ are also in $L^2(\rd x)$ for all $j$. This simplifies the following arguments.
    
\subsection{Error of DICE minimizer at discrete time points}\label{sec:Infinite:DiscTime}
Consider the marginals $\{\rho(t_j)\}_{j = 0}^K$ corresponding to $K+1$ time steps. Let $\hat{s}^* \in \mathcal{S}$ be a function that minimizes the DICE loss \eqref{eq:Loss_DICE}.  
The function $\hat{s}^*$ satisfies the time-discrete weak form of the continuity equation \eqref{eq:DICE:DiscreteCompat} at the $K$ time steps,
\begin{equation}\label{eq:DICE:Smooth:ContEqsHat}
 \int_{\mathcal X} \varphi(x) \hat \delta_{t_j} \rho(t_j, x) \, \rd x = \int_{\mathcal X} \nabla \varphi(x) \cdot \nabla \hat s^*(t_j, x) \rho(t_j, x) \, \rd x \,,\quad  \varphi \in H^1_0(\mathrm dx), j = 0, \dots, K\, ,
\end{equation}
where $H^1_0(\mathrm dx)$ is the set of functions $\varphi \in L^2(\mathrm dx)$ that have bounded derivative $\nabla \varphi \in L^2(\mathrm dx)$ such that $\nabla \varphi(x) \cdot n(x) = 0$ on the boundary of $\mathcal{X}$, where $n(x)$ denotes the unit normal. Notice that we use test functions in $H_0^1(\mathrm dx)$ now because the bounds on $\rho$ in \Cref{asm:Smooth} mean that functions in $L^2(\rho(t_j))$ are in $L^2(\mathrm dx)$ too; see \Cref{prop:dice:bounds_on_rho}.
The operator $\hat{\delta}_{t_j}$ denotes the  finite-difference approximations of the time derivatives 
\begin{equation}\label{eq:DICE:Smooth:DefDeltaRho}
\hat \delta_{t_j} \rho(t_j, \cdot) = \frac{\rho(t_{j+1}, \cdot) - \rho(t_{j-1}, \cdot)}{t_{j+1} - t_{j-1}}\,,\qquad j = 0, \dots, K\,.
\end{equation}
Because of the regularity assumptions on $\rho$ made in \Cref{asm:Smooth}, the density is strictly positive and bounded. This turns the continuity equation into an elliptic problem with homogeneous Neumann boundary conditions; see Section~\ref{sec:Cont}.

\begin{proposition}[Error bound for DICE solution at data time points]\label{prop:DICEErrorBoundAtDataTimepoints} Let \Cref{asm:Smooth} hold and recall that $\nabla s^*$ is the unique gradient field that is compatible with $\rho$. Then, for $j = 0, \dots, K$, and for $\hat{s}^*$ is a DICE minimizer over $\mathcal{S}$,
    \begin{align}\label{eq:DiscTimeBound}
        \| \nabla s^*(t_j, \cdot) - \nabla \hat s^*(t_j, \cdot) \|_{L^2(\rho(t, \cdot))} \leq \lambda_{\mathcal X}^{-1/2} \underline{\rho}_0^{-1} \exp\left( \underline{c} t_j \right) \| (\partial_t - \hat \delta_{t_j}) \rho(t_j, x) \|_{L^2(\rd x)}\,.
    \end{align}
\end{proposition}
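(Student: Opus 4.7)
The strategy is a standard Galerkin-orthogonality energy argument, testing the difference of the continuous and discrete weak forms of the continuity equation against the difference of the two potentials themselves. First I would record the two equations the pair $(\nabla s^*(t_j,\cdot), \nabla \hat{s}^*(t_j,\cdot))$ satisfies: the continuous weak form at time $t_j$ from \eqref{eq:Prelim:ContWeakForm} (with $\partial_t\rho$ on the left-hand side) and the discrete weak form \eqref{eq:DICE:Smooth:ContEqsHat} (with $\hat\delta_{t_j}\rho$ on the left-hand side). Subtracting gives, for every admissible test function $\varphi \in H^1_0(\mathrm dx)$,
\begin{align}
\int_{\mathcal X} \nabla\varphi(x)\cdot\bigl(\nabla s^*(t_j,x) - \nabla \hat s^*(t_j,x)\bigr)\,\rho(t_j,x)\,\rd x
= \int_{\mathcal X} \varphi(x)\,(\partial_t - \hat\delta_{t_j})\rho(t_j,x)\,\rd x.
\end{align}

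Next I would exploit the invariance of both sides under additive spatial constants (which is why the equations only determine $\nabla s^*$, $\nabla \hat s^*$) to normalize the two potentials to have zero mean against $\rho(t_j)$, and then choose the test function $\varphi = s^*(t_j,\cdot) - \hat s^*(t_j,\cdot)$; admissibility in $H^1_0(\mathrm dx)$ follows from the upper and lower bounds on $\rho(t_j)$ provided by \Cref{prop:dice:bounds_on_rho} (which give equivalence of $L^2(\rho(t_j))$ with $L^2(\mathrm dx)$ and of $\mathcal S_j$ with $H^1(\mathrm dx)$), together with the compatibility of both fields with the flow being tangential to $\partial\mathcal X$. This produces the energy identity
\begin{align}
\| \nabla s^*(t_j,\cdot) - \nabla \hat s^*(t_j,\cdot) \|_{L^2(\rho(t_j))}^{2}
= \int_{\mathcal X} \bigl(s^*(t_j,x) - \hat s^*(t_j,x)\bigr)\,(\partial_t - \hat\delta_{t_j})\rho(t_j,x)\,\rd x.
\end{align}

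Then I would estimate the right-hand side in three steps. Cauchy--Schwarz bounds it by $\|s^*(t_j,\cdot)-\hat s^*(t_j,\cdot)\|_{L^2(\mathrm dx)}\,\|(\partial_t-\hat\delta_{t_j})\rho(t_j,\cdot)\|_{L^2(\mathrm dx)}$. To pass from $L^2(\mathrm dx)$ to $L^2(\rho(t_j))$ on the first factor I use the lower bound $\rho(t_j,x)\ge\underline\rho_0\exp(-\underline c\,t_j)$ from \Cref{prop:dice:bounds_on_rho}, giving a factor $\underline\rho_0^{-1/2}\exp(\underline c\,t_j/2)$. Because the potentials were normalized to have zero mean against $\rho(t_j)$, the Poincar\'e inequality \eqref{eq:DICE:PoincareInEq} (with constant $\lambda(t_j)\ge \lambda_{\mathcal X}\underline\rho_0\exp(-\underline c\,t_j)$, as noted after \Cref{assumption:prelim:poincare}) converts the $L^2(\rho(t_j))$ norm of the difference into its gradient norm, contributing another factor $\lambda_{\mathcal X}^{-1/2}\underline\rho_0^{-1/2}\exp(\underline c\,t_j/2)$. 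Multiplying the two constants yields the prefactor $\lambda_{\mathcal X}^{-1/2}\underline\rho_0^{-1}\exp(\underline c\,t_j)$. Dividing through by $\|\nabla s^*(t_j,\cdot)-\nabla \hat s^*(t_j,\cdot)\|_{L^2(\rho(t_j))}$ (handling the trivial zero case separately) gives the asserted bound.

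\textbf{Main obstacle.} The routine calculus is immediate; the delicate point is the bookkeeping of normalizations and function-space compatibility. One has to verify that the normalized difference of potentials is genuinely a legitimate test function in $H^1_0(\mathrm dx)$ (i.e.\ that the Neumann boundary condition embedded in $\mathcal S_j$ transfers, so that integration by parts contributes no boundary term), and one must be careful that applying Poincar\'e in the $\rho(t_j)$-weighted form with a \emph{data-dependent} constant $\lambda(t_j)$ is compatible with the $t_j$-independent $\lambda_{\mathcal X}$ that finally appears in the bound; this is precisely what the density lower bound from \Cref{prop:dice:bounds_on_rho} achieves, and where the exponential prefactor originates.
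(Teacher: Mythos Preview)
Your proposal is correct and follows the same energy argument as the paper: subtract the continuous and discrete weak forms, test against the difference $s^*(t_j,\cdot)-\hat s^*(t_j,\cdot)$, then apply Cauchy--Schwarz, Poincar\'e, and the density lower bound from \Cref{prop:dice:bounds_on_rho}. The only cosmetic difference is the order of the last two steps: the paper applies the \emph{unweighted} domain Poincar\'e inequality directly in $L^2(\rd x)$ (implicitly normalizing to zero Lebesgue mean) and then uses the density lower bound to pass from $\|\nabla e\|_{L^2(\rd x)}$ back to $\|\nabla e\|_{L^2(\rho(t_j))}$, whereas you first pass to $L^2(\rho(t_j))$ and then apply the $\rho(t_j)$-weighted Poincar\'e with $\lambda(t_j)\ge\lambda_{\mathcal X}\underline\rho_0\exp(-\underline c\,t_j)$; both routes produce the same prefactor.
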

The bound \eqref{eq:DiscTimeBound} only holds at the discrete time points $t_0, \dots, t_K$. Bounding the error  between time points $t \in [t_{j-1}, t_j]$ is the topic of the next subsection. 

\subsection{Error of DICE minimizer between discrete time points}\label{sec:Infinite:InBetweenDiscTime}
To bound the error of the gradient field $\nabla\hat{s}^*$ obtained with DICE with respect to the actual gradient field $\nabla s^*$, we make assumptions on the acceleration of the density $\rho$ in time, i.e., its second derivative in time. The reason is that in DICE, we approximate $\partial_t \rho$ by finite differences. The accuracy of this approximation depends on the second derivative of $\rho$ in time.

\begin{proposition}[Error of DICE minimizer at all times]
    \label{prop:DICE:BoundForAllT}
    Let \Cref{asm:Smooth} hold. Let $\hat s$ be the linear-in-time interpolant as defined in \eqref{eq:DICEPieceWiseLinearInTime} of $\hat s^*(t_0, \cdot), \ldots, \hat s^*(t_K, \cdot)$ for a DICE minimizer $\hat{s}^* \in \mathcal{S}$. If $\rho$ is twice continuously differentiable over $t \in (0, T)$ for all $x \in \mathcal X$, then
    \begin{align}\label{eq:PropBoundOverallContTime}
        \| \nabla s^*(t, \cdot) - \nabla \hat s(t, \cdot) \|_{L^2(\rho(t))} \leq C \Delta t_{\mathrm{max}}\,, \qquad t \in [0, T],
    \end{align}
    holds, where the maximal time-step size is $\Delta t_{\mathrm{max}} = \max_{j = 1, \dots, K} \{ t_j - t_{j-1} \}$. The constant $C > 0$ can be upper bounded as
    \begin{equation}
    C \leq 3 \lambda^{-1/2}_{\mathcal X} \, \overline{\rho_0} \exp\left( \overline{c} T \right) \left( C_{\partial_t^2\rho} + \overline{\rho}_0 \exp \left( \overline{c} T \right) \left( \max_{t \in [0,T]} \| \partial_t \rho(t) \|_{L^2(\rd x)} \right ) \max_{t \in [0,T]} \|\partial_t \rho(t)\|_{L^{\infty}(\mathrm dx)} \right) \,,
    \end{equation}
    with $C_{\partial_t^2\rho} = \max_{t \in [0,T]} \| \partial_t^2 \rho(t) \|_{L^2(\rd x)};$
    thus, in particular, $C$ depends on derivatives of $\rho$ and $s^*$ only and is independent of the time-step sizes $t_j - t_{j - 1}$ for $j = 1, \dots, K$. 
    \end{proposition}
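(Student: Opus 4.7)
The plan is to combine Proposition~\ref{prop:DICEErrorBoundAtDataTimepoints}, which controls the error at the data time points $\{t_j\}_{j=0}^K$, with two further sources of error that arise when one passes to arbitrary $t \in [0,T]$: the finite-difference approximation error $(\partial_t - \hat\delta_{t_j})\rho(t_j,\cdot)$ that appears on the right-hand side of \eqref{eq:DiscTimeBound}, and the error incurred by linearly interpolating $\hat s^*(t_j,\cdot)$ and $\hat s^*(t_{j+1},\cdot)$ in time. The two errors are both $O(\Delta t_{\mathrm{max}})$, and the constants stated in the proposition should emerge naturally when we keep track of the density bounds from \Cref{prop:dice:bounds_on_rho}.

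First, I fix $t \in [t_j, t_{j+1}]$, write $\alpha = (t_{j+1} - t)/(t_{j+1} - t_j) \in [0,1]$, and use the definition of $\hat s$ in \eqref{eq:DICEPieceWiseLinearInTime} together with the triangle inequality to obtain
\begin{equation*}
\| \nabla s^*(t,\cdot) - \nabla \hat s(t,\cdot) \|_{L^2(\rho(t))} \leq \sum_{k \in \{j,j+1\}} \big( \| \nabla s^*(t,\cdot) - \nabla s^*(t_k,\cdot) \|_{L^2(\rho(t))} + \| \nabla s^*(t_k,\cdot) - \nabla \hat s^*(t_k,\cdot) \|_{L^2(\rho(t))} \big),
\end{equation*}
with appropriate weights $\alpha$ and $1-\alpha$. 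Swapping the measure $\rho(t)$ for $\rho(t_k)$ in the second summand costs only the multiplicative factor $\overline{\rho}_0\exp(\overline{c}T)$ from \Cref{prop:dice:bounds_on_rho}, after which Proposition~\ref{prop:DICEErrorBoundAtDataTimepoints} applies and reduces the task to bounding $\|(\partial_t - \hat\delta_{t_k})\rho(t_k,\cdot)\|_{L^2(\rd x)}$. A standard centered Taylor expansion of $\rho$ around $t_k$ bounds this finite-difference error by $\Delta t_{\mathrm{max}} \cdot C_{\partial_t^2 \rho}$, contributing the first term inside the parenthesis of the stated constant~$C$.

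For the first summand, the key step is to establish Lipschitz regularity of $t \mapsto \nabla s^*(t,\cdot)$ in $L^2(\rho(t))$. Formally differentiating the elliptic identity $-\nabla \cdot(\rho \nabla s^*) = \partial_t \rho$ in time yields
\begin{equation*}
-\nabla \cdot\bigl(\rho\, \nabla \partial_t s^*\bigr) = \partial_t^2 \rho + \nabla \cdot\bigl(\partial_t \rho\, \nabla s^*\bigr),
\end{equation*}
and I would test this equation against $\partial_t s^* - \bE_{x \sim \rho(t)}[\partial_t s^*(t,x)]$. Using the Poincar\'e inequality from \Cref{assumption:prelim:poincare} (with constant $\lambda_{\mathcal X}\underline{\rho}_0 \exp(-\underline{c}T)$ from \Cref{prop:dice:bounds_on_rho}), the density bounds, and Cauchy--Schwarz on the right-hand side, produces a bound of the form
\begin{equation*}
\| \nabla \partial_t s^*(t,\cdot)\|_{L^2(\rho(t))} \leq C'\bigl( C_{\partial_t^2\rho} + \|\partial_t \rho(t)\|_{L^\infty(\rd x)} \| \nabla s^*(t,\cdot) \|_{L^2(\rho(t))} \bigr).
\end{equation*}
Applying the very same elliptic estimate to $s^*$ itself (so that $\|\nabla s^*(t,\cdot)\|_{L^2(\rho(t))}$ is controlled by $\|\partial_t \rho(t)\|_{L^2(\rd x)}$ up to the density bounds) and then integrating $\nabla \partial_t s^*$ from $t_k$ to $t$ gives $\|\nabla s^*(t,\cdot) - \nabla s^*(t_k,\cdot)\|_{L^2(\rho(t))} \leq \Delta t_{\mathrm{max}}\cdot C''$, with $C''$ exactly of the form appearing in the second summand of the stated bound on $C$.

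The main obstacle is the temporal Lipschitz estimate for $\nabla s^*$: it requires turning the formal $t$-differentiation of the elliptic problem into a rigorous estimate, handling the boundary term arising from integration by parts against $\partial_t\rho\,\nabla s^*$ (which is controlled by the tangentiality of $\nabla s^*$ at $\partial \mathcal X$, or trivially on $\mathbb T^d$), and carefully separating $\|\partial_t \rho\|_{L^\infty}$ from $\|\partial_t \rho\|_{L^2}$ in the estimates so as to reproduce the precise product structure of the constant $C$. Once this Lipschitz bound is in hand, assembling the four pieces and absorbing the density bounds from \Cref{prop:dice:bounds_on_rho} into the constants yields \eqref{eq:PropBoundOverallContTime}.
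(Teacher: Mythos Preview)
Your proposal is a valid route to the first-order bound, but it differs substantially from the paper's proof. You split the error via the triangle inequality into (a) the temporal variation $\|\nabla s^*(t,\cdot)-\nabla s^*(t_k,\cdot)\|$ of the \emph{true} gradient field and (b) the pointwise DICE error at the nodes, invoking Proposition~\ref{prop:DICEErrorBoundAtDataTimepoints} for (b) and a time-differentiated elliptic estimate for (a). The paper instead works \emph{directly} with the error $\nabla\hat s_t-\nabla s^*_t$: it subtracts the continuous and discrete continuity equations, tests the resulting identity against $\hat s_t-s^*_t$, and obtains $\|\nabla\hat s_t-\nabla s^*_t\|_{L^2(\rho_t)}^2$ as a sum of three terms $(i)$--$(iii)$; the interpolation term $(iii)$, which involves $\|\nabla\hat s^*_{j+1}-\nabla\hat s^*_j\|$, is then bounded by repeating the same trick on the \emph{discrete} equations, producing terms $(iv)$--$(vi)$. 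In particular, the paper never differentiates the elliptic problem in $t$ and never needs $\partial_t s^*$ to exist.

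The trade-off is this: your decomposition is conceptually cleaner (it is the standard ``interpolation error plus nodal error'' splitting) and reuses Proposition~\ref{prop:DICEErrorBoundAtDataTimepoints} directly, but it saddles you with justifying the formal $t$-differentiation of $-\nabla\cdot(\rho\nabla s^*)=\partial_t\rho$ and the existence of $\nabla\partial_t s^*\in L^2$---which, as you note, is the main technical obstacle and requires an additional parameter-dependence argument for the elliptic solution. The paper's direct energy estimate sidesteps that regularity question entirely, at the cost of a somewhat longer term-by-term bookkeeping (six terms instead of your two). Both approaches ultimately use the same ingredients (Poincar\'e, the density bounds from Lemma~\ref{prop:dice:bounds_on_rho}, and a Taylor remainder for $\hat\delta_{t_j}\rho$), and both yield constants of the stated structure, though the precise numerical prefactors you would obtain differ slightly from the paper's (for instance, your measure-swap step picks up a ratio $\overline{\rho}_0 e^{\overline c T}/\underline{\rho}_0 e^{-\underline c T}$ rather than the single $\overline{\rho}_0 e^{\overline c T}$ factor you wrote).
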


The proposition leads to a corollary for the error of the DICE minimizer $\hat{s}^*$ that is not necessarily linearly interpolating between the time points $\{t_j\}_{j = 0}^K$. 

\begin{corollary}
\label{corr:DICE:BoundForAllT_NNDiscretization}
Let \Cref{prop:DICE:BoundForAllT} apply and recall that $\hat s^* \in \mathcal{S}$ is a DICE minimizer. If $\hat{s}^*$ is Lipschitz in $t$ with respect to $\|\cdot\|_{L^2(\rho(t))}$ with constant $L^t_{\hat s^*} > 0$, 
    \begin{align}
        \| \nabla \hat s^*(t, \cdot) - \nabla \hat s^*(\tau, \cdot) \|_{L^2(\rho_t)} \leq L^t_{\hat s^*} |t - \tau| \qquad \forall t, \tau \in [0, T]\,,
    \end{align}
then the bound \eqref{eq:PropBoundOverallContTime} remains true for $\hat{s}^*$ and not only the linear interpolant $\hat{s}$, 
\begin{equation}
\label{eq:PropBoundOverallContTime_NNDiscretization}
        \| \nabla s^*(t, \cdot) - \nabla \hat s^*(t, \cdot) \|_{L^2(\rho(t))} \leq C(L^t_{\hat s^*}) \Delta t_{\mathrm{max}}\,, \qquad t \in [0, T] \, ,
\end{equation}
except that now the constant $C(L^t_{\hat{s}^*}) > 0$ depends on the Lipschitz constant $L^t_{\hat{s}^*}$ of $\hat{s}^*$. 
\end{corollary}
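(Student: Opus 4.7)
The plan is to obtain the bound \eqref{eq:PropBoundOverallContTime_NNDiscretization} by inserting the linear-in-time interpolant $\hat s$ (from \Cref{prop:DICE:BoundForAllT}) as an intermediate object and applying the triangle inequality
\begin{align}
    \| \nabla s^*(t, \cdot) - \nabla \hat s^*(t, \cdot) \|_{L^2(\rho(t))} \leq \| \nabla s^*(t, \cdot) - \nabla \hat s(t, \cdot) \|_{L^2(\rho(t))} + \| \nabla \hat s(t, \cdot) - \nabla \hat s^*(t, \cdot) \|_{L^2(\rho(t))}\,,
\end{align}
for each $t \in [0, T]$. The first summand is already controlled by $C \Delta t_{\mathrm{max}}$ thanks to \Cref{prop:DICE:BoundForAllT}, so the only work is to bound the second summand, which measures how far $\hat s^*$ can stray from its own piecewise-linear reconstruction.

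For $t \in [t_{j-1}, t_j]$, set $\theta = (t - t_{j-1})/(t_j - t_{j-1}) \in [0, 1]$. Since $\hat s$ interpolates $\hat s^*$ at the time nodes, the definition \eqref{eq:DICEPieceWiseLinearInTime} gives
\begin{align}
    \nabla \hat s(t, \cdot) - \nabla \hat s^*(t, \cdot) = (1-\theta)\bigl(\nabla \hat s^*(t_{j-1}, \cdot) - \nabla \hat s^*(t, \cdot)\bigr) + \theta\bigl(\nabla \hat s^*(t_j, \cdot) - \nabla \hat s^*(t, \cdot)\bigr)\,.
\end{align}
Applying the triangle inequality in $L^2(\rho(t))$ and then the Lipschitz hypothesis at reference time $t$ yields
\begin{align}
    \| \nabla \hat s(t, \cdot) - \nabla \hat s^*(t, \cdot) \|_{L^2(\rho(t))} \leq L^t_{\hat s^*}\bigl((1-\theta)(t - t_{j-1}) + \theta(t_j - t)\bigr) \leq L^t_{\hat s^*}\, \Delta t_{\mathrm{max}}\,,
\end{align}
because both $(t - t_{j-1})$ and $(t_j - t)$ are bounded above by $t_j - t_{j-1} \leq \Delta t_{\mathrm{max}}$, and the convex combination is thus bounded by $\Delta t_{\mathrm{max}}$.

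Combining the two estimates gives \eqref{eq:PropBoundOverallContTime_NNDiscretization} with explicit constant $C(L^t_{\hat s^*}) = C + L^t_{\hat s^*}$, where $C$ is the constant from \Cref{prop:DICE:BoundForAllT}. I do not foresee a significant obstacle: the argument is a short triangle-inequality reduction that isolates the Lipschitz-in-time gap between $\hat s^*$ and its linear interpolant. The only subtlety is to verify that the Lipschitz bound, which is stated with respect to the $L^2(\rho(t))$ norm at the evaluation time, can indeed be applied to both pairs $(\hat s^*(t_{j-1}, \cdot), \hat s^*(t, \cdot))$ and $(\hat s^*(t_j, \cdot), \hat s^*(t, \cdot))$ in the same norm, which is exactly the form of the assumption given in the corollary.
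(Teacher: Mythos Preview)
Your argument is correct, but it takes a different route than the paper. The paper does not invoke \Cref{prop:DICE:BoundForAllT} as a black box; instead it reopens the proof of that proposition and observes that the only place where linearity of $\hat s$ in $t$ is used is the bound on term $(iii)$ in \eqref{eq:dice:proof_between_tj_4}. That term is then controlled directly by the Lipschitz assumption via $\|\nabla \hat s^*(t,\cdot)-\nabla \hat s^*(t_j,\cdot)\|_{L^2(\rho_t)}\le L^t_{\hat s^*}|t-t_j|$, and the remaining steps go through unchanged. Your approach, by contrast, keeps \Cref{prop:DICE:BoundForAllT} intact and inserts the linear interpolant $\hat s$ as an intermediate comparison object; the triangle-inequality step then isolates $\|\nabla \hat s(t,\cdot)-\nabla \hat s^*(t,\cdot)\|_{L^2(\rho(t))}$, which your convex-combination calculation bounds by $L^t_{\hat s^*}\Delta t_{\mathrm{max}}$ (in fact by $\tfrac12 L^t_{\hat s^*}\Delta t_{\mathrm{max}}$, since $(1-\theta)(t-t_{j-1})+\theta(t_j-t)=2\theta(1-\theta)(t_j-t_{j-1})$). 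Your route is slightly more modular and yields an explicit additive constant $C+L^t_{\hat s^*}$; the paper's route avoids the detour through $\hat s$ and effectively replaces the constants $C_{(iv)},C_{(v)},C_{(vi)}$ by the single Lipschitz constant. Both are valid.
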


Because \Cref{prop:DICE:BoundForAllT} holds with a constant $C$ that can be upper bounded independent of the number of time steps $K$ and the time-step sizes $t_{j} - t_{j - 1}$, we can take the limit so that $\max_j |t_{j} - t_{j - 1}| \to 0$. Let $\hat{s}_{\Delta t_{\text{max}}}^*$ be a DICE minimizer for $\Delta t_{\text{max}} = \max_j |t_j - t_{j - 1}|$. We then  obtain convergence in the sense 
    \begin{align}
        \|\nabla \hat{s}_{\Delta t_{\text{max}}}^*(t, \cdot) - \nabla {s}^*(t, \cdot)\|_{L^2(\rho(t))} \to 0 \text{ for } \Delta t_{\text{max}} \to 0\,, \qquad t \in [0,T].
    \end{align}
This result also motivates taking the limit of $\Delta t_{\text{max}} \to 0$ on the DICE loss $\LD$ itself, which recovers \eqref{eq:Cont:ContLossddtau} as long as the time integral exists.

\subsection{Bound for the inference error}\label{sec:Infinite:BoundInference}
Let \Cref{asm:Smooth} hold and, furthermore, assume that $\hat{s}^*$ is a DICE minimizer. The following proposition bounds the error between the law $\hat{\rho}$ obtained from the continuity equation with the field $\nabla \hat{s}^*$ and the law $\rho$ corresponding to the actual gradient field $\nabla s^*$ in the 2-Wasserstein metric $W_2$. The key step in the proof is a Gr\"onwall-type argument. 

\begin{proposition}[DICE inference error]
\label{prop:bound_on_inference_error_w2}
Let \Cref{asm:Smooth} apply and let $\hat{s}^*$ be a twice continuously differentiable DICE minimizer so that \Cref{prop:DICE:BoundForAllT} holds. Let $\hat{\rho}$ be the density function obtained from $\nabla \hat{s}^*$ via the continuity equation in weak form \eqref{eq:Prelim:ContWeakForm} with $\hat{\rho}(0, \cdot) = \rho(0, \cdot)$. Then, the difference between $\rho(t, \cdot)$ and $\hat{\rho}(t, \cdot)$ in the $W_2$ metric is bounded as
\begin{align}
    W_2(\rho(t, \cdot), \hat \rho(t, \cdot))
    \leq \left( W_2(\rho(0, \cdot), \hat \rho(0, \cdot)) + C(L^t_{\hat s^*}) t \Delta t_{\text{max}}
    \right) \mathrm e^{ \int_0^t L_{\hat s^*}^x(\tau) \, \rd \tau}
\end{align}
where $C(L^t_{\hat s^*})$ denotes the constant in \eqref{eq:PropBoundOverallContTime_NNDiscretization} in \Cref{corr:DICE:BoundForAllT_NNDiscretization} and the constant $L_{\hat s^*}^x(t)$ is  
\begin{align}
L_{\hat s^*}^{x}(t) =  \sup_{x \in \mathcal X} \Vert D^2_x \hat s^*(t, x) \Vert_{\mathrm{op}}\,,
\end{align}
with $D^2_x \hat s^*(t, x)$ denoting the Hessian of $\hat{s}^*$ with respect to $x$. 
    \end{proposition}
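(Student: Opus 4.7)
}
The plan is to control the Wasserstein distance between $\rho(t,\cdot)$ and $\hat\rho(t,\cdot)$ by an $L^2$ coupling of the associated characteristic flows and then close a Gr\"onwall-type inequality. Since \Cref{asm:Smooth} guarantees that $\nabla s^*$ generates a smooth flow, and the hypothesis that $\hat{s}^*$ is twice continuously differentiable guarantees that $\nabla\hat{s}^*$ likewise generates a well-defined flow, we can introduce two stochastic processes $X(t)$ and $\widehat X(t)$ solving
\begin{align}
\frac{\rd}{\rd t}X(t) = \nabla s^*(t, X(t))\,, \qquad \frac{\rd}{\rd t}\widehat X(t) = \nabla \hat s^*(t, \widehat X(t))\,,
\end{align}
with $X(t)\sim \rho(t,\cdot)$ and $\widehat X(t)\sim \hat\rho(t,\cdot)$. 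I would take $(X(0), \widehat X(0))$ to be the optimal $W_2$ coupling of $\rho(0,\cdot)$ and $\hat\rho(0,\cdot)$, so that $\mathbb{E}[|X(0)-\widehat X(0)|^2] = W_2(\rho(0,\cdot),\hat\rho(0,\cdot))^2$. Because $W_2(\rho(t,\cdot),\hat\rho(t,\cdot))^2 \le \mathbb{E}[|X(t)-\widehat X(t)|^2]$ for any coupling, it will suffice to bound the right-hand side.

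The next step is to estimate the time derivative of $e(t) := \|X(t) - \widehat X(t)\|_{L^2(\mathbb{P})}$. Differentiating and applying Cauchy--Schwarz gives
\begin{align}
\frac{\rd}{\rd t} e(t) \,\le\, \bigl\|\nabla s^*(t,X(t)) - \nabla \hat s^*(t, \widehat X(t))\bigr\|_{L^2(\mathbb{P})}\,,
\end{align}
and I would split the right-hand side via the triangle inequality as
\begin{align}
\bigl\|\nabla s^*(t,X(t)) - \nabla \hat s^*(t, X(t))\bigr\|_{L^2(\mathbb{P})} + \bigl\|\nabla \hat s^*(t, X(t)) - \nabla \hat s^*(t, \widehat X(t))\bigr\|_{L^2(\mathbb{P})}\,.
\end{align}
Because $X(t)\sim \rho(t,\cdot)$, the first term is exactly $\|\nabla s^*(t,\cdot) - \nabla \hat s^*(t,\cdot)\|_{L^2(\rho(t))}$, which by \Cref{corr:DICE:BoundForAllT_NNDiscretization} is bounded by $C(L^t_{\hat s^*})\Delta t_{\text{max}}$. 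The second term is controlled by the spatial Lipschitz constant of $\nabla \hat s^*(t,\cdot)$; since $\hat s^*$ is twice continuously differentiable, $|\nabla \hat s^*(t,x) - \nabla \hat s^*(t,y)| \le L_{\hat s^*}^x(t)\,|x-y|$ with $L_{\hat s^*}^x(t) = \sup_x \|D_x^2 \hat s^*(t,x)\|_{\mathrm{op}}$, so this term is at most $L_{\hat s^*}^x(t)\,e(t)$.

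Combining these pieces yields the differential inequality
\begin{align}
\frac{\rd}{\rd t} e(t) \,\le\, C(L^t_{\hat s^*})\,\Delta t_{\text{max}} + L_{\hat s^*}^x(t)\,e(t)\,.
\end{align}
A direct application of the (integral form of the) Gr\"onwall inequality gives
\begin{align}
e(t) \,\le\, \Bigl(e(0) + C(L^t_{\hat s^*})\,t\,\Delta t_{\text{max}}\Bigr)\exp\!\Bigl(\int_0^t L_{\hat s^*}^x(\tau)\,\rd\tau\Bigr)\,.
\end{align}
Using $e(0) = W_2(\rho(0,\cdot),\hat\rho(0,\cdot))$ from the optimal coupling, and $W_2(\rho(t,\cdot),\hat\rho(t,\cdot))\le e(t)$, yields the claimed bound. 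The main technical care is in justifying the differentiation under the expectation and the splitting: this requires that both flows be well-defined and that $\nabla \hat s^*$ be Lipschitz in $x$, both of which follow from \Cref{asm:Smooth} and the twice-differentiability assumption on $\hat s^*$; beyond that, everything is a clean Gr\"onwall argument.
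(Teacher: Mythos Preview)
Your argument is correct and reaches exactly the same differential inequality and Gr\"onwall bound as the paper, but by a different route. The paper does not fix a coupling at time $0$ and evolve it; instead it invokes the Villani formula for the time derivative of $\tfrac12 W_2(\rho(t),\hat\rho(t))^2$ (\citet[Theorem~8.13]{villani_topics_2016}), which expresses this derivative as integrals of the velocity fields against $\mathrm{id}-T_{\rho(t)\to\hat\rho(t)}$ and $\mathrm{id}-T_{\hat\rho(t)\to\rho(t)}$. After a change of variables and Cauchy--Schwarz, the paper splits $\nabla s^*(t,\cdot)-\nabla\hat s^*(t,\cdot)\circ T_{\rho(t)\to\hat\rho(t)}$ into the same two pieces you use, controlling the second via a Taylor expansion of $\nabla\hat s^*$ along $T_{\rho(t)\to\hat\rho(t)}-\mathrm{id}$, which produces the factor $L_{\hat s^*}^x(t)\,W_2(\rho(t),\hat\rho(t))$. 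From there the Gr\"onwall step is identical.

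Your Lagrangian coupling argument is more elementary: it avoids the differentiability theorem for $W_2^2$ and needs only that any coupling upper-bounds $W_2$, plus ODE flow regularity already assumed. The paper's approach tracks $W_2$ itself rather than an upper bound $e(t)\ge W_2$, so it is in principle tighter; but since both routes feed the same $\|\nabla s^*-\nabla\hat s^*\|_{L^2(\rho(t))}$ bound from \Cref{corr:DICE:BoundForAllT_NNDiscretization} into the same Gr\"onwall inequality, the final estimate is identical. A minor point: your derivation of $\tfrac{\rd}{\rd t}e(t)$ implicitly divides by $e(t)$, which needs the usual care at points where $e(t)=0$; this is standard and harmless here (especially since the proposition assumes $\hat\rho(0)=\rho(0)$, so one can take $X(0)=\widehat X(0)$).
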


    There are two ways to control $L_{\hat s^*}^x(t)$. First, in practice, $\hat s^*$ will be represented in parametrized form such as a neural network, which can be chosen to limit the norm of $D^2 \hat s^*$. Second, bounds on the difference $D^2 s^*(t_j, \cdot) - D^2 \hat s^*(t_j, \cdot)$ in H\"older norms can be derived when $\rho$ and the domain are regular enough (by tools from Schauder theory, c.f. \citet[Chapter 6]{gilbarg_elliptic_2001}). Estimates at time $t_j$ can then be extended to $t \in [t_j, t_{j+1}]$ as we did for $\nabla s^*(t, \cdot) - \nabla \hat s^*(t, \cdot)$ in \Cref{prop:DICE:BoundForAllT}. We leave a thorough investigation of this for future work.

\section{Comparing DICE to Action Matching}\label{sec:CmpAM}
We compare training with the DICE loss to the AM loss. We first describe the AM loss in \Cref{sec:AM:AMLoss}. We then show in \Cref{sec:AM:ResidualTerm} that the DICE loss avoids residual terms that can emerge in the time-discrete AM loss and destabilize the training by growing arbitrarily large. A concrete example where AM is unstable is introduced in \Cref{sec:blowup_example}.

\subsection{The AM loss function}\label{sec:AM:AMLoss}
The AM loss is derived by \cite{neklyudov_action_2023} with partial integration in time from the dynamical transport problem introduced by \cite{benamou_computational_2000}, and reads
    \begin{align}
        \label{eq:OV_Loss}
        \cAM(\hat s) 
        &= \int_0^T \mathbb{E}_{x \sim \rho(t)} \left[ \frac{1}{2} |\nabla \hat s(t, x)|^2 + \partial_t \hat s(t, x) \right] \rd t - \mathbb{E}_{x \sim \rho(t)} \left[ \hat s(t, x) \right] \bigg|_{t=0}^T.
    \end{align}
In the time-continuous formulation, the AM loss controls the continuous counterpart of the error that is controlled by the minimizers of the DICE loss (\Cref{corr:DICE:BoundForAllT_NNDiscretization}), 
\begin{equation}\label{eq:AM:Error}
   \int_0^T \|\nabla s^*(t, x) - \nabla \hat s(t, x)\|_{L^2(\rho(t))}\mathrm dt = \cAM(\hat s) + C(\nabla s^*)\,,
   \end{equation}
   where $C(\nabla s^*)$ is a constant that only depends on the actual gradient field $\nabla s^*$ but is independent of $\hat s$ \citep{neklyudov_action_2023}. 
   
   The time-discrete AM loss  is
    \begin{align}
        \label{eq:AM_Loss}
        \LAM(\hat s) = \sum_{j=0}^{K} w_j \, \E_{x \sim \rho(t_j)} \left[ \left( \partial_t \hat s + \frac{1}{2} | \nabla \hat s |^2 \right) (t_j, x) \right] - \E_{x \sim \rho(t)} \left[ \hat s(t, x) \right] \bigg |_{t = 0}^{T}.
    \end{align}
    Here the integral in time is numerically computed with a quadrature rule given by $K+1$ nodes $\{(t_j, w_j) \}_{j = 0}^{K}$; this may be, for example, a Monte Carlo estimator as in \cite{neklyudov_action_2023}, or it may be a deterministic rule as in \cite{berman2024parametric}.  The time-discrete AM loss controls the error \eqref{eq:AM:Error} then up to the quadrature error $|\cAM(\hat s) - \LAM(\hat s)|$.

\subsection{The DICE loss avoids unbounded residual terms that can emerge in the empirical AM loss during training}\label{sec:AM:ResidualTerm}
Recall that our problem statement from \Cref{sec:Prelim:ProblemFormulation} determines a field $\hat s$ only up to a function $f: [0,T] \to \mathbb{R}$ because $\nabla (\hat s + f) = \nabla \hat s$ for any $f$ that is constant in space $\mathcal{X}$ but can vary with time $t$. 
Notice that, for such $f$, the time-continuous AM loss satisfies $\cAM(\hat s + f) = \cAM(\hat s)$.
The DICE loss, which is time-discrete by definition, is also invariant in that $\LD(\hat s + f) = \LD(\hat s)$ for such $f$; see \Cref{prop:InvariantF}.
However, the time-discrete AM loss $\LAM$ can violate this invariance.
    \begin{proposition}\label{prop:AM:BadConstant}
        The time-discrete AM Loss is not necessarily invariant with respect to addition of a spatially constant but time varying function $f: [0, T] \to \bR$. It holds that
        \begin{align}
        \label{eq:AM:LossInvariance}
            \LAM(\hat s + f) = \LAM(\hat s) + R(f)\,,
        \end{align}
        where
        \begin{align}\label{eq:AM:ResTerm}
            R(f) = \sum_{j=0}^{K} w_j \, \partial_t f(t_j) - f(T) + f(0)\,.
        \end{align}
    \end{proposition}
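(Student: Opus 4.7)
The plan is a direct substitution: plug $\hat s + f$ into the definition of $\LAM$ and track which terms change and which are preserved, then collect the residual and verify it matches $R(f)$.

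First I would examine each piece of $\LAM$ separately. Since $f$ is constant in $x$, we have $\nabla(\hat s + f) = \nabla \hat s$, so the gradient-squared term $\tfrac12|\nabla \hat s|^2$ is unchanged. The time-derivative term picks up an extra $\partial_t f(t_j)$, and because $f$ does not depend on $x$, the expectation is trivial: $\E_{x \sim \rho(t_j)}[\partial_t f(t_j)] = \partial_t f(t_j)$. This contributes $\sum_{j=0}^K w_j \partial_t f(t_j)$ to the residual. For the boundary term, $\E_{x \sim \rho(t)}[(\hat s + f)(t, x)]\big|_{t=0}^T = \E_{x \sim \rho(t)}[\hat s(t, x)]\big|_{t=0}^T + f(T) - f(0)$, and the minus sign in front yields a contribution $-f(T) + f(0)$ to the residual.

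Adding these two contributions gives exactly the claimed expression
\begin{equation}
R(f) = \sum_{j=0}^{K} w_j \, \partial_t f(t_j) - f(T) + f(0),
\end{equation}
and hence $\LAM(\hat s + f) = \LAM(\hat s) + R(f)$, establishing \eqref{eq:AM:LossInvariance}.

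There is no real obstacle here; the statement is a bookkeeping identity. The only subtlety worth flagging is that in the time-continuous loss $\cAM$, the analogous residual would be $\int_0^T \partial_t f(t)\,\rd t - f(T) + f(0) = 0$ by the fundamental theorem of calculus, which is precisely why invariance holds in continuous time. The time-discrete AM loss replaces this integral by a quadrature rule $\sum_j w_j \partial_t f(t_j)$, and this discrete sum generally fails to equal $f(T) - f(0)$ exactly, so $R(f)$ need not vanish; this is the mechanism by which spurious spatially-constant drifts can accumulate during training under $\LAM$, in contrast to the exact invariance of $\LD$ established in \Cref{prop:InvariantF}.
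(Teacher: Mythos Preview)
Your proof is correct and follows essentially the same direct-substitution approach as the paper: both note that $\nabla f = 0$ leaves the gradient term unchanged, that the $\partial_t$ term contributes $\sum_j w_j\,\partial_t f(t_j)$, and that the boundary term contributes $-f(T)+f(0)$. Your additional remark explaining why the continuous loss $\cAM$ is invariant (the fundamental theorem of calculus forces the analogous residual to vanish) is a helpful clarification that the paper's proof does not include inline.
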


    \begin{proof}
    We substitute $\hat s + f$ into \eqref{eq:AM_Loss}: 
    \begin{multline}
        \LAM(\hat s + f) = \sum_{j=0}^{K} w_j \, \E_{x \sim \rho(t_j)} \left[ \left( \partial_t s + \partial_t f + \frac{1}{2} | \nabla \hat s + \nabla f|^2 \right) (t_j, x) \right] \\ 
            - \E_{x \sim \rho(t)} \left[ (\hat s + f)(t, x) \right] \bigg |_{t = 0}^{T}\,.
    \end{multline}
    Because $\nabla f = 0$, we obtain 
    \begin{align}
        \LAM(\hat s + f) = \LAM(\hat s) + \sum_{j=0}^{K} w_j \, \partial_t f(t_j) - f(T) + f(0)\,,
    \end{align}
    as claimed.
    \end{proof}

Because of the residual term $R(f)$, the quadrature error in the time-discrete AM loss can be made arbitrarily large, which means that $|\cAM(\hat s) - \LAM(\hat s)|$ gets large and thus minimizing the time-discrete AM loss $\LAM(\hat s)$ does not reflect the error \eqref{eq:AM:Error} well anymore. When $f$ is a function with sharp gradients, the second, spurious term of the discrete AM loss can dominate the first term in \eqref{eq:AM:LossInvariance}. This can lead to a negative feedback loop that destabilizes the training with the AM loss; as shown 
in the next section. The residual term $R(f)$ given in \eqref{eq:AM:ResTerm} depends on the quadrature rule used for the time integral.  \cite{berman2024parametric} investigate higher-order quadrature rules in time to keep the residual term small; however, it does not vanish. 

We illustrate this in \Cref{fig:t_tau_dep_of_s}. We view the function $t \mapsto \bE_{x \sim \rho(t)}[s(t, x) + f(t)]$ in terms of the map $(\tau,t) \mapsto \bE_{x \sim \rho(\tau)}[s(t, x) + f(t)]$ evaluated at point $\tau=t$ for a potentially rough $f$. Then, the function $\tau \mapsto \bE_{x \sim \rho(\tau)}[s(t, x) + f(t)]$ for fixed $t$ is a regular function, controlled by the evolution of samples. In contrast, $t \mapsto \bE_{x \sim \rho(\tau)}[s(t, x) + f(t)]$, for fixed $\tau$, can be an arbitrarily rough function when the value of the constant $f(t)$ is not controlled. The DICE loss only depends on map $\tau \mapsto \bE_{x \sim \rho(\tau)}[s(t, x) + f(t)]$ (see \Cref{eq:Cont:ContLossddtau}). In contrast, the AM loss, through the $\partial_t s$ term therein, depends on $t \mapsto \bE_{x \sim \rho(\tau)}[s(t, x) + f(t)]$.

\begin{figure}[t]
    \centering
    \tdplotsetmaincoords{70}{110} 
    \begin{tikzpicture}[tdplot_main_coords, scale=2]
    
    \draw[thick,->] (0,0,0) -- (2,0,0) node[anchor=north east]{$t$};
    \draw[thick,->] (0,0,0) -- (0,2,0) node[anchor=north west]{$\tau$};
    \draw[thick,->] (0,0,0) -- (0,0,2) node[anchor=south]{$\bE_{x \sim \rho(\tau)}[s(t, x) + f(t)]$};
    
    \draw[black, thick, domain=0:2, samples=100, variable=\y]
        plot ({0}, {\y}, {1 - 0.5*sin(deg(2*pi*\y))*cos(deg(pi*\y/3))});

    \draw[black, thick, domain=0:2, samples=100, variable=\y]
        plot ({\y}, {0}, {1 + 0.5*\y*cos(deg(pi*\y/4))*sin(deg(3*pi*\y))*cos(deg(7*pi*\y))});
        

    \node at (1.0, -1.5, 0.8) {$t \mapsto \bE_{x \sim \rho(\tau)}[s(t, x) + f(t)]$};
    \node at (0.75, 2.9, 1.1) {$\tau \mapsto \bE_{x \sim \rho(\tau)}[s(t, x) + f(t)]$};
    
    \end{tikzpicture}
    \caption{The DICE loss approximates the typically smoother derivative $\frac{\rd}{\rd \tau}\bE_{x \sim \rho(\tau)}[s(t, x) +f(t)]$ of the map $\tau \mapsto \bE_{x \sim \rho(\tau)}[s(t, x) + f(t)]$, which keeps $f(t)$ fixed. In contrast, the AM relies on the function $t \mapsto \bE_{x \sim \rho(\tau)}[s(t, x) + f(t)]$ with varying $f$ because of the term $\bE_{x \sim \rho(\tau)}[\partial_t (s(t, x) + f(t))]$ in the AM loss.}
    \label{fig:t_tau_dep_of_s}
\end{figure}
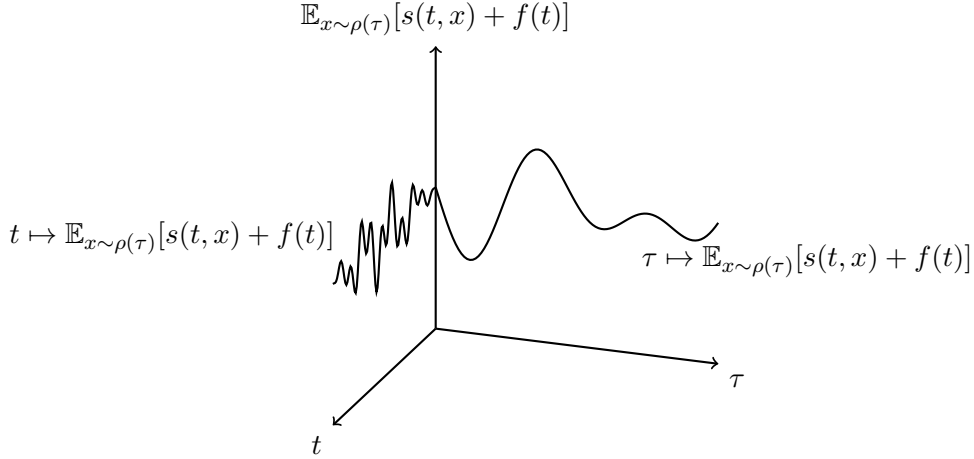

\subsection{The DICE loss avoids unboundedness of the AM loss}
\label{sec:blowup_example}
Consider a transport couple $(\rho, \nabla s)$ that solves the continuity equation \eqref{eq:Prelim:ContWeakForm} over $[0, T]$.  Consider now the case that we have data at two time steps only, namely at time $t_0 = 0$ and $t_1 = T$. A discretely compatible couple $(\rho, \hat{s})$ with $\{\rho(0), \rho(T)\}$ is given by the linear interpolant
\begin{align}
\hat{s}(t, x) = \frac{T - t}{T} s(0, x) + \frac{t}{T} s(T, x)\,,
\end{align}
which is a minimizer of the DICE loss.
In this special case and assuming that the time derivative of $\hat{s}$ can be exactly computed, the linear interpolant $\hat{s}$ is also a minimizer of the time-discrete AM loss $\LAM$ with trapezoidal quadrature rule because 
\begin{align}
    \LAM(\hat s) &= \frac{T}{2} \left( \bE_{x \sim \rho(0)}\left[ \frac{1}{2} |\nabla \hat s(0, x)|^2 + \partial_t \hat s(0, x) \right] + \bE_{x \sim \rho(T)} \left[ \frac{1}{2} |\nabla \hat s(T, x)|^2 + \partial_t \hat s(T, x) \right] \right) \nonumber \\ &\quad - \bE_{x \sim \rho(t)}[\hat s(t, x)] \bigg|_{t=0}^T \\
    &= \frac{T}{2} \left( \bE_{x \sim \rho(0)}\left[ \frac{1}{2} |\nabla \hat s(0, x)|^2 \right] + \bE_{x \sim \rho(T)} \left[ \frac{1}{2} |\nabla \hat s(T, x)|^2 \right] \right) \nonumber \\ &\quad + \frac{1}{2} \left( \bE_{x \sim \rho(0)} + \bE_{x \sim \rho(T)} \right) \left[ \hat s(T, x) - \hat s(0, x) \right] - \bE_{x \sim \rho(T)}[\hat s(T, x)] + \bE_{x \sim \rho(0)}[\hat s(0, x)] \\
    &= \frac{T}{2} \left( \bE_{x \sim \rho(0)}\left[ \frac{1}{2} |\nabla \hat s(0, x)|^2 \right] + \bE_{x \sim \rho(T)} \left[ \frac{1}{2} |\nabla \hat s(T, x)|^2 \right] \right) \nonumber \\ &\quad - \frac{1}{2} \left( \bE_{x \sim \rho(T)} - \bE_{x \sim \rho(0)} \right) \left[ \hat s(T, x) + \hat s(0, x) \right] \\
    &= \LD(\hat s).
\end{align}

Let us now consider a specific parametrization of $\hat{s}$ that depends on the parameter $\theta$ as 
\begin{align}
\hat{s}_{\theta}(x, t) = \hat{s}(x, t) + f_{\theta}(t)\,,
\end{align}
where the second term is the time-dependent function
\begin{align}
f_{\theta}(t) = - \frac{\theta t^2}{T^2} \left(1 - \frac{t}{T}\right)\,.
\end{align}
The function $f_{\theta}$ leads to a residual term $R(f_{\theta})$ in the time-discrete AM loss; see \Cref{sec:AM:ResidualTerm}.
If we now substitute $\hat{s}_{\theta}$ into the time-discrete AM loss, we obtain
\begin{align}
\LAM(\hat{s}_{\theta}) = \LD(\hat{s}_{\theta}) + \frac{1}{2} \left( \bE_{x \sim \rho(0)}\left[  (\partial_t f_{\theta})(0) \right] + \bE_{x \sim \rho(T)} \left[ (\partial_t f_{\theta})(T) \right] \right)
\end{align}
which can be written as
\begin{equation}\label{eq:AM:GradientIssue1}
\LAM(\hat{s}_{\theta}) = \LD(\hat{s}_{\theta}) - \frac{\theta}{2T}.
\end{equation}
Notice that the DICE loss appears in \eqref{eq:AM:GradientIssue1}. Let us now take the gradient of the DICE loss and set it to zero: $\nabla_{\theta} \LD(\hat{s}_{\theta}) = 0.$ 
At this point, however,
\begin{align}
\nabla_{\theta} \LAM(\hat{s}_{\theta}) = - \frac{1}{2T}\,,
\end{align}
causing a gradient descent method to drive $\theta$ away from neighborhood of the desired optimum. Note that adding a regularization term of the form $\left( \mathbb E_{x\sim\rho(t)}\left[ s(t, x) \right] \right)^2$ to the definition of $\LAM$ to control the un-determined constant close to zero $\mathbb E_{x\sim\rho(t)}\left[ s(x,t) \right] \approx 0 \; \forall t$ does not avoid divergence either. The argument in fact remains unchanged if $\{ \hat{s}(t_j, x) \}_j$ satisfy $\mathbb E_{x\sim\rho(t_j)}\left[ \hat s(t_j, x) \right] = 0$ for all discrete time points $t_0, t_1, \dots, t_K$ at which data are available.

\section{Training with the DICE loss on samples}\label{sec:DICESamples}
So far we have assumed that the time marginal densities are available to us
for the discrete time points $t_0, t_1, \dots, t_K$.
We now consider the case where we only have samples of the time marginals at these
points. We introduce the fully empirical DICE loss in \Cref{sec:Training:AllDiscrete} and emphasize that no access to the density values and not even sample trajectories are required. 
We furthermore describe how the DICE loss can be extended to allow for modulation with respect to physical parameters in \Cref{sec:DICE:ParamCase} and describe an entropic variant in \Cref{sec:Training:Entropy}.

\subsection{Learning vector fields from samples of marginals at discrete time points with the DICE loss}\label{sec:Training:AllDiscrete}
We now consider the situation that we have available only samples from the time marginals $\{\rho(t_j)\}_{j = 0}^K$ at the discrete time points $\{t_j\}_{j = 0}^K$. 
We have a data set \begin{align}\label{eq:Prelim:Dataset}
        \mathcal{D} = \{ X_i(t_j) \quad \,|\, \quad i = 1, \dots, N_j \,,\quad j = 0, \dots, K\} \subseteq \mathcal{X}\,,
     \end{align}
which contains $N_j$ samples $X_1(t_j), \dots, X_{N_j}(t_j)$ from $\rho(t_j)$ for $j = 0, \dots, K$.
The number of samples $N_j$ can vary for different time points $t_j$. 
Furthermore, we do not assume the availability of trajectory data: there is no pairing of data points at different times; in particular $X_i(t_j)$ and $X_i(t_{j + 1})$ are \emph{not} viewed as coming from a single trajectory $X_i: [0,T] \to  \mathcal{X}.$

The fully empirical DICE loss is obtained by replacing all expectation values in \eqref{eq:Loss_DICE} by Monte Carlo estimators, for $j = 0, \dots, K$, 
using the data \eqref{eq:Prelim:Dataset}, to obtain
\begin{equation}\label{eq:PlainMonteCarlo}
\hat{\mathbb{E}}_{x \sim \rho(t_j)} [g(x)] = \frac{1}{N_j}\sum_{i = 1}^{N_j} g(X_i(t_j))\,,
\end{equation}
where $g$ is either $\nabla s(t_j, \cdot)$ or $s(t_j, \cdot).$ 
We explicitly define the fully discrete DICE loss for completeness: 
\begin{multline}
        \label{eq:Loss_DICE_fulldisc}
        \hLD(s) = \sum_{j=1}^{K} \bigg( \frac{t_j - t_{j-1}}{2} \bigg( \hat \bE_{x\sim\rho(t_{j})} \left[ \frac{1}{2} |\nabla s(t_j, x)|^2 \right]
        + \hat \bE_{x\sim\rho(t_{j-1})} \left[ \frac{1}{2} |\nabla s(t_{j - 1}, x)|^2 \right] \bigg) \\
        - \frac{1}{2} \left( \hat \bE_{x\sim\rho(t_{j})} \left[ s(t_j, x) + s(x, t_{j-1}) \right] - \hat  \bE_{x\sim\rho(t_{j-1})} \left[ s(t_j, x) + s(t_{j-1}, x) \right] \right) \bigg)\,.
\end{multline}
Replacing the expectations with empirical estimators in $\LD$ to obtain $\hLD$ introduces additionally errors that depend on the number of samples.

\subsection{Generalization of the DICE loss to parametrized processes}\label{sec:DICE:ParamCase}
Let us now consider processes $X(t; \mu)$ that additionally to time $t$ depend a parameter $\mu \in \Qcal \subseteq \mathbb{R}^{d_q}$. Correspondingly, the law $\rho(t; \mu)$ and the vector field $u: \mathcal{X} \times [0, T] \times \Qcal \to \mathbb{R}$ depend on $\mu$ as well.
We consider the case where we have a data set $\mathcal{D}$ that contains realizations of samples over time and parameters,
    \begin{align}
    \mathcal{D} = \{X_i(t_j; \mu_l)\quad \,|\, j = 0, \dots, K\,, \quad i = 1, \dots, N_j\,,\quad \quad l = 0, \dots, M\}\,,
    \end{align}
    with $M \in \mathbb{N}$ training parameters $\mu_1, \dots, \mu_M \in \mathcal{Q}$. 
To learn a parameterized potential field $\hat{s}: \mathcal{X} \times [0, T] \times \Qcal \to \mathbb{R}$ from $\mathcal{D}$, we extend the DICE loss to the parametric case as
    \begin{align}
    \LD^{\mu}(\hat{s}) = \sum_{l = 1}^M \LD(\hat{s}(\cdot, \cdot; \mu_l))\,.
    \end{align}

\subsection{The DICE loss with an entropy term}\label{sec:Training:Entropy}
An entropy term can be added to play the role of a regularization term to the DICE loss
    \begin{multline}
        \label{eq:Loss_DICE_entropic}
        \hLD^\varepsilon(s) = \sum_{j=1}^{K} \bigg( \frac{t_j - t_{j-1}}{2} \bigg( \bE_{x\sim\rho(t_{j})} \left[ \frac{1}{2} \bigl |\nabla s(t_j, x)\bigl |^2 + \frac{\varepsilon^2}{2} \Delta s(t_j, x) \right] \\
        + \bE_{x\sim\rho(t_{j-1})} \left[ \frac{1}{2} \bigl |\nabla s(t_{j - 1}, x)\bigl |^2 + \frac{\varepsilon^2}{2} \Delta s(t_{j-1}, x) \right] \bigg) \\
        - \frac{1}{2} \left( \bE_{x\sim\rho(t_{j})} \left[ s(t_j, x) + s(x, t_{j-1}) \right] - \bE_{x\sim\rho(t_{j-1})} \left[ s(t_j, x) + s(t_{j-1}, x) \right] \right) \bigg)\,,
    \end{multline}
    where $\epsilon > 0$ serves as a regularization parameter. Similar regularizers have been used for other loss functions by \cite{neklyudov_action_2023} and \cite{berman2024parametric} for AM. 
The corresponding inference equation then changes from the continuity equation to the Fokker-Planck equation $\partial_t \rho + \nabla \cdot (\rho \nabla s) = \frac{1}{2}\varepsilon^2 \Delta \rho$. When $\hs^{*,\varepsilon}$ is a minimizer of $\hLD^\varepsilon$, then, formally, $\partial_t \rho + \nabla \cdot (\rho \nabla \hs^{*,\varepsilon}) = \frac{1}{2 }\varepsilon^2 \Delta \rho$ and therefore new samples can be generated from $\rho$ by solving the stochastic differential equation
\begin{align}
        \rd \widehat X(t) = \nabla \hs^{*,\varepsilon}(t, \widehat X(t)) \rd t + \varepsilon \rd W_t, \quad X(0) \sim \rho(0),
\end{align}
where $\rd W_t$ denotes a Wiener process; instead of the ODE given in \eqref{eq:Prelim:ODESampling}. 
The optimal value of $\varepsilon$ is a hyper-parameter of the scheme that needs to be tuned. 

\section{Numerical experiments}\label{sec:NumExp}
We now apply DICE to a variety of examples. We consider a toy example with a constant potential in \Cref{ssec:DvsAM} to demonstrate the improved training stability of DICE over AM. In \Cref{sec:NumExp:KnownPotential}, we consider a problem with a known potential to show how the error of the DICE potential behaves. We then apply DICE to propagating random waves (\Cref{sec:NumExp:RandomWaves}), Vlasov-Poisson instabilities (\Cref{sec:NumExp:Vlasov}), and a nine-dimensional setting of the chaotic Lorenz '96 model (\Cref{sec:NumExp:Rayleigh}) to demonstrate that DICE is widely applicable and enables learning models of population dynamics that can generate high-quality sample population with low inference costs.

In all of the resulting numerical experiments, we train using the Adam optimizer with a learning rate of $2 \times 10^{-3}$ with a cosine learning rate scheduler. Unless otherwise noted, the batch size is 256 samples over 256 time points and we use a $128$ width multi-layer perceptron of six layers with a swish activation function; see Appendix~\ref{appdx:DetailsNumExp} for further details.

The code used can be found at \texttt{https://github.com/ToBlick/DICE}.

\sisetup{detect-weight=true}
\begin{table}[t]
  \centering
  \resizebox{0.8\linewidth}{!}{
  \begin{tabular}{l | ll | ll | ll | ll}
  \toprule
  example: & \multicolumn{2}{c|}{\textbf{two-stream}} & \multicolumn{2}{c|}{\textbf{bump-on-tail}}  & \multicolumn{2}{c|}{\textbf{strong Landau}} & \multicolumn{2}{c}{\textbf{9D chaos}} \\
  \midrule
    metric:     & e.e.  & r.t.\,[s]  & e.e.  & r.t.\,[s] & e.e.  & r.t.\,[s] & sinkhorn  & r.t.\,[s] \\
  \midrule
    CFM   
    & 1.44  &  139 
    & 5.52  &  141 
    & 1.96 & 238
    & 0.259  & 36\\  
    NCSM & 0.245 & 1142 
    & 0.626 & 1133 
    & 3.58 & 6753
    & 0.869 & 1109 \\ 
    AM  & 0.275 & 6  
    & 0.892 & 6 
    & NaN & -
    & 80.1 & 7 \\ 
    HOAM  & 0.078 & 6 
    & 0.427 & 6
    & 0.784 & 8
    & 0.214 & 7 \\ 
    DICE (ours) & 0.070 & 6 
    & 0.283 & 6
    & 0.735 & 8
    & 0.200 & 7 \\ 
    
    \bottomrule
  \end{tabular}
  }
  \caption{The DICE loss is more stable to train than the AM loss in these examples. Additionally, generating sample populations with DICE (inference) can be orders of magnitude faster than with diffusion- and flow-based modeling  approaches (NCSM, CFM) that have to condition on time to generate population trajectories.}
  \label{tbl:comparisons}
\end{table}

\begin{figure}[t]
    \centering 
    \begin{subfigure}{0.45\textwidth} \includegraphics[width=\linewidth]{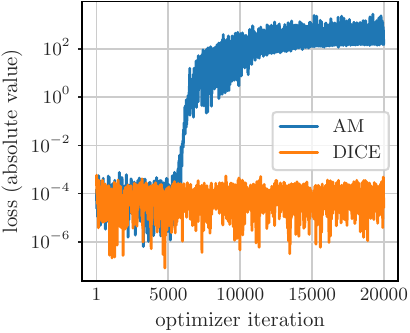}
    \caption{absolute loss over optimization}
    \end{subfigure}%
    \hfill
    \begin{subfigure}{0.55\textwidth}
    \includegraphics[width=1.10\linewidth]{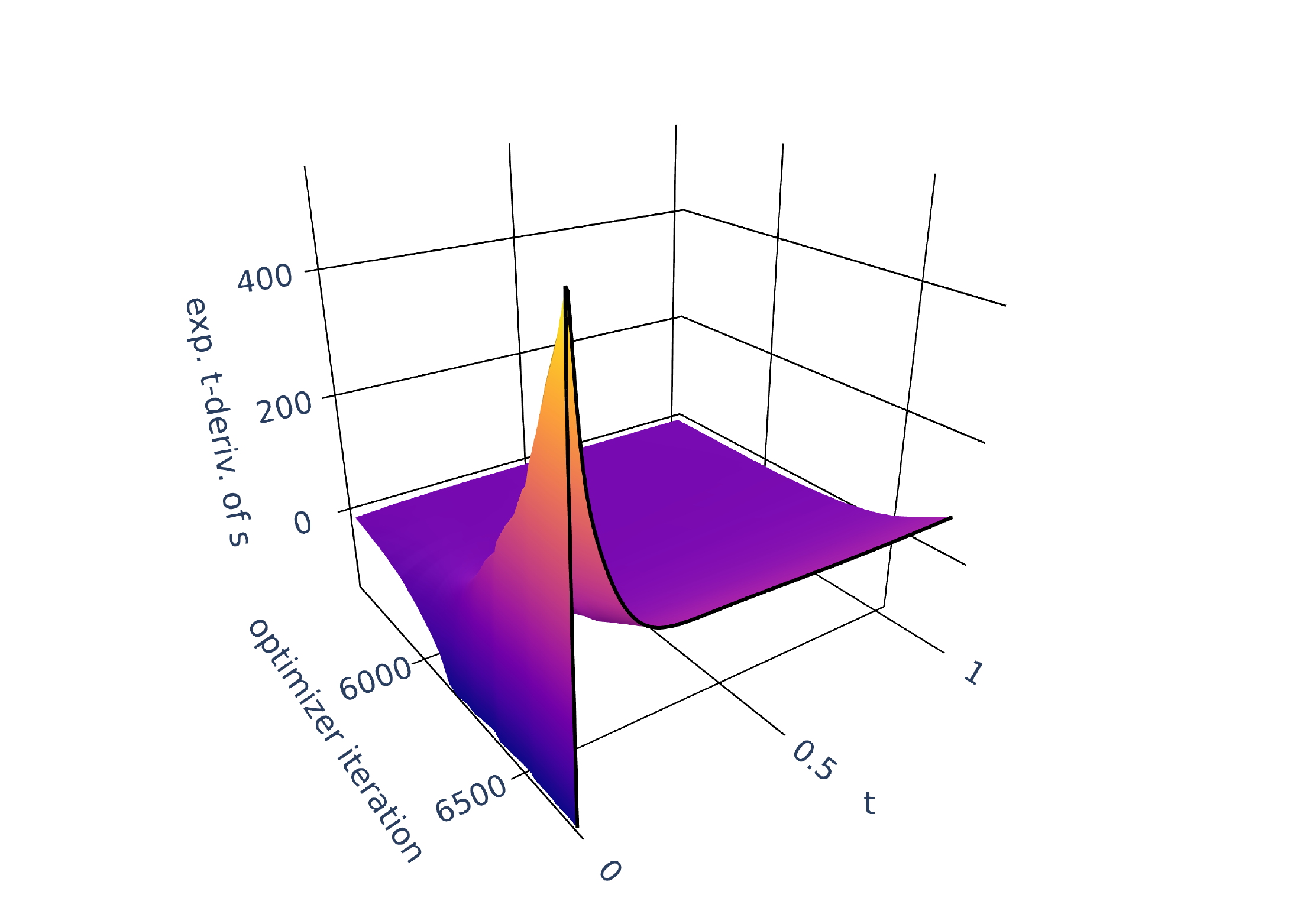}
    \caption{$\hat{\mathbb E}_{\rho(t)}[\partial_t s_{\theta_n}(t, \cdot)]$ throughout training}
    \end{subfigure}%
    \caption{Toy example: The DICE loss \eqref{eq:Loss_DICE} leads to stabler training behavior (left) because it is invariant to spurious constants that can emerge during training (see Proposition~\ref{prop:InvariantF}). In contrast, the empirical AM loss \eqref{eq:AM_Loss} can diverge to $-\infty$ by adding a function that is constant in space but varies rapidly in time (right), which leads to unstable training. The $z$-axis plots an empirical estimate $\hat{\mathbb E}_{\rho(t)}[\partial_t s_{\theta_n}(t, \cdot)]$ over the optimization iterations as the training becomes unstable.}
    \label{fig:AM_loss_over_iterations}
\end{figure}
    
\subsection{Demonstration of stabler training behavior of DICE compared to AM}
\label{ssec:DvsAM}
We demonstrate the stable training behavior of DICE compared to AM on a toy example using a stationary process $X(t) \sim \mathcal{N}(0, 10^{-2})$ so that $\rho(t)$ is the density function of the normal random variable with mean 0 and variance $10^{-2}$ for all times $t \in [0, 1]$. Correspondingly, any gradient field $\nabla s$ with $s$ constant over the spatial coordinate $x$ leads to a continuity equation that maintains the initial density $\rho(0)$. We take a time-step size of $\delta t = 2^{-8}$ and generate $N_j = 10^4$ realizations from $\rho(t_j)$ at all $K = 512$ times $j = 0, \dots, 512$.  Notice that because we sample independently for all $t_j$, there are no sample trajectories in this example, i.e., sample $X_i(t_j)$ and $X_i(t_{j+1})$ are uncorrelated for all $i = 1, \dots, N_j$ and $j = 0, \dots, K$. We parametrize $s$ as $s_{\theta}$ with a multi-layer perceptron architecture with two hidden layers of width 32. The loss minimization is done with the ADAM optimizer with mini batching in the samples ($n_x = 128)$ and time points $(n_t = 128)$. The used learning rate goes from $5\times 10^{-4}$ to $10^{-6}$ on a cosine schedule over $2 \times 10^4$ iterations.
    
Let us first consider the AM loss $\LAM$ given in \eqref{eq:AM_Loss} with a composite trapezoidal rule in time, which is in agreement with the work by \cite{berman2024parametric} that proposes to use quadrature in time rather than Monte Carlo estimation. Expectation over the space variables are estimated with plain Monte Carlo \eqref{eq:PlainMonteCarlo}. \Cref{fig:AM_loss_over_iterations}a plots the empirical AM loss over the optimization iterations. After about 5000 iterations, the absolute value of the AM loss starts to increase, which indicates a training instability: The inaccurate integration of the time derivative of $s$ in the loss \eqref{eq:AM_Loss} leads to an amplification of numerical errors. As discussed in Section~\ref{sec:AM:ResidualTerm} and shown in Proposition~\ref{prop:AM:BadConstant}, such an instability can emerge when the optimizer tries to push the residual term \eqref{eq:AM:ResTerm} towards $-\infty$ by optimizing for a function $f$ that is challenging to integrate numerically. 

To see that this is indeed happening in this example, consider $t \mapsto \mathbb{E}_{x \sim \rho(t)} [ \partial_t s_{\theta_n}(t, x) ]$, which is the expectation of the field $s_{\theta_n}$ at optimization iteration $n$. The expectation is taken over $x \sim \mathcal{N}(0, 1)$. The function $t \mapsto \mathbb{E}_{x \sim \rho(t)} [\partial_t s_{\theta_n}(t, x) ]$ needs to be numerically integrated in time to obtain the empirical AM loss \eqref{eq:AM_Loss}, and this leads to the residual term \eqref{eq:AM:ResTerm}. \Cref{fig:AM_loss_over_iterations}b shows an empirical estimate of the function, which starts to form a sharp kink at optimization iteration $n$ of about 5000. This sharp kink is challenging to numerically integrate, which leads to a large absolute residual \eqref{eq:AM:ResTerm} that causes the absolute value of the loss function to grow as shown in \Cref{fig:AM_loss_over_iterations}a. Notice that while the absolute value of residual grows, the residual has a negative sign and so by forming a shaper and sharper kink, the optimizer makes the loss smaller and smaller. In contrast, if we use the DICE loss \eqref{prop:DICEEulerLagrangeEq} to optimize for a field $s_{\theta}$, then we obtain the more stable loss curve shown in Figure~\ref{fig:AM_loss_over_iterations}a.  Furthermore, because the population dynamics are constant in this example, we expect that the norm of $\nabla \hat{s}_{\theta_n}$ is close to zero, which is indeed what we see to almost single precision in Figure~\ref{fig:AM_loss_over_iterations}a because the DICE loss is close to zero and it includes the term $\sum_j w_j \hat{\mathbb{E}}_{x \sim \rho(t_j)} [\frac12 | \nabla s_{\theta_n}(t_j, x) |^2 ]$. 

\subsection{Example with known potential}\label{sec:NumExp:KnownPotential}
\begin{figure}
\begin{subfigure}{0.48\textwidth}
\includegraphics[width=0.9\columnwidth]{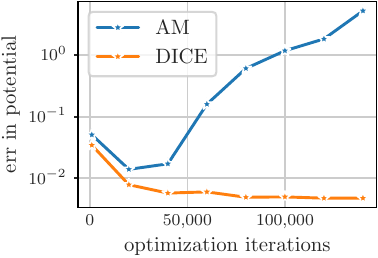}
\caption{error of the learned potential}
\end{subfigure}
\hfill
\begin{subfigure}{0.48\textwidth}
\includegraphics[width=0.9\columnwidth]{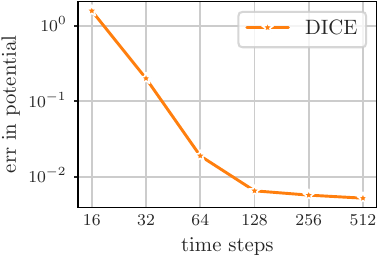}\caption{error versus number training samples in time}
\end{subfigure}
\caption{Known potential: Training with the DICE loss leads to stabler training behavior compared to the AM loss in this example. We observe a decrease in error as we increase the batch size in time $n_t$. At around $n_t = 128$, the error is mostly dominated by the error in $n_x$, which is held constant at $n_x = 256$.}
\label{fig:AnalyticExampleErrorPlot}
\end{figure}

In this experiment, we consider a process $X(t)$ that follows the time-dependent potential 
\begin{align}
    V(t, x) = \sin \left(\frac \pi 2 t \right)^2 V_1(x) + \cos \left(\frac \pi 2 t \right)^2 V_0(x)
\end{align}
with
\begin{align}
    V_0(x) = \sum_{i=1}^d \left( \sin(x_i) + \cos(x_i) + x_i^2 + x_i \right), \quad
    V_1(x) = \sum_{i=1}^d \left( x_i^4 - 16 x_i^2 + 5 x_i \right)\,.
\end{align}
Having the potential $V$ given analytically allows us to estimate the error 
$$ \int_t \bE_{x \sim \rho(t)} \left[ | \nabla \hat s(t, \cdot) - \nabla V(t, \cdot) |^2 \right] \mathrm dt,$$
which is the error of $\hat{s}$ that is also considered in Proposition~\ref{prop:DICE:BoundForAllT}. We consider the relative error of the potential, which we estimate as  
\begin{align}
    \sum_{j=0}^K w_j \, \frac{ \hat{\mathbb E}_{x \sim \rho(t_j)} \left[ | \nabla \hat s(t_j, \cdot) - \nabla V(t_j, \cdot) |^2 \right] }{ \hat{\mathbb E}_{x \sim \rho(t_j)} \left[ | \nabla V(t_j, \cdot) |^2 \right]  }
\end{align}
on the training data time steps $t_0, \dots, t_K$, where the $w_0, \dots, w_K$ denote quadrature weights for the composite trapezoidal integration.

We draw $N_0 = 2048$ samples from $\rho_0 = \mathcal N(0, 1)$ where $d = 2$ and set the time-step size to $\Delta t = 1/512$. The rest of the setup is as described at the beginning of \Cref{sec:NumExp}.  Figure~\ref{fig:AnalyticExampleErrorPlot} shows that the DICE loss allows us to accurately infer the time-dependent gradient field $\nabla V$. The example shows that DICE does indeed recover the sample dynamics in cases where they follow a time-dependent gradient flow. In contrast, the AM loss exhibits the same instability that was observed in the example in subsection \ref{ssec:DvsAM} for high optimization iteration numbers.

\begin{figure}
    \centering 
    \begin{subfigure}{\textwidth} \includegraphics[width=\linewidth]{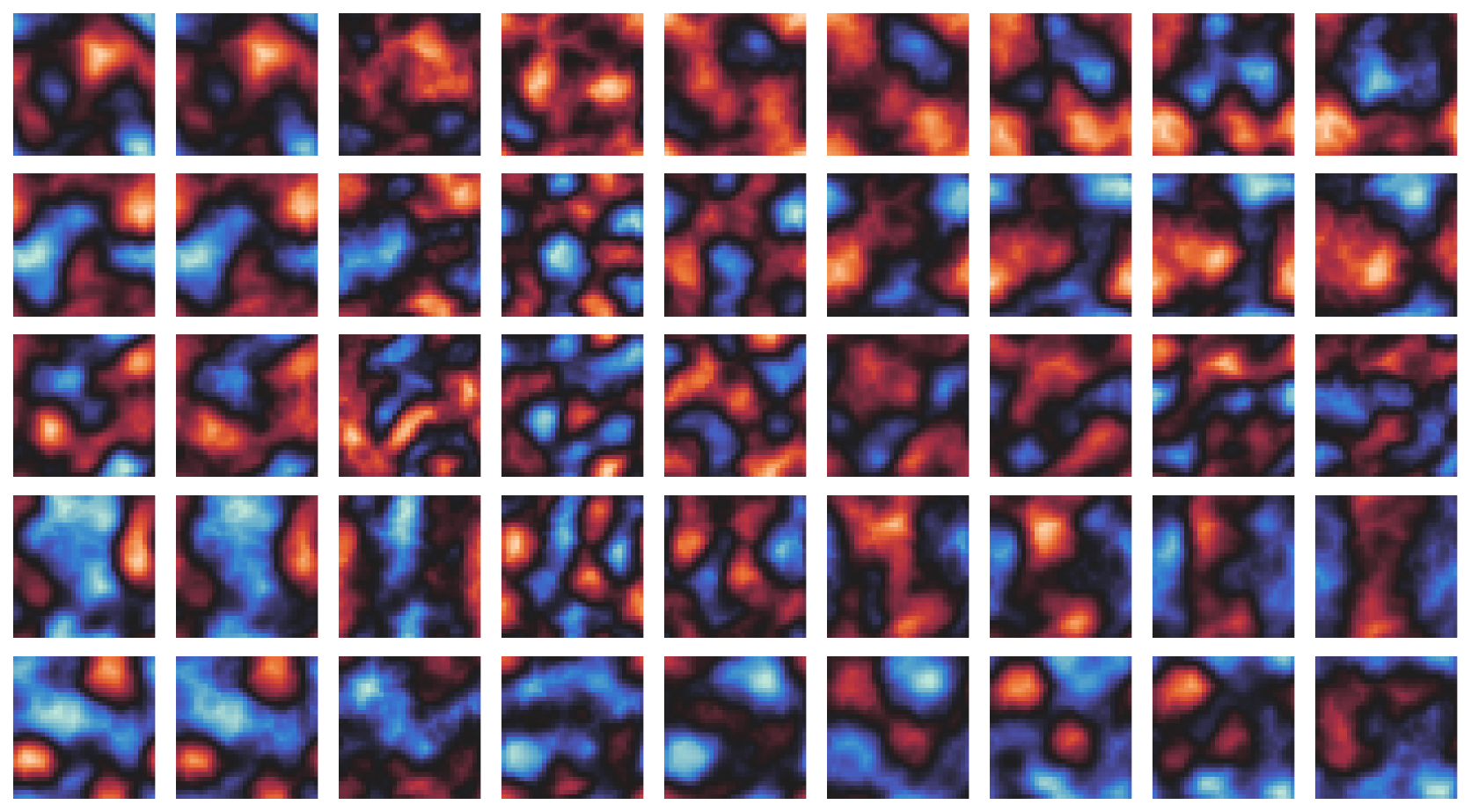}
    \caption{true samples}
    \end{subfigure} \\%
    \begin{subfigure}{\textwidth}
    \includegraphics[width=\linewidth]{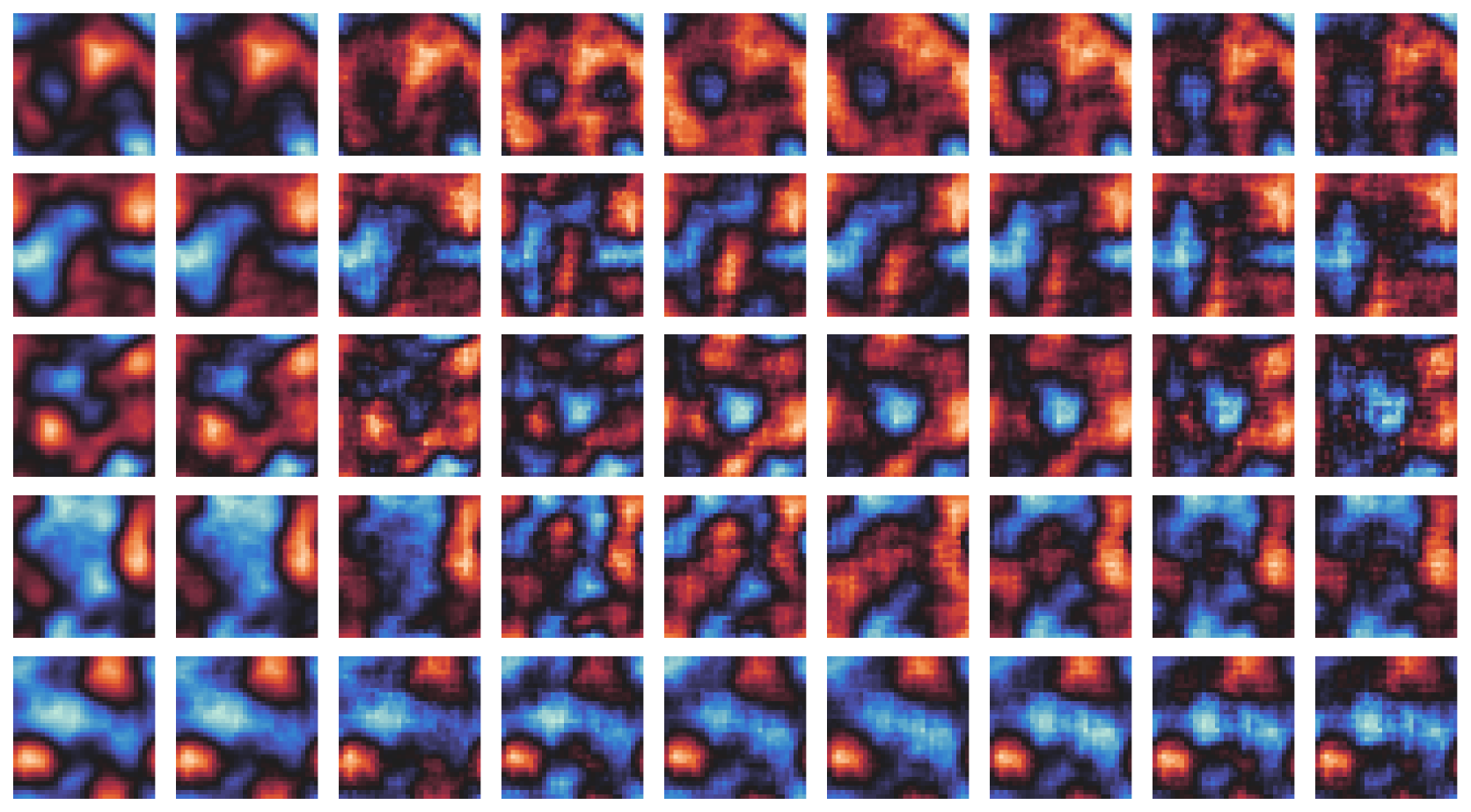}
    \caption{DICE samples}
    \end{subfigure}%
    \caption{Random waves: Marginal trajectories for five different initial conditions (top to bottom) over time (left to right). DICE captures the dynamics on a population level and thus the generated samples (bottom) are similar to the true samples (top) only in distribution but not necessarily in a point-wise sense.}
    \label{fig:wave:samples}
\end{figure}

\subsection{Random waves}\label{sec:NumExp:RandomWaves}
We now consider the wave equation, using random initialization to create a distribution. We learn the resulting population dynamics and test the learned model via its ability to predict moments. 

\paragraph{Setup} Consider the wave equation $\partial_t^2 \phi(t, x) + \Delta \phi(t, x) = 0$ in two spatial dimensions $x \in [0,2\pi)^2$, with periodic boundary conditions in $x$, over the time interval $t \in [0, 8]$. The initial condition on
$\phi$  is generated by choosing random Fourier coefficients from a log-normal distribution which peaks at a wave number $k=1$ and which controls the regularity of the random field. The time-derivative $\partial_t \phi(0, x)$ is set to zero at all $x$. We discretize the wave equation in space with a Fourier spectral method. We discretize in time with second-order finite differences and set the time-step size to $\delta t = 5 \times 10^{-3}$. We solve on a grid of $256 \times 256$ and then down-sample to $32 \times 32$ points to obtain the state $X(t)$ of dimension $d = 1024$. We generate $N_j = 2048$ training data samples at all time steps $j = 0, \dots, K$ and then train with DICE using $10^5$ optimization steps. Once we have trained $s_{\theta}$ we use it to generate $i = 1, \dots, 1024$ new samples $\widehat X_i(t_1), \dots, \widehat X_i(t_K)$ over the same time steps $t_1, \dots, t_K$ with initial conditions $\widehat X_i(0)$ sampled as described above. While at first it might look futile to  approximate the second-order dynamics of the wave equation with first-order ones \eqref{eq:Prelim:ODE}, we note that in the first-order dynamics we allow the right-hand side function $\nabla s = u$ to vary with time $t$, whereas in the second-order dynamics the right-hand side function is constant in time.

\paragraph{Results}
Recall that DICE captures the dynamics on a population level and thus the generated samples $\widehat X_i(t_j)$ that follow \eqref{eq:Prelim:ODE} with $\nabla s_{\theta}$ are similar to the  physical trajectory $t \mapsto X(t)$ only on a population level. In Figure~\ref{fig:wave:moments} we show the state $X(t)$ plotted in a two-dimensional domain as a heat plot for five test realizations of the initial condition and next to them the corresponding generated samples $\widehat X(t)$. They clearly differ in a point-wise sense. We stress again that DICE is not capturing sample dynamics but population dynamics and thus the generated samples can look differently when compared point-wise. 

To assess the quality of the generated samples, we need to consider quantities of interest that are determined by the distribution of $X(t)$ and $\widehat X(t)$ rather than their sample dynamics. For this, we consider the average $\langle X(t) \rangle$ over the components of $X(t)$, which has to stay constant $\langle X(t) \rangle = \langle X(0) \rangle$ over time $t$ because we impose periodic boundary conditions and mass is conserved. 
More generally, we also consider the moments $\langle X^i(t) \rangle$ of higher order $i \in \mathbb{N}$. \Cref{fig:wave:moments} shows estimates of the first, second, and third moment obtained with the generated samples with DICE, which are in close agreement with the estimated moments obtain from a test set. The shaded area shows 1/10 of a standard deviation and is also in close agreement. 

Recall that DICE aims to match the population dynamics while at the same time minimizes the kinetic energy. Thus, sample populations generated with DICE should have a lower kinetic energy, even though they are in agreement on a population level with the original system. We now empirically show  that the generated population indeed takes a path of less energy compared to the training data, while accurately approximating quantities of interest that depend on the population alone. We estimate the integrated kinetic energy over time as 
    \begin{align}
        E_{\mathrm{kin}}(\{ X_i(t_j) \}_{i,j}) = \sum_{j=1}^K \frac{1}{2 N_j}\sum_{i=1}^{N_j}   \frac{\| X_i(t_{j}) - X_i(t_{j-1})\|^2_2}{t_{j} - t_{j-1}},
    \end{align}
    and report it for a test data set and generated samples in \Cref{fig:wave:Ekin}. The gap shows that DICE sample populations have a lower kinetic energy than the true sample populations, which is in agreement with the objective of DICE to minimize the kinetic energy.

\begin{figure}[t]
    \centering 
    \begin{subfigure}{0.3\textwidth} \includegraphics[width=\linewidth]{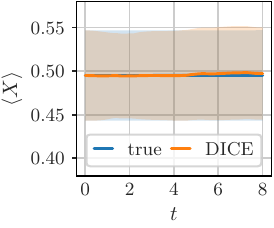}\caption{mean}
    \end{subfigure}%
    \hfill
    \begin{subfigure}{0.3\textwidth}
    \includegraphics[width=\linewidth]{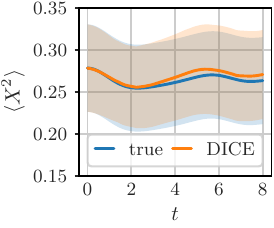}
    \caption{second moment}
    \end{subfigure}%
    \hfill
    \begin{subfigure}{0.3\textwidth}
    \includegraphics[width=\linewidth]{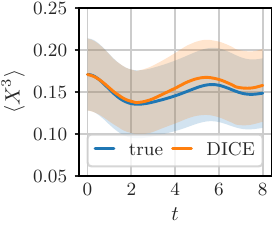}
    \caption{third moment}
    \end{subfigure}%
    \caption{Random waves: While the sample trajectories generated by DICE can look different than the original samples when compared point-wise, the generated sample population captures accurately quantities that depend on the population rather than the samples such as moments.} 
    \label{fig:wave:moments}
\end{figure}

\begin{SCfigure}
\includegraphics[width=0.45\linewidth]{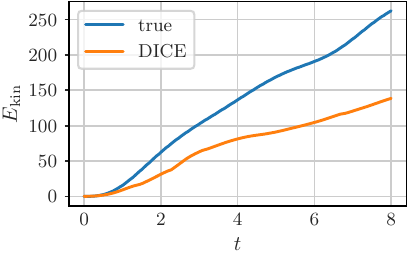}
    \caption{Random waves: The sample population generated by DICE has a lower kinetic energy than the original samples, which is indented because DICE aims to minimize the kinetic energy so that samples have to move as little as possible while matching the original samples in terms of population.} 
    \label{fig:wave:Ekin}
\end{SCfigure}

\subsection{Vlasov-Poisson instabilities}\label{sec:NumExp:Vlasov}
We now learn the population dynamics of particles in the strong Landau damping experiment and show that samples generated with DICE accurately predict the growth of electric energy as the instability grows. 

\paragraph{Setup}
Let us consider the Vlasov-Poisson system that describes the interaction of charged particles. The particle density $\rho: [0, T] \times \mathcal{X}_x \times \mathcal{X}_v \to \mathbb{R}$ is defined on the product of spatial
domain $\mathcal{X}_x = [0, 4\pi) \times [0, 1) \times [0, 1)$ and velocity domain $\mathcal{X}_v =\mathbb{R}^3$ domain. Periodic boundary conditions are applied in the spatial domain.
The dynamics of the density are governed by the following nonlinear and nonlocal
evolution equation:
\begin{align*}
\partial_t \rho(t, x, v) &= -v \cdot \nabla_x \rho(t, x, v) - \nabla \phi(t, x) \cdot \nabla_v \rho(t, x, v)\,, \\
-\mu^2 \Delta \phi(t, x) &= 1 - \int_{\mathcal{X}_v} \rho(t, x, v)\mathrm dv\,,
\end{align*}
where $\mu$ is the mass of the charged particles. We consider the strong Landau damping experiment, with initial condition
\begin{align}
\rho(0, x, v) = \frac{1}{\sqrt{2\pi}^3} \left (1 + \alpha \cos \left (2\pi \frac{x_1}{l_1} \right ) \right ) \exp\left(-\frac{|v|^2}{2} \right).
\end{align}
The parameters $\mu \in \{0.5, \dots, 1.5\} \setminus \{1.0\}$ are used for generating training data and $\mu = 1.0$ is kept for testing. The perturbation $\alpha$ is set to $0.25$. The time-step size is $\delta t = 0.05$ and the time interval is $[0, 8.75]$. Data are generated with the Struphy package introduced by \citet{nielsen_high-order_2023} that includes a detailed description of the strong Landau damping experiment.\footnote{\url{https://gitlab.mpcdf.mpg.de/struphy}} We store $10^5$ marker particles from the simulation and use $N_j = 25000$ of them as training data for our method over all times steps $j = 0, \dots, K$. We integrate in time over $t \in [0, 12.5]$ with time-step size $0.05$.

\paragraph{Results}
In \Cref{fig:Landau:two_fullwidth_images}, we show slices of the distribution function on phase space at time $t = 6$, in the form of two-dimensional histogram plots. Filaments in the projection onto the $x_1, v_1$-dimension are  visible and captured well by the DICE sample populations. We also compare to noise-conditioned score matching (NCSM) \citep{song2020generativealg} and conditional flow matching (CFM) \citep{lipman2022flow,albergo2023building}, which condition on time to generate marginal trajectories. The same architecture and training procedure are used as for DICE. As \Cref{fig:Landau:two_fullwidth_images} shows, NCSM and CFM achieve a comparable accuracy in the histogram plots. However, the inference runtimes reported in  Table~\ref{tbl:comparisons} show that the inference step of DICE is orders of magnitude faster than the inference with  methods that condition on time; see also the discussion by \cite{neklyudov_action_2023} and \cite{berman2024parametric}. Recall that DICE provides one trajectory over the whole time range $[0, T]$ per inference step, whereas conditioning on time requires one inference step per time step $t_0, \dots, t_K$. Notice that in Table~\ref{tbl:comparisons} we also compare to higher-order action matching (HOAM) introduced by \cite{berman2024parametric}, which is less prone to training instabilities by using the action matching loss with Simpsons quadrature in time; however, this imposes the restriction that the time steps $t_0, \dots, t_K$ are equidistant in time. Furthermore, AM with higher-order quadrature can still suffer from instabilities when training for extended times (see \Cref{ssec:DvsAM}); in this example, we utilize manual early stopping criteria.

\Cref{fig:Landau:ee_two_fullwidth_images_ee} shows the evolution of the electric energy. The generated DICE sample population is able to accurately capture the variation over four orders of magnitude for the majority of the observed time interval. Using the AM loss for training in this problem is challenging: we
were unable to obtain accurate approximations of the gradient fields despite extensive hyper-parameter sweeps; this observation is in agreement with \cite{berman2024parametric}.

In Table~\ref{tbl:comparisons}, we also report results obtained with DICE on the two-dimensional bump-on-tail and two-stream Vlasov-Poisson instabilities. Similar observations as for the strong Landau experiment holds for the bump-on-tail and two-stream instabilities. Appendix~\ref{appdx:2DVlasov} shows the corresponding histogram plots and electric energy curves.

\begin{figure}
  \centering
  \includegraphics[width=\textwidth]{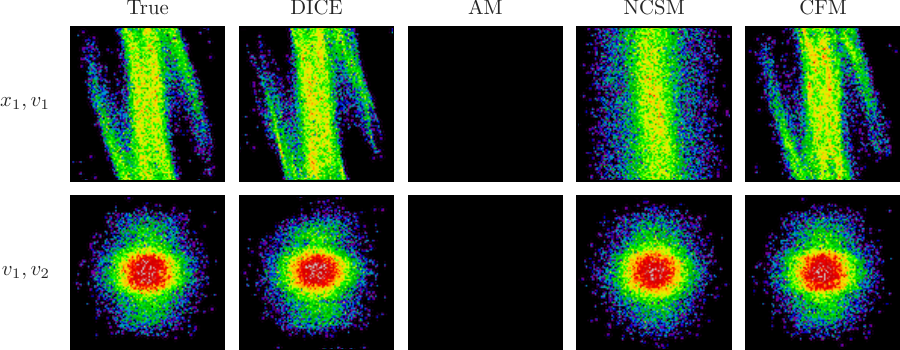}
  \caption{Vlasov-Poisson problem: The histogram plots of particles show that DICE sample populations match well the original sample populations, whereas training AM on this problem is challenging.}
  \label{fig:Landau:two_fullwidth_images}
\end{figure}

\begin{figure}
\includegraphics[width=\textwidth]{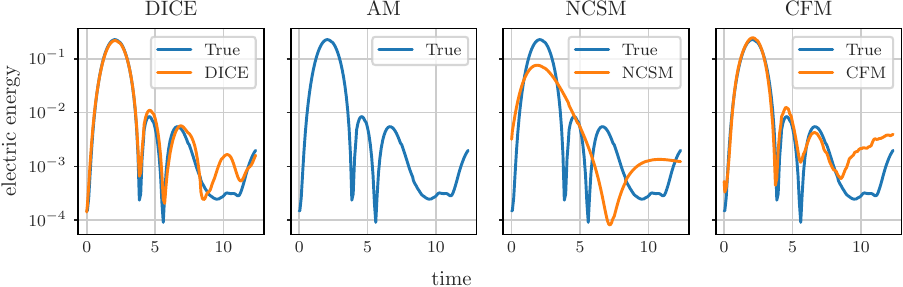}
\caption{Vlasov-Poisson problem: Sample populations generated with DICE accurately capture the electric energy in this example. }
\label{fig:Landau:ee_two_fullwidth_images_ee}
\end{figure}

\subsection{Rayleigh–B\'{e}nard convection}\label{sec:NumExp:Rayleigh}

We consider the system of nine ODEs, introduced by \cite{PeterReiterer_1998} as an approximate description of Rayleigh–B\'{e}nard convection -- a flow that is set in motion by a density gradient. The system can exhibit chaotic behavior, analogous to the three dimensional Lorenz system \citep{DeterministicNonperiodicFlow}. We choose the initial condition to be a nine-dimensional Gaussian random variable with zero mean and standard deviation $2 \times 10^{-2}$. To generate data, we integrate the system up to time $T = 20$ with an explicit Euler time integrator and time-step size $10^{-2}$. We study parametric dependence with respect to the parameter $\mu$, the Rayleigh number. The training data consists of $N_j = 25000$ samples at each
observation time $t_j$, and for parameter $\mu$ the set $\{13.5, 13.6, \dots, 14.2\}$. The test parameters are $\mu \in \{13.65, 14.05\}$. 

\Cref{fig:Lorenz} shows projections of the $9$-dimensional distribution functions in the form of histogram plots. Training with the AM loss fails in this example because of instabilities. In contrast, training with the  DICE loss leads to a gradient field that can generate accurate sample populations that match the true sample populations well. To quantitatively compare the DICE sample population quality, we estimate the time-averaged Sinkhorn divergence between the DICE population and the true population, which is low (better) compared to time-conditioned sampling with NCSM and CFM. Furthermore, generating sample populations with DICE is about one order of magnitude faster than with NCSM and CFM. 

\begin{figure}
  \centering
  \includegraphics[width=\textwidth]{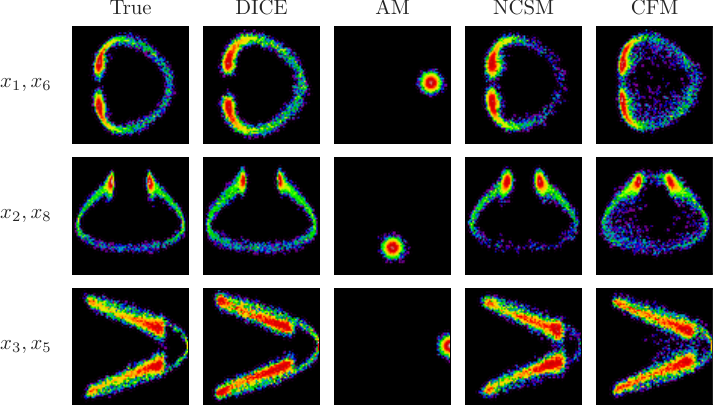}
  \caption{Chaotic system: Even though sample dynamics of this chaotic system can be challenging to learn, DICE can capture well the smoother population dynamics. The results shows that the sample populations generated with DICE accurately match original population.}
  \label{fig:Lorenz}
\end{figure}

\section{Conclusions}\label{sec:Conc}
We introduced DICE for learning population dynamics from samples of stochastic processes. By directly inverting the continuity equation through a carefully designed empirical loss, DICE avoids costly simulation-based training, bypasses solving nonlinear optimal transport problems, and supports fast inference without conditioning on time. A key contribution is the formulation of the DICE loss function that is well-posed even in discrete time and leads to more stable training behavior than other loss functions for learning population dynamics. Numerical results show that DICE enables efficient generation of sample trajectories that are consistent with the evolving population law, even in complex or chaotic systems. More broadly, DICE underscores the potential of learning population dynamics for reduced and surrogate modeling as an alternative to classical sample-level modeling, especially in settings where sample trajectories are unavailable or difficult to predict.

\acks{The first and second authors have been partially funded by the Air Force Office of Scientific Research (AFOSR), USA, award FA9550-21-1-0222 and FA9550-24-1-0327. The fourth author was partially supported by the Office of Naval Research, USA, under award N00014-22-1-2728. The third author is funded by a Department if Defense, USA, Vannevar Bush Faculty Fellowship.}

\bibliography{dice}

\appendix

\section{Proofs}

\subsection{The DICE loss is a variational formulation of the continuity equation}
\begin{proof}[\Cref{prop:DICEEulerLagrangeEq}]
    \label{proof:DICEEulerLagrangeEq}
    Consider $s^\varepsilon$ such that $s^\varepsilon(t_j, x) = \hat{s}^*(t_j, x) + \varepsilon \varphi_j(x)$ with $\varphi_j \in \mathcal C^\infty_0(\mathcal X)$ for all $j$. If $\hat{s}^*$ is optimal for \eqref{eq:Loss_DICE}, then it has to hold that
        \begin{align}\label{eq:DICE:DICEMinimizer:EqB}
            \frac{\rd}{\rd \varepsilon} \mathrm{L}_{\mathrm{DICE}}(s^\varepsilon) \bigg |_{\varepsilon = 0} = 0\,, \qquad \text{ for all } \varphi_j \in \mathcal C^\infty_0(\mathcal X) \text{ and  } j = 0, \dots, K.
        \end{align}
        We now write for brevity
        \begin{align}
            \bE_j = \bE_{x\sim\rho(t_{j})}, \; \hat{s}_j^* = \hat{s}^*(t_j, \cdot), \text{ and } (\bE_{j+1} - \bE_{j})[s] = \bE_{j+1}[s] - \bE_{j}[s].
        \end{align}
        Because
        \begin{align}
            \frac{\rd}{\rd \varepsilon} \bE_j \left[ \frac{1}{2} |\nabla s^\varepsilon_j |^2 \right]
            &= \frac{\rd}{\rd \varepsilon} \bE_j \left[ \frac{1}{2} |\nabla \hat{s}^*_j + \varepsilon \nabla \varphi_j|^2 \right] \\
            &= \bE_j \left[ \nabla \hat{s}^*_j \cdot \nabla \varphi_j + \varepsilon |\nabla \varphi_j|^2 \right]
        \end{align}
        and
        \begin{align}
            \frac{\rd}{\rd \varepsilon} \bE_j \left[ s^\varepsilon_j \right] = \bE_j \left[ \varphi_j \right],
        \end{align}
        we find
        \begin{multline}\label{eq:DICE:DICELossEqA}
            \frac{\rd}{\rd \varepsilon} \mathrm{L}_{\mathrm{DICE}}(s^\varepsilon) \bigg |_{\varepsilon = 0} 
            = \sum_{j=1}^{K} \bigg( \frac{t_j - t_{j-1}}{2} \left( \bE_j \left[ \nabla \hat{s}^*_j \cdot \nabla \varphi_j \right] + \bE_{j-1} \left[ \nabla \hat{s}^*_{j-1} \cdot \nabla \varphi_{j-1} \right] \right)  \\
            - \frac{1}{2} \left( \bE_j - \bE_{j-1} \right) \left[ \varphi_j + \varphi_{j-1} \right] \bigg)\,,  
        \end{multline}
        for all $\varphi_j  \in \mathcal C^\infty_0(\mathcal X)$ and for $j = 0, \dots, K$. 
        
        Now let $j^* = 1, \dots, K - 1$ and set $\varphi_j$ to the constant zero $\varphi_j \equiv 0$ function for all $j \in \{0, \dots, K\}\setminus \{j^*\}$. Recall that \eqref{eq:DICE:DICELossEqA} has to be zero for all $\varphi_j  \in \mathcal C^\infty_0(\mathcal X)$ and thus it has to be zero also for our particular choice of test functions. We therefore obtain that for all $j^* = 1, \dots, K - 1$, the following equation has to hold for all $\varphi_{j^*} \in C^\infty_0(\mathcal X)$ 
        \begin{multline}
            0 = \frac{t_{j^*} - t_{j^*-1}}{2}  \bE_{j^*} \left[\nabla \hat{s}^*_{j^*} \cdot \nabla \varphi_{j^*} \right]  - \frac{1}{2} \left( \bE_{j^*} - \bE_{j^* - 1} \right) \left[ \varphi_{j^*} \right] \\
            + \frac{t_{j^*+1} - t_{j^*}}{2}  \bE_{j^*} \left[ \nabla \hat{s}^*_{j^*} \cdot \nabla \varphi_{j^*} \right]  - \frac{1}{2} \left( \bE_{j^* +1 } - \bE_{j^*} \right) \left[ \varphi_{j^*} \right]\,,
        \end{multline}
        which can be re-written as
        \begin{align}
            \frac{1}{t_{j^*+1} - t_{j^*-1}} (\bE_{j^* + 1} - \bE_{j^*})\left[ \varphi_{j^*} \right] = \bE_{j^*} \left[\nabla \hat{s}^*( t_{{j^*}}, \cdot) \cdot \nabla \varphi_{j^*} \right] ,
        \end{align}
        which is equation $j^*$ in the system of equations given in \eqref{eq:DICE:DiscreteCompat}.
        For $j^* \in \{0, K\}$, we find
        \begin{align}
            0 &= \frac{t_{1} - t_{0}}{2} \left( \bE_{0} \left[ \nabla \hat{s}^*_{0} \cdot \nabla \varphi_{0} \right] \right) - \frac{1}{2} \left( \bE_{1} - \bE_{0} \right) \left[ \varphi_{0} \right], \\
            0 &= \frac{t_{K} - t_{K-1}}{2} \left( \bE_{K} \left[ \nabla \hat{s}^*_{K} \cdot \nabla \varphi_{K} \right] \right) - \frac{1}{2} \left( \bE_{K} - \bE_{K-1} \right) \left[ \varphi_{K} \right],
        \end{align}
        which correspond to equations $j = 0$ and $j = K$, respectively, of the system \eqref{eq:DICE:DiscreteCompat}.
    In summary we showed that a minimizer of the DICE loss satisfies \eqref{eq:DICE:DICEMinimizer:EqB} and thus also has to solve  the system of equations \eqref{eq:DICE:DiscreteCompat} corresponding to the time-discrete weak form of the continuity equation. 
    \end{proof}

\subsection{Existence and uniqueness of DICE minimizers}\label{appdx:ExistenceProof}
\begin{proof}[\Cref{prop:DICESolutionDiscreteInTime}]
\label{proof:DICESolutionDiscreteInTime} First, denote the Poincaré constant of \Cref{assumption:prelim:poincare} at time $t_j$ as $\lambda(t_j) = \lambda_j$ for $j = 0, \dots, K$. 
    We can then re-write the objective function in  \eqref{eq:DICEDiscreteDecoupled} as
    \begin{equation}\label{eq:DICE:Unique:LFun}
        \mathrm{L}_{\mathrm{DICE}}^{K}(\bar{\hat s}) = \sum_{j=0}^K  \int_{\mathcal X} \left( \frac{t_{j+1} - t_{j-1}}{4} |\nabla \hat s_j(x)|^2 - \frac{\rho(t_{j+1}, x) - \rho(t_{j-1}, x)}{2\rho(t_j, x)} \hat s_j (x) \right) \rho(t_j, x) \, \rd x\,.
    \end{equation}
    We obtain the bounds 
    \begin{align}
            \mathrm{L}_{\mathrm{DICE}}^{K}(\bar{\hat s}) &\geq \sum_{j=0}^K  \left( \frac{t_{j+1} - t_{j-1}}{4} \Vert \nabla \hat s_j \Vert_{L^2(\rho(t_j))}^2 - \left \Vert \frac{\rho(t_{j+1}, x) - \rho(t_{j-1}, x)}{2 \rho(t_j, x)} \right \Vert_{L^2(\rho(t_j))} \Vert \hat s_j \Vert_{L^2(\rho(t_j))} \right) \\
            &{\geq} \sum_{j=0}^K  \left( \lambda_j \frac{t_{j+1} - t_{j-1}}{4} \Vert \hat s_j \Vert_{L^2(\rho(t_j))}^2 - \left \Vert \frac{\rho(t_{j+1}, x) - \rho(t_{j-1}, x)}{2\rho(t_j, x)} \right \Vert_{L^2(\rho( t_j))} \Vert \hat s_j \Vert_{L^2(\rho(t_j))} \right) \\
            &\geq - \frac{1}{2} \sum_{j=0}^K \frac{2}{\lambda_j (t_{j+1} - t_{j-1})} \left \Vert \frac{\rho(t_{j+1}, x) - \rho(t_{j-1}, x)}{2\rho(t_j, x)} \right \Vert_{L^2(\rho( t_j))}^2,\label{eq:DICE:LowerBoundInProof}
        \end{align}
        where we used first the Cauchy-Schwarz inequality, then the Poincar\'e inequality, and then the fact that $\min_a \frac{ \lambda }{2} a^2 - ab = - \frac{1}{2\lambda} b^2$ for $a, b \in \mathbb{R}$ for $\lambda > 0$. Notice that when applying the Poincar\'e inequality, we assumed without loss of generality that $\mathbb{E}_{x \sim \rho(t_j)}[\hat{s}_j(x)] = 0$. We indeed can assume this without loss of generality because of \Cref{prop:InvariantF}, which also applies to the loss  \eqref{eq:DICE:Unique:LFun}. Thus, the loss value is unchanged:  $\mathrm{L}_{\mathrm{DICE}}^{K}(\bar{\hat s}) = \mathrm{L}_{\mathrm{DICE}}^{K}(\bar{\hat s}^0)$ for normalized $\bar{\hat{s}}^0$ that subtracts the constant $\mathbb{E}_{x \sim \rho(t_j)}[\hat{s}_j]$ from each component of $\bar{\hat{s}}$. 
        
        Given the quadratic nature of the objective function in \eqref{eq:DICEDiscreteDecoupled} and boundedness from below over $\mathcal{S}_0 \times \cdots \times \mathcal{S}_K$ as shown in  \eqref{eq:DICE:LowerBoundInProof}, existence and uniqueness of a minimizer now follows from standard arguments: We define the vector $\bar{l}(x) = [l_0, \dots, l_K]$ with components $j = 0, \dots, K$ given by 
        \begin{align}
            l_j(x) = \frac{1}{t_{j+1} - t_{j-1}} \frac{\rho(t_{j+1}, x) - \rho(t_{j-1}, x)}{ \rho(t_j, x)}.
        \end{align}
        Because \eqref{eq:DICErho_log_deriv_is_L2} holds, we obtain $\Vert \bar{l} \Vert_{L^2(\rho(t_0) \cdots \rho(t_K))}^2 < \infty$. We can then write the functional $\LD^K(\bar{\hat s})$ with \eqref{eq:DICE:Unique:LFun} as
        \begin{equation}\label{eq:DICE:LObjRewrite}
        \LD^K( \bar{\hat s}) = \frac{1}{2} \Vert \mathrm D \bar{\hat s} \Vert_{L^2(\rho(t_0) \cdots \rho(t_K))}^2 - \langle \bar{l}, \bar{\hat s}\rangle_{L^2(\rho(t_0) \cdots \rho(t_K))}\,.
    \end{equation}
    First, we again assume without loss of generality that $\bar{\hat{s}}$ has normalized components. Now notice that for any $\bar{\hat s} \in \mathcal{S}^0_0 \times \dots \times \mathcal{S}^0_K$ it holds: 
    \begin{align}
        \| \mathrm D \bar{\hat s} \|_{L^2(\rho(t_0) \cdots \rho(t_K))}^2
        &= \sum_{j=0}^K \frac{t_{j+1} - t_{j-1}}{2} \| \nabla \hat s_j \|_{L^2(\rho(t_j))}^2 \\
        &\geq \sum_{j=0}^K \frac{t_{j+1} - t_{j-1}}{2} \lambda_j \| \hat s_j \|_{L^2(\rho(t_j))}^2 \\
        &\geq \left(\min_{0 \leq j \leq K} \lambda_j \right) \sum_{j=0}^K \frac{t_{j+1} - t_{j-1}}{2} \|
        \hat s_j \|_{L^2(\rho(t_j))}^2 \\
        &= \left(\min_{0 \leq j \leq K} \lambda_j \right) \| \bar{\hat s} \|_{L^2(\rho(t_0) \cdots \rho(t_K)}^2.\label{appdx:ProofUniqueness:Existence:DSBoundHelper01A}
    \end{align}
Let now $( \bar{\hat s}_n)_n$ be a minimizing sequence with $\bar{\hat s}_n$ in $\mathcal{S}^0_0 \times \cdots \times \mathcal{S}^0_K$ such that  
    \begin{align}
        \lim_{n \to \infty} \LD^K(\bar{\hat s}_n) = \inf_{\bar{\bar{s}} \in \mathcal{S}_0 \times \dots \times \mathcal{S}_K} \LD^K(\bar{\bar{s}})\,,
    \end{align}
    with respect to the norm induced by the inner product $\langle \cdot, \cdot \rangle_{\mathcal{S}_0 \times \cdots \times \mathcal{S}_K}$.  Then, using \eqref{appdx:ProofUniqueness:Existence:DSBoundHelper01A} and the polarization identity $\frac{1}{2}|a|^2 + \frac{1}{2}|b|^2 = \frac{1}{4} |a - b| + \frac{1}{4} |a + b|$, we obtain 
        \begin{align}
            &\quad \, \max_{0 \leq j \leq K} \lambda_j^{-1}  \frac{1}{4} \Vert \bar{\hat s}_n - \bar{\hat s}_m \Vert_{L^2(\rho(t_0) \dots \rho(t_K))}^2 \nonumber \\
            &\leq \frac{1}{4} \Vert \mathrm D \bar{\hat s}_n - \mathrm D \bar{\hat s}_m \Vert_{L^2(\rho(t_0) \cdots \rho(t_K))}^2 \label{eq:DICE:InterABC01} \\
            &= \frac{1}{2} \Vert \mathrm D \bar{\hat s}_n \Vert_{L^2(\rho(t_0) \cdots \rho(t_K))}^2 + \frac{1}{2} \Vert \mathrm D \bar{\hat s}_m \Vert_{L^2(\rho(t_0) \cdots \rho(t_K))}^2 - \frac{1}{4} \Vert \mathrm D \bar{\hat s}_n + \mathrm D \bar{\hat s}_m \Vert_{L^2(\rho(t_0) \cdots \rho(t_K))}^2\,,
        \end{align}
        where \eqref{eq:DICE:InterABC01} is obtained via the Poincar\'e inequality. 
        We can add to this expression the term $0 = 2 \langle \bar{l}, \frac{1}{2} (\bar{\hat s}_n + \bar{\hat s}_n) \rangle 
        - \langle \bar{l}, \bar{\hat s}_n \rangle 
        - \langle \bar{l}, \bar{\hat s}_m \rangle$ (where $\langle \cdot, \cdot \rangle = \langle \cdot, \cdot \rangle_{L^2(\rho(t_0) \cdots \rho(t_K))}$) and collect terms to find
        \begin{align}
            \frac{1}{4} \Vert \mathrm D \bar{\hat s}_n - \mathrm D \bar{\hat s}_m \Vert_{L^2(\rho(t_0) \cdots \rho(t_K))}^2
            &= \LD^K( \bar{\hat{s}}_n) + \LD^K( \bar{\hat{s}}_m) - 2 \, \LD^{K}\left ( \frac{\bar{\hat s}_n + \bar{\hat s}_m}{2} \right ) \label{eq:DICE:InterABC02-1}\\ 
            &\leq \LD^K( \bar{\hat{s}}_n) + \LD^K( \bar{\hat{s}}_m) - 2 \, \inf_{\bar{\bar{s}} \in \mathcal{S}_0 \times \dots \times \mathcal{S}_K} \LD^K(\bar{\bar{s}} )\,,\label{eq:DICE:InterABC02}
        \end{align}
         where \eqref{eq:DICE:InterABC02} is obtained because $\LD^K$ can be written as in \eqref{eq:DICE:LObjRewrite}. 
        Given that $(\bar{\hat{ s}}_n)_n$ is a minimizing sequence, for every $\epsilon > 0$ there exists an $n^*$ such that 
        \begin{align}
        \LD^K( \bar{\hat{s}}_n) - \inf_{\bar{\bar s} \in \mathcal{S}_0 \times \dots \times \mathcal{S}_K} \LD^K\left (\bar{\bar s} \right ) \leq \epsilon\,,\qquad \forall n > n^*.
        \end{align}        
        Therefore, for every $n,m > n^*$ it holds that $\Vert \mathrm D\bar{\hat s}_n - \mathrm D\bar{\hat s}_m \Vert_{L^2(\rho(t_0) \cdots \rho(t_K))}^2 \leq 2 \epsilon$ and thus $\left(\min_{0 \leq j \leq K} \lambda_j \right)^{-1} \frac{1}{4} \Vert \bar{\hat s}_n - \bar{\hat s}_m \Vert_{L^2(\rho(t_0) \cdots \rho(t_K))}^2 \leq 2 \epsilon$. Hence, $(\bar{\hat{s}}_n)_n$ is a Cauchy sequence in $\langle \cdot, \cdot \rangle_{\mathcal{S}_0 \times \dots \times \mathcal{S}_K}$ and the infimum is attained by a minimizer, the limit of the Cauchy sequence, since $\mathcal{S}_0 \times \dots \times \mathcal{S}_K$ is a Hilbert space.
        
        The minimum is unique in $\mathcal{S}_0^0 \times \dots \times \mathcal{S}_K^0$ with respect to $\|\cdot\|_{\mathcal{S}_0 \times \dots \times \mathcal{S}_K}$ by strict convexity of $\LD^K$. Suppose there exist two solutions $\bar s^*, \bar s^{**} \in \mathcal{S}_0^0 \times \dots \times \mathcal{S}_K^0$, then
        \begin{align}
            \frac{1}{2} \left( \LD^K(\bar{s}^*) + \LD^K(\bar{s}^{**}) \right) = \frac{1}{2} \, 2 \min_{\bar{\bar s} \in \mathcal{S}_0^0 \times \dots \times \mathcal{S}_K^0} \LD^K\left (\bar{\bar s} \right )\,.
        \end{align}
        At the same time, the strict convexity of $\LD^K$ implies
        \begin{align}
            \LD^K \left( \frac{ \bar{s}^* + \bar{s}^{**}}{2} \right) < \frac{1}{2} \left( \LD^K(\bar{s}^*) + \LD^K(\bar{s}^{**}) \right) = \min_{\bar{\bar s} \in \mathcal{S}_0^0 \times \dots \times \mathcal{S}_K^0} \LD^K\left (\bar{\bar s} \right )
        \end{align}
        for all $\|\mathrm D \bar s^* - \mathrm D \bar s^{**}\|_{L^2(\rho(t_0) \dots \rho(t_K))} > 0$. Thus, by contradiction, because $\bar{s}^*, \bar{s}^**$ are normalized, it holds $0 = \|\mathrm D \bar s^* - \mathrm D \bar s^{**}\|_{L^2(\rho(t_0) \dots \rho(t_K))} = \|\bar{s}^* - \bar{s}^{**}\|_{\mathcal{S}_0 \times \dots \times \mathcal{S}_K} = 0$. 
    \end{proof}

    \begin{proof}[\Cref{corr:DICESolutionParametrizedInTime}]
        First, recall that $\LD(s) = \LD^K(\bar{s})$ for any $s \in \mathcal{S}$ with $\bar{s} = [s(t_0, \cdot), \dots, s(t_K, \cdot)]$. Let us now take the DICE minimizer $\hat s^* \in \mathcal{S}$ and evaluate it at the time points $t_0, \dots, t_K$ to obtain the vector of functions $\bar{v}^{*} = [\hat s^*(t_0, \cdot), \dots, \hat s^*(t_K, \cdot)]$, which is an element of the Cartesian product $\mathcal{S}_0 \times \dots \times \mathcal{S}_K$ per definition of $\mathcal{S}$. If $\LD^K(\bar{v}^*) < \LD^K(\bar{\hat{s}}^*)$, then $\bar{\hat{s}}^*$ cannot be a minimizer of $\LD^K$, which violates the assumption that $\bar{\hat{s}}^*$ is a minimizer of $\LD^K$. If $\LD^K(\bar{v}^*) > \LD^K(\bar{\hat{s}}^*)$, then $\hat{s}^*$ cannot be a minimizer of $\LD$ because stitching together the components of $\bar{\hat{s}}^*$ into a function over $\mathcal{S}$ would provide a lower value of $\LD$ than $\hat{s}^*$, which violates the assumption that $\hat{s}^*$ is a minimizer of $\LD$. Thus, we have $\LD^K(\bar{\hat{s}}^*) = \LD^K(\bar{v}^{*})$. This means that $\bar{v}^*$ is a minimizer of $\LD^K$ and thus the components of $\bar{v}^* = [\hat{s}^*(t_0, \cdot), \dots, \hat{s}^*(t_K, \cdot)]$ have to agree with the components of $\bar{\hat{s}}^*$ in the sense of \eqref{eq:DICEDiscSolEqual} due to the uniqueness of $\LD^K$ minimizers shown in \Cref{prop:DICESolutionDiscreteInTime}. 
    \end{proof}

    \begin{proof}[\Cref{corr:DICELowerBound}]
        The value of the loss $\LD(s)$ at any $s \in \mathcal{S}$ only depends on the vector of functions $\bar{s} = [s(t_0, \cdot), \dots, s(t_K, \cdot)] \in \mathcal{S}_0 \times \dots \times \mathcal{S}_K$ at the discrete time points $t_0, \dots, t_K$ and thus coincides with $\LD^K$. It was shown in \eqref{eq:DICE:LowerBoundInProof} in the proof of \Cref{prop:DICESolutionDiscreteInTime} that $\LD^K$ is lower bounded over $\mathcal{S}^0_0 \times \dots \times \mathcal{S}^0_K$. For any $\bar s \in \mathcal{S}_0 \times \dots \times \mathcal{S}_K$, we can construct $\bar s^0 = [s(t_0, \cdot) - \mathbb E_{\rho(t_0)}[s(t_0, \cdot)] , \dots, s(t_K, \cdot) - \mathbb E_{\rho(t_K)}[s(t_K, \cdot)] ] \in \mathcal{S}^0_0 \times \dots \times \mathcal{S}^0_K$ and we have $\LD^K(\bar s) = \LD^K(\bar s^0)$ by \Cref{prop:InvariantF} and $\LD^K(\bar s^0)$ is bounded below by \Cref{prop:DICESolutionDiscreteInTime}.
    \end{proof}

    \subsection{Infinite data limit}

    \begin{proof}[\Cref{prop:dice:bounds_on_rho}]
    Denote by
    $[ \nabla \cdot u ]^-$ the negative part of $\nabla \cdot u$, defined as 
    \begin{align}
        [ \nabla \cdot u ]^-(t, x) = \begin{cases}
            \nabla \cdot u(t, x)\,, & \text{if } \nabla \cdot u(t, x) < 0\,, \\
            0\,, & \text{otherwise.}
        \end{cases}
    \end{align}
    Then, it holds 
    \begin{align}
        - \nabla \cdot u(t, x) \leq |[ \nabla \cdot u ]^-| (t, x)\leq \max_{(t, y) \in [0, T] \times \mathcal{X}} | [ \nabla \cdot u ]^- | (t, y)=: \overline{c} \quad \forall x \in \mathcal X
    \end{align}
    and analogously
    \begin{align}
        -\nabla \cdot u(t, x) \geq - [ \nabla \cdot u ]^+ (t, x) \geq - \max_{(t, y) \in [0, T] \times \mathcal{X}} [ \nabla \cdot u ]^+(t, y) =: - \underline{c} \quad \forall x \in \mathcal X.
    \end{align}
    Then,
    \begin{align}
        \rho(t, x) 
        &= \rho_0(\phi_t^{-1}(x)) \exp\left( - \int_0^t (\nabla \cdot u)(\tau, \phi_\tau(\phi_t^{-1}(x))) \, d\tau \right) \\
        &\leq \rho_0(\phi_t^{-1}(x)) \exp\left( - \int_0^t \overline c \, d\tau \right) \\
        &\leq \overline{\rho_0} \exp\left( - \overline c t \right).
    \end{align}
    The lower bound follows analogously.
    \end{proof}

    \begin{proof}[\Cref{prop:DICEErrorBoundAtDataTimepoints}]
    A consequence of \Cref{asm:Smooth} and $\hat{s}^*$ satisfying \eqref{eq:DICE:Smooth:ContEqsHat} is
    \begin{align}
        \int_{\mathcal X} \varphi(x) (\partial_t - \hat \delta_{t_j}) \rho(t_j, x) \, \rd x &= \int_{\mathcal X} \nabla \varphi(x) \cdot \left(  \nabla s^*(t_j, x) - \nabla \hat s^*(t_j, x) \right) \rho(t_j, x) \, \rd x
    \end{align}
    for all $\varphi \in H^1_0(\mathrm dx)$ and $j = 0, \dots, K$. We used the fact that any function in $H^1_0(\mathrm dx)$ is also in $H^1_0(\rho(t_j))$ since $\rho$ is bounded above and below.
    The case $\varphi_j = s^*(t_j, \cdot) - \hat s^*(t_j, \cdot)$ gives
    \begin{align}
        \| \nabla s^*(t_j, \cdot) - \nabla \hat s^*(t_j, \cdot) \|_{L^2(\rho(t_j))}^2 
        &= \int_{\mathcal{X}} ( s^*(t_j, x) - \hat s^*(t_j, x)) (\partial_t - \hat \delta_{t_j}) \rho(t_j, x) \, \rd x \\
        &\leq \| s^*(t_j, x) - \hat s^*(t_j, x) \|_{L^2(\rd x)}  \| (\partial_t - \hat \delta_{t_j}) \rho(t_j, x) \|_{L^2(\rd x)} \\
        &\leq \lambda_{\mathcal X}^{-1/2} \| \nabla s^*(t_j, x) - \nabla \hat s^*(t_j, x) \|_{L^2(\rd x)}  \| (\partial_t - \hat \delta_{t_j}) \rho(t_j, x) \|_{L^2(\rd x)} \, ,
    \end{align}
   for $j = 0, \dots, K$. In the last step, we used the Poincaré inequality of the domain (with constant $\lambda_{\mathcal X}$). Then, with
    \begin{align}
        \| \nabla s^*(t_j, \cdot) - \nabla \hat s^*(t_j, \cdot) \|_{L^2(\rho(t))} \geq \left( \min_{x \in \mathcal{X}} \rho(t_j, x) \right) \| \nabla s^*(t_j, \cdot) - \nabla \hat s^*(t_j, \cdot) \|_{L^2(\rd x)},
    \end{align}
    we find with \Cref{prop:dice:bounds_on_rho} that 
    \begin{align}
        \| \nabla s^*(t_j, \cdot) - \nabla \hat s^*(t_j, \cdot) \|_{L^2(\rho(t_j))} \leq \lambda^{-1/2}_{\mathcal X} \underline{\rho}_0^{-1} \exp\left( \underline{c} t_j \right) \| (\partial_t - \hat \delta_{t_j}) \rho(t_j, x) \|_{L^2(\rd x)}
    \end{align}
    holds. 

    \end{proof}

\begin{proof}[\Cref{prop:DICE:BoundForAllT}]
    In this proof, we will abbreviate $s_t^* = s^*(\cdot, t), \hat{s}_j^* = \hat{s}^*(\cdot, t_j)$, $\hat{s}_t = \hat{s}(t, \cdot)$ as well as $\rho_j = \rho(t_j)$ and $\rho_t = \rho(t)$.
    Recall that $s^*_t$ and $\hat s^*_j$ satisfy
    \begin{align}
    \label{eq:dice:proof_between_tj_1}
        \partial_t \rho_t &= - \nabla \cdot (\rho_t \nabla s_t^*), \\
        \hat \delta_{t_j} \rho_j &= - \nabla \cdot (\rho_j \nabla \hat s_j^*)
    \end{align}
    in $H^1_0(\mathrm dx)$ with homogeneous Neumann boundary conditions as per \Cref{asm:Smooth} and \eqref{eq:DICE:Smooth:ContEqsHat}, respectively. We write the equations formally in strong form for the sake of readability. 
    
    We can write, by adding zero terms,
    \begin{align}
    \label{eq:dice:proof_between_tj_2}
        - \hat \delta_{t_j} \rho_j &= - \nabla \cdot ((\rho_t - \rho_j) \nabla \hat s_j^*) - \nabla \cdot( \rho_t (\nabla \hat s_t - \nabla \hat s_j^*)) + \nabla \cdot( \rho_t \nabla \hat s_t ).
    \end{align}
    Adding \eqref{eq:dice:proof_between_tj_1} and \eqref{eq:dice:proof_between_tj_2}, together with the definition of $\hat s_t$,
    \begin{align}
        \hat{s}_t\bigg |_{t \in [t_j, t_{j+1}]} = \frac{t_{j+1} - t}{t_{j+1} - t_j} \hat{s}^*_j + \frac{t - t_j}{t_{j+1} - t_j} \hat{s}^*_{j + 1}\,, \quad j = 0, \dots, K-1\,,
    \end{align}
    implying that
    \begin{align}
        \nabla \hat s_t - \nabla \hat s_j^* = \frac{t - t_j}{t_{j+1} - t_j} (\nabla \hat s_{j+1}^* - \nabla \hat s_j^*)\,,
    \end{align}
    gives us
    \begin{align}
    \label{eq:dice:proof_between_tj_3}
        \partial_t \rho_t - \hat \delta_{t_j} \rho_j & = - \nabla \cdot ((\rho_t - \rho_j) \nabla \hat s_j^*) - \frac{t - t_j}{t_{j+1} - t_j} \nabla \cdot (\rho_t (\nabla \hat s_{j+1}^* - \nabla \hat s_j^*)) + \nabla \cdot (\rho_t (\nabla \hat s_t - \nabla s_t^*)).
    \end{align}
    We can multiply this expression with $\hat s_t - s_t^*$ and integrate over $\mathcal X$ (i.e., test with this function) to obtain
    \begin{align}
    \label{eq:dice:proof_between_tj_4}
        \| \nabla \hat s_t - \nabla s_t^* \|^2_{L^2(\rho_t)} 
        &= \underbrace{- \int_{\mathcal X} (\hat s_t - s_t^*) (\partial_t \rho_t - \hat \delta_{t_j} \rho_j) \, \rd x}_{=: \, (i)} \nonumber \\
        &\quad \, \underbrace{+ \int_{\mathcal X} (\nabla \hat s_t - \nabla s_t^*) \cdot \nabla \hat s_j^* (\rho_t - \rho_j) \, \rd x}_{=: \, (ii)} \nonumber \\
        &\quad \,  \underbrace{+ \frac{t - t_j}{t_{j+1} - t_j} \int_{\mathcal X}  (\nabla \hat s_t - \nabla s_t^*) \cdot (\nabla \hat s_{j+1}^* - \nabla \hat s_j^*) \rho_t \, \rd x}_{=: \, (iii)}\,.
    \end{align}
    We now look at these terms one by one. We already see that $(i)$ depends on the accuracy of the finite-difference approximation, $(ii)$ depends on the change of $\rho$ in time, and $(iii)$ on the change of $\hat s$ in time.

    Let us consider term $(i)$ first: 
    We can expand $\rho_t$ about $t_{j + 1}$ to obtain  
    \begin{align}
    \rho_{j+1} = \rho_t + (t_{j+1} - t) \partial_t \rho_t + \frac{1}{2}(t_{j+1} - t)^2 \partial_t^2 \rho_{\tau} \end{align}
    with a Taylor remainder theorem for a $\tau \in [t, t_{j+1}]$ by using that $\rho$ is twice continuously differentiable in $t$.  We use the analogous expansion of $\rho_t$ about $t_{j - 1}$ to obtain $\rho_{j-1} = \rho_t - (t - t_{j-1}) \partial_t \rho_t + \frac{1}{2}(t - t_{j-1})^2 \partial_t^2 \rho_{\tau'} $ for a $\tau' \in [t_{j-1}, t]$. Substituting the two expansions into the definition of $\hat \delta_{t_j} \rho_j$ given in \eqref{eq:DICE:Smooth:DefDeltaRho} and simplifying gives
    \begin{align}
        \hat \delta_{t_j} \rho_j = \partial_t \rho_t + \frac{1}{2} \frac{(t_{j+1} - t)^2}{t_{j+1} - t_{j-1}} \partial_t^2 \rho_\tau - \frac{1}{2} \frac{(t - t_{j-1})^2}{t_{j+1} - t_{j-1}} \partial_t^2 \rho_{\tau'}\,, \qquad t \in [t_{j - 1}, t_{j + 1}]\,. 
    \end{align}
    Because $t \in [t_{j-1}, t_{j+1}]$, we have $\frac{(t_{j+1} - t)^2}{t_{j+1} - t_{j-1}} \leq t_{j+1} - t_j$ and $\frac{(t - t_{j-1})^2}{t_{j+1} - t_{j-1}} \leq t_{j} - t_{j-1}$, and thus
    \begin{align}
        (i) &\leq \| \hat s_t - s_t^* \|_{L^2(\rd x)} \| \partial_t \rho_t - \hat \delta_{t_j} \rho_j \|_{L^2(\rd x)} \\
        &\leq \| \hat s_t - s_t^* \|_{L^2(\rd x)} \frac{1}{2} \| (t_{j+1} - t_j) | \partial^2_t \rho_\tau | + (t_{j} - t_{j-1}) |\partial^2_t \rho_{\tau'}| \|_{L^2(\rd x)} \\
        &\leq \| \hat s_t - s_t^* \|_{L^2(\rd x)} \sup_{\tau \in [t_{j-1}, t_{j+1}]} \| \partial^2_t \rho_\tau \|_{L^2(\rd x)} \, \max \{ t_{j+1} - t_j, t_{j} - t_{j-1}\}\,.
    \end{align}
    Lastly, we can use the Poincar\'e inequality (which holds because of \Cref{asm:Smooth}) and boundedness of $\rho$ to get 
    \begin{align}
    \label{eq:DICE:Smooth:InterProof_bound_on_grads}
        \| \hat s_t - s_t^* \|_{L^2(\rd x)} \leq \lambda^{-1/2}_{\mathcal X} \| \nabla \hat s_t - \nabla s_t^* \|_{L^2(\rd x)} \leq \lambda^{-1/2}_{\mathcal X} \overline{\rho}_0 \exp\left( \overline{c} t \right) \| \nabla \hat s_t - \nabla s_t^* \|_{L^2(\rho_t)}
    \end{align}
    and conclude
    \begin{align}
    \label{eq:DICE:Smooth:InterProof_bound_i}
        (i) &\leq C_{(i)} \max \{ t_{j+1} - t_j, t_{j} - t_{j-1}\} \| \nabla \hat s_t - \nabla s_t^* \|_{L^2(\rho_t)}\,,
    \end{align}
    where $C_{(i)} = \lambda^{-1/2}_{\mathcal X} \overline{\rho}_0 \exp\left( \overline{c} t \right) \max_{\tau \in [t_{j-1}, t_{j+1}]} \| \partial^2_t \rho_\tau \|_{L^2(\rd x)}$.
    
    Let us now consider term $(ii)$ of \eqref{eq:dice:proof_between_tj_4}: 
    First, with the Cauchy-Schwarz inequality, we obtain
    \begin{align}
        (ii) &\leq \| \nabla \hat s_t - \nabla s_t^* \|_{L^2(\rd x)} \left \| \nabla \hat s_j^* (\rho_j - \rho_t) \right \|_{L^2(\rd x)} \\
        &\leq \| \nabla \hat s_t - \nabla s_t^* \|_{L^2(\rd x)} \left \| \nabla \hat s_j^* \right \|_{L^2(\rd x)} \max_{x \in \mathcal X} |\rho(t_j, x) - \rho(t, x)|\,.
    \end{align}
    The steps to \eqref{eq:DICE:Smooth:InterProof_bound_on_grads} can be repeated for the equation $\hat \delta_{t_j} \rho_j = - \nabla \cdot (\rho_j \nabla \hat s^*_j)$ to find
    \begin{align}
        \left \| \nabla \hat s_j^* \right \|_{L^2(\rd x)}  \leq \lambda^{-1/2}_{\mathcal X} \overline{\rho}_0 \exp\left( \overline{c} t_j \right) \| \hat \delta_{t_j} \rho_j \|_{L^2(\rd x)} \, .
    \end{align}
    The definition of $\hat \delta_{t_j}$ and
    \begin{align}
        | \rho(t_{j+1}) - \rho(t_{j-1}) | \leq |t_{j+1} - t_{j-1}| \max_{\tau \in [t_{j+1}, t_{j-1}]} | \partial_t \rho(\tau) |
    \end{align}
    then imply that
    \begin{align}
        \left \| \nabla \hat s_j^* \right \|_{L^2(\rd x)}  \leq \lambda^{-1/2}_{\mathcal X} \overline{\rho}_0 \exp\left( \overline{c} t_j \right) \| \hat \delta_{t_j} \rho_j \|_{L^2(\rd x)} \leq \lambda^{-1/2}_{\mathcal X} \overline{\rho}_0 \exp\left( \overline{c} t_j \right) \max_{\tau \in [t_{j-1}, t_{j+1}]} \| \partial_t \rho_\tau \|_{L^2(\rd x)} \, .
    \end{align}
    Second, by Taylor's theorem and because $\rho$ is assumed to be twice differentiable in $t$, there exists for all $x$ a $\tau \in [t_j, t]$ such that $|\rho(t_j, x) - \rho(t, x)| = |t - t_j| \partial_t \rho(\tau, x)$ holds, which means that we obtain the bound
    \begin{align}
    \label{eq:DICE:Smooth:InterProof_bound_ii}
        (ii) &\leq C_{(ii)} |t - t_j| \| \nabla \hat s_t - \nabla s_t^* \|_{L^2(\rho_t)}\,,
    \end{align}
    where the constant is
    \begin{align}
    C_{(ii)} = \lambda^{-1/2}_{\mathcal X} \overline{\rho}_0^2 \exp \left( \overline{c} (t + t_j) \right) \left( \max_{x \in \mathcal X} \max_{\tau \in [t_{j}, t_{j+1}]} |\partial_t \rho(x, \tau)| \right) \max_{\tau \in [t_j, t_{j+1}]} \| \partial_t \rho_\tau \|_{L^2(\rd x)}.
    \end{align}
    The constant $C_{(ii)}$ is finite because $\rho$ is continuously differentiable in time.

    Let us now consider term $(iii)$ of \eqref{eq:dice:proof_between_tj_4}: 
    From the Cauchy-Schwarz inequality and $t - t_j \leq t_{j+1} - t_j$ for $t \in [t_j, t_{j+1}]$, we obtain 
    \begin{align}\label{eq:DICE:Smooth:InterProofABC002}
        (iii) \leq \| \nabla \hat s_t - \nabla s_t^* \|_{L^2(\rho_t)} \left \| \nabla \hat s_{j+1}^* - \nabla \hat s_j^* \right \|_{L^2(\rho_t)}.
    \end{align}
    To estimate the second factor in \eqref{eq:DICE:Smooth:InterProofABC002}, we can repeat the analogous procedure as in the derivation of \eqref{eq:dice:proof_between_tj_4}:  use 
    \begin{align}
        \hat \delta_{t_{j+1}} \rho_{j+1} = - \nabla \cdot (\rho_{j+1} \nabla \hat s_{j+1}^*)
    \end{align}
    to find
    \begin{align}
    \label{eq:dice:proof_between_tj_5}
        \hat \delta_{t_{j+1}} \rho_{j+1} - \hat \delta_{t_j} \rho_{j} 
        &= - \nabla \cdot (\rho_{j+1} \nabla \hat s_{j+1}^* - \rho_{j} \nabla \hat s_{j}^*) \\
        &= - \nabla \cdot ((\rho_{j+1} - \rho_t) \nabla \hat s_{j+1}^* + (\rho_t - \rho_{j}) \nabla \hat s_{j}^*) - \nabla \cdot (\rho_t (\nabla \hat s_{j+1}^* - \nabla \hat s_{j}^*)).
    \end{align}
    When $j > K - 2$, we can use $\hat \delta_{t_{j-1}} \rho_{j-1}$ in place of $\hat \delta_{t_{j+1}} \rho_{j+1}$ here without changing the argument. We test this equation with $\hat s_{j+1}^* - \hat s_j^*$ and obtain
    \begin{align}
        \| \nabla \hat s_{j+1}^* - \nabla \hat s_j^* \|^2_{L^2(\rho_t)} 
        &= \underbrace{\int_{\mathcal X} (\nabla \hat s_{j+1}^* - \nabla \hat s_j^*) \cdot \nabla \hat s_{j+1}^* (\rho_t - \rho_{j+1}) \, \rd x}_{=: \, (iv)} \nonumber \\
        &\quad \, \underbrace{+ \int_{\mathcal X} (\nabla \hat s_{j+1}^* - \nabla \hat s_j^*) \cdot \nabla \hat s_j^* (\rho_j - \rho_t) \, \rd x}_{=: \, (v)} \nonumber \\
        &\quad \, \underbrace{+ \int_{\mathcal X} (\hat s_{j+1}^* - \hat s_j^*) (\hat \delta_{t_{j+1}} \rho_{j+1} - \hat \delta_{t_j} \rho_{j}) \, \rd x}_{=: \, (vi)}\,.
    \end{align}
    For the terms $(iv)$ and $(v)$ we obtain the bounds $(iv) \leq C_{(iv)} |t_{j+1} - t| \| \nabla \hat s_{j+1}^* - \nabla \hat s_j^* \|_{L^2(\rho_t)}$ and $(v) \leq C_{(v)} |t - t_j| \| \nabla \hat s_{j+1}^* - \nabla \hat s_j^* \|_{L^2(\rho_t)}$ using the analogous arguments as used to bound $(ii)$, respectively with $C_{(v)} = C_{(ii)}$ and
    \begin{align}
        C_{(iv)} &= \overline{\rho}_0^2 \exp \left( \overline{c} (t + t_{j+1}) \right) \left( \max_{x \in \mathcal X} \max_{\tau \in [t_{j}, t_{j+1}]} |\partial_t \rho(x, \tau)| \right) \max_{\tau \in [t_j, t_{j+2}]} \| \partial_t \rho_\tau \|_{L^2(\rd x)}.
    \end{align}
    The term $(vi)$ can be bounded as
    \begin{align}\label{eq:ProofPropContTime:Aux3441}
        (vi) \leq \| \hat s^*_{j+1} - \hat s^*_j \|_{L^2(\rd x)} \left( \| \partial_t \rho_t - \hat \delta_{t_{j+1}} \rho_{j+1} \|_{L^2(\rd x)} + \| \partial_t \rho_t - \hat \delta_{t_j} \rho_j \|_{L^2(\rd x)}\right)\,.
    \end{align}
    The second term in \eqref{eq:ProofPropContTime:Aux3441} is analogous to the estimate $(i)$. The first term is also analogous but for the interval $[t_j, t_{j+2}]$ instead of $[t_{j-1}, t_{j + 1}]$. Hence,
    \begin{align}
        \| \nabla \hat s^*_{j+1} - \nabla \hat s_j^* \|^2_{L^2(\rho_t)} 
        &\leq C_{(iv)} |t_{j+1} - t| \| \nabla \hat s^*_{j+1} - \nabla \hat s^*_j \|_{L^2(\rho_t)} \nonumber \\
        &\quad \, + C_{(v)} |t - t_j| \| \nabla \hat s^*_{j+1} - \nabla \hat s^*_j \|_{L^2(\rho_t)} \nonumber \\
        &\quad \, + C_{(vi)} \max \{ t_{j+2} - t_{j+1}, t_{j+1} - t_j, t_{j} - t_{j-1}\} \| \nabla \hat s^*_{j+1} - \nabla \hat s_j^* \|_{L^2(\rho_t)}\,,
    \end{align}
    where 
    \begin{align}
        C_{(vi)} = C_{(i)} + \lambda^{-1/2}_{\mathcal X} \overline{\rho_0} \exp\left( \overline{c} t \right) \max_{\tau \in [t_{j}, t_{j+2}]} \| \partial^2_t \rho_\tau \|_{L^2(\rd x)}.
    \end{align}
    This results in the estimate
    \begin{align}
        \| \nabla \hat s^*_{j+1} - \nabla \hat s_j^* \|_{L^2(\rho_t)} &\leq (C_{(iv)} + C_{(v)} + C_{(vi)} ) \max \{ t_{j+2} - t_{j+1}, t_{j+1} - t_j, t_{j} - t_{j-1}\}
    \end{align}
    and hence
    \begin{align}
    \label{eq:DICE:Smooth:InterProof_bound_iii}
        (iii) \leq \| \nabla \hat s_t - \nabla s_t^* \|_{L^2(\rho_t)} (C_{(iv)} + C_{(v)} + C_{(vi)} ) \max \{ t_{j+2} - t_{j+1}, t_{j+1} - t_j, t_{j} - t_{j-1}\}
    \end{align}
    Now taking the bounds for all three terms $(i) - (iii)$ together (i.e. Equations \eqref{eq:DICE:Smooth:InterProof_bound_i}, \eqref{eq:DICE:Smooth:InterProof_bound_ii}, \eqref{eq:DICE:Smooth:InterProof_bound_iii}) and collecting all terms, we thus find (using $t-t_j < t_{j+1} - t_j$):
    \begin{align}
        \| \nabla \hat s_t - \nabla s^*_t \|_{L^2(\rho_t)} \leq \left( C_{(i)} + \ldots + C_{(vi)} \right) \max \{ t_{j+2} - t_{j+1}, t_{j+1} - t_j, t_{j} - t_{j-1}\}
    \end{align}
    for any $t \in [t_j, t_{j+1}]$. Taking the maximum over $j$ proves the claim. Note that
    \begin{align}
        C_{(i)}, \frac 1 2 C_{(vi)} \leq \lambda^{-1/2}_{\mathcal X} \overline{\rho}_0 \exp\left( \overline{c} T \right) \max_{\tau \in [0, T]} \| \partial^2_t \rho_\tau \|_{L^2(\rd x)}
    \end{align}
    and
    \begin{align}
    C_{(ii)}, C_{(iv)}, C_{(vi)} \leq \lambda^{-1/2}_{\mathcal X} \overline{\rho}_0^2 \exp \left( 2\overline{c} T \right) \left( \max_{\tau \in [0, T]} \|\partial_t \rho(\tau) \|_{L^\infty(\rd x)} \right) \max_{\tau \in [0, T]} \| \partial_t \rho_\tau \|_{L^2(\rd x)} \, .
    \end{align}

\end{proof}

\begin{proof}[\Cref{corr:DICE:BoundForAllT_NNDiscretization}]
    The only part of the proof of \Cref{prop:DICE:BoundForAllT} that needs to be modified in this case is the bound for $(iii)$ in \eqref{eq:dice:proof_between_tj_4}. It can be replaced by
    \begin{align}
        \int_{\mathcal X}  (\nabla s^*_t - \nabla \hat s_t) \cdot (\nabla \hat s_{t} - \nabla s^*_j) \rho_t \, \rd x 
        &\leq \| \nabla \hat s_t - \nabla s^*_t \|_{L^2(\rho_t)} \| \nabla \hat s_t - \nabla \hat s^*_j \|_{L^2(\rho_t)} \\
        &\leq \| \nabla \hat s_t - \nabla s^*_t \|_{L^2(\rho_t)} L^t_{\hat s} |t - t_j|.
    \end{align}
    The remainder of the proof is unchanged.
\end{proof}

\subsection{Inference error bound}

\begin{proof}[\Cref{prop:bound_on_inference_error_w2}]
We start by recalling the following result from \citet[Theorem 8.13]{villani_topics_2016}: 
        Let $(\epsilon, \epsilon) \ni t \mapsto \rho(t)$ be an absolutely continuous curve through $\mathcal P(\xdomain)$ and $\sigma \in \mathcal P(\xdomain)$ be a fixed absolutely continuous measure. When $\partial_t \rho + \nabla \cdot (\rho v) = 0$ for a bounded $v$ of class $C^1$ in both $t$ and $x$, then
        \begin{align}
            \frac{\rd}{\rd t} \frac{1}{2} W_2(\rho(t), \sigma)^2 = \int_\xdomain (x - T_{\rho(t) \rightarrow \sigma}(x)) \cdot v(x,t) \rho(t, x) \, \rd x,
        \end{align}
        where $T_{\rho(t) \rightarrow \sigma}$ denotes the optimal transport map from $\rho(t)$ to $\sigma$.

Let us now return to the proof of \Cref{prop:bound_on_inference_error_w2}. 
        The claim follows from \citet[Theorem 8.13]{villani_topics_2016} stated above together with the fact that
        $\Vert\mathrm{id} - T_{\rho(t) \rightarrow \hat \rho(t)} \Vert_{L^2(\rho(t))} = W_2(\rho(t), \hat \rho(t))$, where $T_{\rho(t) \rightarrow \hat \rho(t)}$ denotes the optimal transport map from $\rho(t)$ to $\hat \rho(t)$ and $\operatorname{id}$ the identity map.
        We obtain 
        \begin{align}
            \frac{\rd}{\rd t} \frac{1}{2} W_2(\rho(t), \hat \rho(t))^2
            &= \int_\xdomain (\mathrm{id} - T_{\rho(t) \rightarrow \hat \rho(t)}) \cdot \nabla s^*(t, \cdot) \rho(t) \, \rd x +\notag\\
            & \qquad \qquad\qquad \int_\xdomain (\mathrm{id} - T_{\hat \rho(t) \rightarrow \rho(t)}) \cdot \nabla \hat s^*(t, \cdot) \hat \rho(t) \, \rd x \\
            &= \int_\xdomain (\mathrm{id} - T_{\rho(t) \rightarrow \hat \rho(t)}) \cdot \nabla s^*(t, \cdot) \rho(t) \, \rd x\notag\\
            & \qquad\qquad\qquad\qquad + \int_\xdomain (T_{\rho(t) \rightarrow \hat \rho(t)} - \mathrm{id}) \cdot \nabla \hat s^*(t, \cdot) \circ T_{\rho(t) \rightarrow \hat \rho(t)} \rho(t) \, \rd x \\
            &= \int_\xdomain (\mathrm{id} - T_{\rho(t) \rightarrow \hat \rho(t)}) \cdot \left( \nabla s^*(t, \cdot) - \nabla \hat s^*(t, \cdot) \circ T_{\rho(t) \rightarrow \hat \rho(t)} \right) \rho(t) \, \rd x \\
            &\leq \Vert\mathrm{id} - T_{\rho(t) \rightarrow \hat \rho(t)} \Vert_{L^2(\rho(t))} \Vert \nabla s^*(t, \cdot) - \nabla \hat s^*(t, \cdot) \circ T_{\rho(t) \rightarrow \hat \rho(t)} \Vert_{L^2(\rho(t))} \\
            &= W_2(\rho(t), \hat \rho(t)) \, \Vert \nabla s^*(t, \cdot) - \nabla \hat s^*(t, \cdot) \circ T_{\rho(t) \rightarrow \hat \rho(t)} \Vert_{L^2(\rho(t))} \\
            &\leq W_2(\rho(t), \hat \rho(t)) \left( \Vert \nabla s^*(t, \cdot) - \nabla \hat s^*(t, \cdot) \Vert_{L^2(\rho(t, \cdot))}\right.\notag\\
             & \left.\qquad\qquad\qquad + \Vert \nabla \hat s^*(t, \cdot) - \nabla \hat s^*(t, \cdot) \circ T_{\rho(t, \cdot) \rightarrow \hat \rho(t, \cdot)} \Vert_{L^2(\rho(t))} \right)\,.
        \end{align}
        By Taylor's theorem, there exists a point $y$ on the line connecting $x$ to $T_{\rho(t) \rightarrow \hat \rho(t)}(x)$ such that
        \begin{align}
            \nabla \hat s^*(t, \cdot) \circ T_{\rho(t) \rightarrow \hat \rho(t)}(x) = \nabla \hat s^*(t, x) + D^2 \hat s^*(t, y) (T_{\rho(t) \rightarrow \hat \rho(t)}(x) - x)
        \end{align}
        and therefore
        \begin{align}
            \Vert \nabla \hat s^*(t, \cdot) - \nabla \hat s^*(t, \cdot) \circ T_{\rho(t) \rightarrow \hat \rho(t)} \Vert_{L^2(\rho)} &\leq \sup_{x \in \mathcal X} \Vert D^2_x \hat s^*(t, x) \Vert_{\mathrm{op}} \| T_{\rho(t) \rightarrow \hat \rho(t)} - \mathrm{id} \|_{L^2(\rho(t))}\\
            & = L_{\hat s^*}^x(t) W_2(\rho(t), \hat \rho(t)).
        \end{align}
        Together, this implies that
        \begin{align}
            \frac{\rd}{\rd t} W_2(\rho(t), \hat \rho(t))
            &\leq \Vert \nabla s^*(t, \cdot) - \nabla \hat s^*(t, \cdot) \Vert_{L^2(\rho(t))} + L_{\hat s^*}^x(t) W_2(\rho(t), \hat \rho(t))
        \end{align}
        and thus 
        \begin{align}
            W_2(\rho(t), \hat \rho(t)) &\leq W_2(\rho(0), \hat \rho(0)) + \int_0^t \Vert \nabla s^*(\cdot, \tau) - \nabla \hat s^*(\cdot, \tau) \Vert_{L^2(\rho(\tau))} \, \rd \tau \nonumber \\
            &\quad \, + \int_0^t L_{\hat s^*}^x(\tau) W_2(\rho(\tau), \hat \rho(\tau)) \, \rd \tau.
        \end{align}
        Plugging in the bound of \Cref{prop:DICE:BoundForAllT} and an application of the Gr\"onwall inequality concludes the proof.
    \end{proof}

\section{Details about the numerical experiments}\label{appdx:DetailsNumExp}

\paragraph{Setup and parameters}We use the entropic DICE loss from \Cref{eq:Loss_DICE_entropic} for Experiments 4.2 ($\varepsilon = 10^{-2}$) and 5 ($\varepsilon = 1.25 \times 10^{-1}$).

\paragraph{Experiment 1: No dynamics}~\\

\begin{tabular}{l|l}
    Network Architecture & MLP with swish activation functions \\
    Network size (width $\times$ depth) & $32 \times 3$ \\
    Domain & $x \in \bR, t \in [0,1], \mu = 0$ \\
    Data size ($N_x \times N_t \times N_\mu$) & $10^4 \times 512 \times 1$ \\
    Batch size ($n_x \times n_t \times n_\mu)$ & $128 \times 128 \times 1$ \\
    Optimizer & ADAM, cosine schedule $5 \times 10^{-4} \to 10^{-6}$ \\
    & $2 \times 10^4$ iterations \\
\end{tabular}

\paragraph{Experiment 2: Known potential}~\\

\begin{tabular}{l|l}
    Network Architecture & MLP with swish activation functions \\
    Network size (width $\times$ depth) & $128 \times 7$ \\
    Domain & $x \in \bR^2, t \in [0,1], \mu = 0$ \\
    Data size ($N_x \times N_t \times N_\mu$) & $2048 \times 1000 \times 1$ \\
    Batch size ($n_x \times n_t \times n_\mu)$ & $256 \times 256 \times 1$ \\
    Optimizer & ADAM, cosine schedule $5 \times 10^{-4} \to 10^{-6}$ \\
    & $10^3$ iterations \\
\end{tabular}

\paragraph{Experiment 3: Random Waves}~\\

\begin{tabular}{l|l}
    Network Architecture & Standard ResNet architecture\\
    Residual block features & $16 \times 32 \times 32$ \\
    Domain & $x \in [0, 1]^{1024}, t \in [0,8]$ \\
    Data size ($N_x \times N_t \times N_\mu$) & $4096 \times 512 \times 1$ \\
    Batch size ($n_x \times n_t \times n_\mu)$ & $32 \times 64 \times 1$ \\
    Optimizer & ADAM, cosine schedule $5 \times 10^{-4} \to 10^{-6}$ \\
    & $2 \times 10^4$ iterations \\
    Fluid simulation parameters & {viscosity} $= 10^{-3}$, {max\_velocity} $= 7$, {resolution} $= 256$
\end{tabular}

\paragraph{Experiment 4.1: Vlasov-Poisson (6D)}~\\

\begin{tabular}{l|l}
    Network Architecture & MLP with swish activation functions \\
    Network size (width $\times$ depth) & $128 \times 7$ \\
    Domain & $x \in [0, 4\pi] \times [0,1] \times [0,1] \times \bR^3, t \in [0, 8.75], \mu \in [0.5, 1.5]$ \\
    Data size ($N_x \times N_t \times N_\mu$) & $25000 \times 175 \times 8$ \\
    Batch size ($n_x \times n_t \times n_\mu)$ & $512 \times 128 \times 1$ \\
    Optimizer & ADAM, cosine schedule $5 \times 10^{-4} \to 10^{-6}$ \\
    & $3.3 \times 10^4$ iterations
\end{tabular}

\paragraph{Experiment 4.2: Vlasov-Poisson (2D)}~\\

\begin{tabular}{l|l}
    Network Architecture & MLP with swish activation functions \\
    Network size (width $\times$ depth) & $128 \times 7$ \\
    Domain & $x \in [0, 50] \times \bR, t \in [0, 40], \mu \in [1.2, 2.0]$ \\
    Data size ($N_x \times N_t \times N_\mu$) & $25000 \times 175 \times 8$ \\
    Batch size ($n_x \times n_t \times n_\mu)$ & $512 \times 128 \times 1$ \\
    Optimizer & ADAM, cosine schedule $5 \times 10^{-4} \to 10^{-6}$ \\
    & $5 \times 10^4$ iterations
\end{tabular}

\paragraph{Experiment 5: Raleigh-Bernard convection}~\\

\begin{tabular}{l|l}
    Network Architecture & MLP with swish activation functions \\
    Network size (width $\times$ depth) & $128 \times 7$ \\
    Domain & $x \in \bR^9, t \in [0, 20], \mu = 0$ \\
    Data size ($N_x \times N_t \times N_\mu$) & $10000 \times 2000 \times 8$ \\
    Batch size ($n_x \times n_t \times n_\mu)$ & $512 \times 256 \times 1$ \\
    Optimizer & ADAM, cosine schedule $5 \times 10^{-4} \to 10^{-6}$ \\
    & $2 \times 10^4$ iterations
\end{tabular}

\paragraph{Vlasov-Poisson bump-on-tail and two-stream instability} \label{appdx:2DVlasov}
\Cref{fig:two_fullwidth_images-v2_1} shows analogous results to the ones in Section~\ref{sec:NumExp:Vlasov} but for a bump-on-tail and two-stream Vlasov-Poisson instability. The setup follows the one described by \cite{berman2024parametric}. 

\begin{figure}
  \centering
  \includegraphics[width=0.7\textwidth]{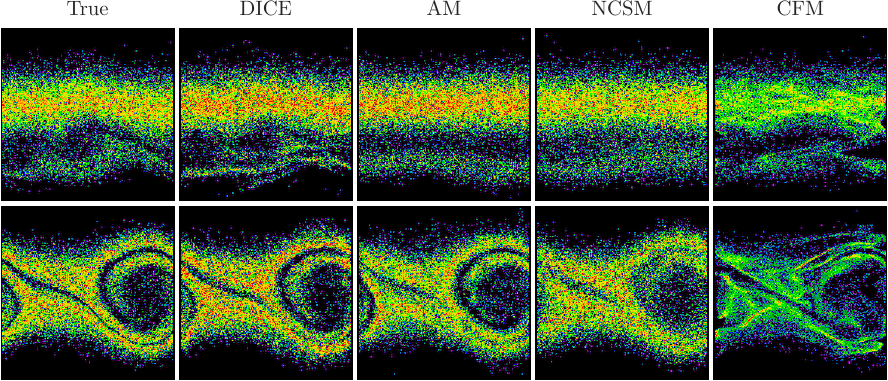}
  \vspace{1em}\includegraphics[width=0.7\textwidth]{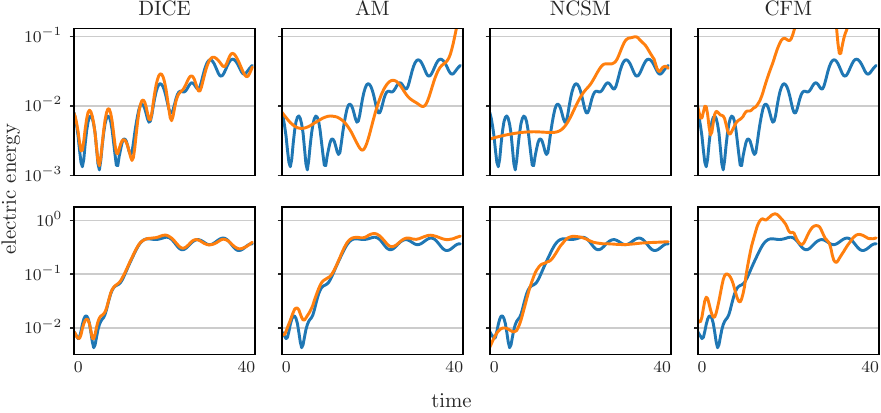}
  \caption{Vlasov-Poisson instabilities: Histograms (top), electric energy curves (bottom). Each plot: bump-on-tail instability (top) and two-stream instability (bottom).}
  \label{fig:two_fullwidth_images-v2_1}
\end{figure}

\end{document}